\documentclass{article}

\usepackage[letterpaper,margin=0.9in]{geometry}

\usepackage{microtype}
\usepackage{graphicx}
\usepackage{subcaption}
\usepackage{booktabs}
\usepackage{hyperref}

\usepackage{amsmath}
\usepackage{amssymb}
\usepackage{mathtools}
\usepackage{amsthm}
\usepackage{dsfont}
\usepackage{comment}
\usepackage{algorithm}
\usepackage{algorithmicx}
\usepackage{algpseudocode}
\usepackage{natbib}
\usepackage{bm}
\newcommand{\argmax}{\mathop{\rm arg~max}\limits}
\newcommand{\argmin}{\mathop{\rm arg~min}\limits}

\usepackage[capitalize,noabbrev]{cleveref}

\theoremstyle{plain}
\newtheorem{theorem}{Theorem}[section]
\newtheorem{proposition}[theorem]{Proposition}
\newtheorem{lemma}[theorem]{Lemma}
\newtheorem{corollary}[theorem]{Corollary}
\theoremstyle{definition}
\newtheorem{definition}[theorem]{Definition}
\newtheorem{assumption}[theorem]{Assumption}
\theoremstyle{remark}
\newtheorem{remark}[theorem]{Remark}
\theoremstyle{definition}
\newtheorem{problem}[theorem]{Problem}
\theoremstyle{remark}
\newtheorem{example}[theorem]{Example}

\title{An Efficient Algorithm for Thresholding Monte Carlo Tree Search}

\author{
    Shoma Nameki$^{1}$ \and Atsuyoshi Nakamura$^{2}$ \and Junpei Komiyama$^{3,4}$ \and Koji Tabata$^{5}$\\
    {\small $^{1}$Graduate School of Information Science and Technology, Hokkaido University}\\
    {\small $^{2}$Faculty of Information Science and Technology, Hokkaido University}\\
    {\small $^{3}$Mohamed bin Zayed University of Artificial Intelligence (MBZUAI)}\\
    {\small $^{4}$RIKEN AIP}\\
    {\small $^{5}$Research Institute for Electronic Science, Hokkaido University}\\
    {\small \texttt{\{nameki,atsu\}@ist.hokudai.ac.jp, junpei@komiyama.info, ktabata@es.hokudai.ac.jp}}
}

\date{}

\begin{document}

\twocolumn[
    \maketitle
    \begin{abstract}
        We introduce the Thresholding Monte Carlo Tree Search problem, in which, given a tree $\mathcal{T}$ and a threshold $\theta$, a player must answer whether the root node value of $\mathcal{T}$ is at least $\theta$ or not. In the given tree, `MAX' or `MIN' is labeled on each internal node, and the value of a `MAX'-labeled (`MIN'-labeled) internal node is the maximum (minimum) of its child values. The value of a leaf node is the mean reward of an unknown distribution, from which the player can sample rewards. For this problem, we develop a $\delta$-correct sequential sampling algorithm based on the Track-and-Stop strategy that has asymptotically optimal sample complexity. 
        We show that a ratio-based modification of the D-Tracking arm-pulling strategy leads to a substantial improvement in empirical sample complexity, as well as reducing the per-round computational cost from linear to logarithmic in the number of arms.
    \end{abstract}
    \vspace{1.5em}
]

\section{Introduction}

The last two decades saw the success of Monte Carlo Tree Search (MCTS) in many domains, including abstract games  \citep{DBLP:conf/cg/Coulom06,DBLP:journals/nature/SilverHMGSDSAPL16,DBLP:reference/ecgg/Yoshizoe019}, planning and exploration of Markov decision process (MDP) structure \citep{DBLP:journals/ior/ChangFHM05,pepels2014,Graeme2018,pmlr-v129-leurent20a,NEURIPS2020_0d85eb24}, solving NP-hard problems \cite{graphnn_mcts,DBLP:conf/iclr/YangAY21,DBLP:conf/aaai/KhalilVD22,ijcai2022p1}, as well as guiding through large language models \citep{DBLP:conf/nips/YaoYZS00N23,DBLP:conf/icml/WanFWM00024}.
In particular, the most widely used method in MCTS incorporates a bandit-based method \cite{DBLP:conf/ecml/KocsisS06}. Bandit-based methods framed MCTS as a solution to the exploration and exploitation tradeoff. However, in many problems, what we truly seek is the most effective action in the tree, and in this sense, MCTS can be viewed as a pure exploration problem \cite{tolpin2012}. As is well known, bandit-based methods are suboptimal for pure exploration problems \cite{DBLP:conf/alt/BubeckMS09}, as they lack the robustness in identifying the best action.

A key difficulty in understanding the theoretical property of purely exploring MCTS is that it intertwines two aspects: tree expansion (constructing a search tree adaptively) and exploration on the currently expanded tree (allocating samples to evaluate candidate actions/states).
Analyzing algorithms that dynamically expand the tree is theoretically appealing but challenging, as it typically requires nontrivial assumptions to relate statistics at an internal node to those of its descendants \citep{DBLP:conf/uai/CoquelinM07}.
As a result, much of the recent theoretical literature separates these two aspects and focuses on pure exploration of a given (expanded) tree, treating the tree structure as fixed and using this as a fundamental modeling assumption \citep{tolpin2012,teraoka2014,DBLP:journals/jair/FeldmanD14,DBLP:conf/nips/KaufmannK17}.
 
Most practical tree structures are astronomically large.
For example, the entire tree size of the Game of Go is estimated to be around $10^{172}$, making it impossible to evaluate all nodes exhaustively \cite{DBLP:reference/ecgg/Yoshizoe019}. Instead, we explicitly construct the search tree only up to a fixed depth or according to a predefined expansion criterion. From the resulting frontier nodes (leaf nodes of the constructed tree), the remaining part of the tree is evaluated via stochastic playouts, i.e., randomized rollouts that simulate the game until termination.
From this viewpoint, each leaf node corresponds not to a single deterministic value, but to an unknown reward distribution.

Seminal work by \citet{DBLP:conf/nips/KaufmannK17} considers an identification of $\varepsilon$-best action in this setting with a generalization of the $\delta$-correct best arm identification \citep{bechhofer1968sequential,DBLP:conf/wsc/GlynnJ04,pmlr-v49-garivier16a}. The work derived an algorithm based on the confidence bound strategy and proves its bound on the sample complexity (stopping time). 
However, the optimal sample complexity they derived does not admit a closed-form expression, nor does their algorithm have asymptotic optimality.

Similar to \citet{DBLP:conf/nips/KaufmannK17}, we consider the fixed-confidence setting, where the objective is to design a $\delta$-correct algorithm that minimizes the sample complexity.
We depart from their setting by introducing a threshold $\theta$ and aim to decide whether the root value exceeds $\theta$ with probability at least $1-\delta$, following the thresholding perspective studied in prior work \citep{DBLP:conf/icml/LocatelliGC16,DBLP:journals/ml/KanoHSMNS19,DBLP:conf/aistats/TabataKNK23,DBLP:conf/pakdd/TsaiTL25}.
Thanks to this threshold introduction, sample complexity is lower bounded by the value that is calculated by a simple recursive formula (Section~\ref{sec_optimalallocation}). As a result, the optimal sampling proportion of arms can be calculated in time linear to the tree depth, which enables to design an efficient algorithm that tracks the proportion.

Our algorithm design starts from the Track-and-Stop algorithm \cite{pmlr-v49-garivier16a,DBLP:conf/nips/DegenneK19} that uses the optimal sample proportion estimated by plug-in mean parameters.
However, the standard Track-and-Stop algorithm has two major shortcomings.
First, it is sensitive to the estimation error of the plug-in estimators that degrades the empirical sample complexity.
Second, it requires recomputing the optimal sampling proportions at every step, which requires $O(|\mathcal{L}(\mathcal{T})|)$ time in the number of leaves $|\mathcal{L}(\mathcal{T})|$ in the tree $\mathcal{T}$.
To address these, we introduce \textbf{the ratio-based draw}, a simple modification to the Track-and-Stop algorithm. This modification to D- Tracking reduces the empirical sample complexity by several factors (Section \ref{sec:experiment}). 
Moreover, the ratio-based draw, combined with careful construction of the data structure, enables the modified D-Tracking to run in $O(\log |\mathcal{L}(\mathcal{T})|)$ time per step if $\mathcal{T}$ is balanced, without resorting to approximate computations (Section~\ref{sec:time_complexity}).

We deal with the decision problem but the typical setting on MCTS is the identification problem, that is, the problem of the best next action identification. We also show that our slightly modified version of our algorithm is an optimal algorithm for the identification of a good next action.

\section{Problem Setting}
\label{sec_problem}

We study a binary decision problem with outcomes in $\{\text{`win'},\text{`lose'}\}$ defined on a rooted tree 
$\mathcal{T} = (S, B, s_0)$, where $S$ is the set of nodes, $B$ is the set of branches, and $s_0$ is the root node. 
Each node represents a state, and $s_0$ is the current state. A branch $(s_1,s_2)$ corresponds to an action at state $s_1$ that leads to state $s_2$.
A node is a leaf if it has no children. A node is internal if it is not a leaf.
Each internal node $s$ has label $L(s)\in \{ \text{`MAX'}, \text{`MIN'} \}$. 

Let $\mathcal{L}(\mathcal{T})$ be the set of leaf nodes in $\mathcal{T}$.
Each leaf corresponds to an arm, and we use the terms \emph{leaf} and \emph{arm} interchangeably.
Given a question tree $\mathcal{T}$ and its internal node label function $L$, at each round $t=1,2,\dots$,
the player selects one leaf node $I(t) \in \mathcal{L}(\mathcal{T})$, and observes a reward $r(t)\in X\subseteq \mathbb{R}$
drawn from an unknown distribution with mean $\mu_{I(t)}\in X$.
In this paper, we assume that the distributions belong to a one-parameter exponential family, and the rewards observed from the same leaf are i.i.d.
For each $\ell\in \mathcal{L}(\mathcal{T})$,
$\hat{\mu}_{\ell}(t)$ and $N_{\ell}(t) = |\{s \in [1,t] : I(s) = \ell\}|$ denote the sample mean of the rewards and the number of selections, respectively, for the leaf node $\ell$ by round $t$.

Throughout the paper, we denote $(\mu_\ell)_{\ell \in \mathcal{L}(\mathcal{T})}\in X^{|\mathcal{L}(\mathcal{T})|}$ by $\bm{\mu}$.
For any vector $\bm{\lambda}=(\lambda_{\ell})_{\ell\in \mathcal{L}(\mathcal{T})} \in X^{|\mathcal{L}(\mathcal{T})|}$, we denote $\bm{\lambda}_{\mathcal{L}'} = (\lambda_\ell)_{\ell \in \mathcal{L}'}$ for any subset of leaves $\mathcal{L}' \subseteq \mathcal{L}(\mathcal{T})$.
Let $\mathcal{D}(s) \subseteq \mathcal{L}(\mathcal{T})$ be the set of leaf nodes that are descendants of node $s$.
For each node $s \in S$, we define the \textit{value} $V_s(\bm{\mu})$ of node $s$ based on $\bm{\mu}_{\mathcal{D}(s)}$ as
\begin{align*}
    V_s(\bm{\mu}) =
    \begin{cases}
        \mu_s & (s\in \mathcal{L}(\mathcal{T})),\\
        \max_{c\in \mathcal{C}(s)}V_c(\bm{\mu}) & (L(s)=\text{`MAX'}),\\
        \min_{c\in \mathcal{C}(s)}V_c(\bm{\mu}) & (L(s)=\text{`MIN'}),\\
      \end{cases}
\end{align*}
where $\mathcal{C}(s)$ is the set of children of $s$.

Whether the answer for the question tree $\mathcal{T}$ is `win' or `lose' depends on a given threshold $\theta \in X$.
We call the nodes $s \in S$ with $V_s(\bm{\mu}) \geq \theta$ good (winning) nodes, and the other nodes bad (losing) nodes.
The player's task is to determine whether the root node $s_0$ is a winning or losing node.
Let $A_s(\bm{\mu}) = \{ c \in \mathcal{C}(s) \mid  V_c(\bm{\mu}) \ge \theta\}$ be the set of good child nodes for each node $s \in S$.
Here, we define $a_s(\bm{\mu})$ by
\begin{align*}
    a_s(\bm{\mu})=
    \begin{cases}
        \text{`win'} & (L(s)=\text{`MAX'},A_{s}(\bm{\mu})\neq \emptyset),\\
        \text{`lose'} & (L(s)=\text{`MAX'},A_{s}(\bm{\mu})= \emptyset),\\
        \text{`win'} & (L(s)=\text{`MIN'},A_{s}(\bm{\mu})= \mathcal{C}(s)),\\
        \text{`lose'} & (L(s)=\text{`MIN'},A_{s}(\bm{\mu})\neq \mathcal{C}(s)),
    \end{cases}
\end{align*}
and $a_s(\bm{\mu}) = \text{`win'}$ is equivalent to $V_s(\bm{\mu}) \geq \theta$.

The function values $V_s(\bm{\mu}), A_s(\bm{\mu})$ and $a_s(\bm{\mu})$ depend solely on $\bm{\mu}_{\mathcal{D}(s)}$.
Thus, we abuse the notations and use the function values even for $\bm{\lambda}=(\lambda_{\ell})_{\ell\in \mathcal{D}(s)}\in X^{|\mathcal{D}(s)|}$ as $V_s(\bm{\lambda}), A_s(\bm{\lambda})$, and $a_s(\bm{\lambda})$.

At the end of any round, the player can choose to stop and output the estimated answer `win' or `lose.' 
The player should follow an algorithm to decide which arm to select as well as when to stop.
Since the rewards are generated stochastically, it is impossible to output a correct answer with $100$\% accuracy.
In the fixed confidence setting, the objective is relaxed to $\delta$-correctness, defined as follows.
\begin{definition}[$\delta$-correctness]
Given a rooted tree $\mathcal{T}=(S,B,s_0)$, its internal node label function $L$, and $\delta>0$,
an algorithm is said to be $\delta$-correct if the algorithm outputs the correct answer $a_{s_0}(\bm{\mu})$ with probability at least $1-\delta$ by sampling rewards from the leaves for any unknown mean vector $\bm{\mu}$ of the leaf reward distributions.
\end{definition}

Let $\tau_{\delta}$ denote the number of reward samples (stopping time) for a given confidence parameter $\delta>0$.
The problem that we consider in this paper is formally stated as follows.
\begin{problem}\label{prob:def}
Design a $\delta$-correct algorithm with minimum $\mathbb{E}[\tau_{\delta}]$ for a given rooted tree $T=(S,B,s_0)$, its internal node label function $L$, and $\delta>0$.
\end{problem}

\section{Optimal Allocation}
\label{sec_optimalallocation}

In this section, we describe the optimal sampling proportions that achieve the asymptotically optimal stopping time with fixed confidence.

We define the alternative set, i.e., the set of mean vectors of leaf rewards that induces a different answer than $\bm{\mu}$ as
\begin{align*}
    \mathrm{Alt}(\bm{\mu}) = \{\bm{\lambda}=(\lambda_{\ell})_{\ell\in \mathcal{L}(\mathcal{T})} \in X^{|\mathcal{L}(\mathcal{T})|} \mid a_{s_0}(\bm{\lambda}) \neq a_{s_0}(\bm{\mu}) \}.
\end{align*}
Let $d(x,y)$ be the Kullback-Leibler divergence between two distributions (in the same exponential family) with mean $x$ and $y$.
Following Theorem~1 of \citet{pmlr-v49-garivier16a}, for any $\delta$-correct algorithm, we can construct the asymptotic lower bound of the expected stopping time $\mathbb{E}[\tau_\delta]$.
\begin{theorem}[\citet{pmlr-v49-garivier16a,DBLP:conf/nips/DegenneK19}]
The following inequality holds:
\begin{align}
    \liminf_{\delta \to 0} \frac{\mathbb{E}_{\bm{\mu}}[\tau_\delta]}{\log(1 / \delta)}
    \geq \left( \max_{\bm{w} \in \Delta} \inf_{\bm{\lambda} \in \mathrm{Alt}(\bm{\mu})} \sum_{\ell \in \mathcal{L}(\mathcal{T})} w_{\ell} d(\mu_{\ell},\lambda_{\ell}) \right)^{-1}, \label{ineq:lowerbound}
\end{align}
where $\Delta$ is the simplex over leaves
\begin{align*}
    \Delta = \left\{ (w_\ell)_{\ell \in \mathcal{L}(\mathcal{T})} \mid \sum_\ell w_\ell = 1, w_\ell \geq 0\ (\ell\in \mathcal{L}(\mathcal{T})) \right\}.
\end{align*}
\end{theorem}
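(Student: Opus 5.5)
The plan is the standard change-of-measure argument of \citet{pmlr-v49-garivier16a}. Fix a $\delta$-correct algorithm and a mean vector $\bm{\mu}$; we may assume $\mathbb{E}_{\bm{\mu}}[\tau_\delta]<\infty$, since otherwise there is nothing to prove. Let $\mathcal{E}$ be the event that the algorithm stops and outputs $a_{s_0}(\bm{\mu})$. This event is $\mathcal{F}_{\tau_\delta}$-measurable. Take any $\bm{\lambda}\in\mathrm{Alt}(\bm{\mu})$. By $\delta$-correctness under both instances, $\mathbb{P}_{\bm{\mu}}(\mathcal{E})\ge 1-\delta$, while $\mathbb{P}_{\bm{\lambda}}(\mathcal{E})\le\delta$, because $a_{s_0}(\bm{\lambda})\neq a_{s_0}(\bm{\mu})$ and the answer is binary.

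\textbf{Step 1 (transportation inequality).} I would invoke the transportation lemma (Lemma~1 of Kaufmann, Capp\'e and Garivier, 2016). It follows from Wald's identity applied to the log-likelihood ratio of the observations up to $\tau_\delta$, together with the data-processing inequality for KL divergence. Combined with the two probability bounds above, it gives
\begin{align*}
\sum_{\ell\in\mathcal{L}(\mathcal{T})}\mathbb{E}_{\bm{\mu}}[N_\ell(\tau_\delta)]\,d(\mu_\ell,\lambda_\ell)\ \ge\ \mathrm{kl}\bigl(\mathbb{P}_{\bm{\mu}}(\mathcal{E}),\mathbb{P}_{\bm{\lambda}}(\mathcal{E})\bigr)\ \ge\ \mathrm{kl}(1-\delta,\delta)\ \ge\ \log\frac{1}{2.4\delta}.
\end{align*}
Here $\mathrm{kl}$ is the Bernoulli KL divergence. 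The middle inequality uses the monotonicity of $\mathrm{kl}$ for $\delta<1/2$. Since the rewards from each leaf are i.i.d.\ from a one-parameter exponential family, every per-sample KL term is exactly $d(\mu_\ell,\lambda_\ell)$.

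\textbf{Step 2 (optimize over alternatives, then over weights).} Define $w_\ell=\mathbb{E}_{\bm{\mu}}[N_\ell(\tau_\delta)]/\mathbb{E}_{\bm{\mu}}[\tau_\delta]$. Because $\sum_\ell N_\ell(\tau_\delta)=\tau_\delta$, this vector lies in $\Delta$. Dividing the inequality of Step 1 by $\mathbb{E}_{\bm{\mu}}[\tau_\delta]$ gives
\begin{align*}
\log\frac{1}{2.4\delta}\ \le\ \mathbb{E}_{\bm{\mu}}[\tau_\delta]\sum_{\ell} w_\ell\, d(\mu_\ell,\lambda_\ell).
\end{align*}
This holds for every $\bm{\lambda}\in\mathrm{Alt}(\bm{\mu})$. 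Taking the infimum over $\bm{\lambda}$, and then bounding the result by the maximum over $\bm{w}\in\Delta$, yields
\begin{align*}
\log\frac{1}{2.4\delta}\ \le\ \mathbb{E}_{\bm{\mu}}[\tau_\delta]\,\max_{\bm{w}\in\Delta}\inf_{\bm{\lambda}\in\mathrm{Alt}(\bm{\mu})}\sum_{\ell} w_\ell\, d(\mu_\ell,\lambda_\ell).
\end{align*}
The maximum is attained: the function $\bm{w}\mapsto\inf_{\bm{\lambda}}\sum_\ell w_\ell d(\mu_\ell,\lambda_\ell)$ is an infimum of linear functions, hence concave and upper semicontinuous on the compact set $\Delta$.

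\textbf{Step 3 (limit).} Divide by $\log(1/\delta)$ and take $\liminf_{\delta\to0}$. Since $\log(1/(2.4\delta))/\log(1/\delta)\to1$, this gives \eqref{ineq:lowerbound}. If the max-inf value is $0$, the right-hand side is $+\infty$, and the same inequality shows that $\mathbb{E}_{\bm{\mu}}[\tau_\delta]=\infty$, so the statement still holds.

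The only delicate point is Step 1: the transportation lemma must apply to an adaptive sampling rule with a random stopping time. This needs $\mathbb{E}_{\bm{\mu}}[\tau_\delta]<\infty$ for Wald's identity, and it needs $\mathcal{E}$ to be measurable at the stopping time. Both conditions hold in our setting, so the lemma can be cited directly, and the tree structure plays no role beyond the definition of $\mathrm{Alt}(\bm{\mu})$.
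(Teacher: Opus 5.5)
Your proposal is correct. It is the change-of-measure argument behind Theorem~1 of \citet{pmlr-v49-garivier16a}: the transportation inequality, the bound $\mathrm{kl}(1-\delta,\delta)\ge\log\frac{1}{2.4\delta}$, the normalized expected pull counts taken as a point of $\Delta$, then an infimum over $\bm{\lambda}$ and a maximum over $\bm{w}$. The paper does not reprove this; it invokes that theorem by citation, and as you note the tree structure enters only through $\mathrm{Alt}(\bm{\mu})$, so the cited result applies verbatim.
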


Ineq. (\ref{ineq:lowerbound}) involves an optimization problem over $\bm{w}\in \Delta$, and the maximizer
\begin{align}
    \bm{w}(\bm{\mu}) &= (w_{\ell}(\bm{\mu}))_{\ell\in \mathcal{L}(\mathcal{T})} \nonumber\\
    &= \argmax_{\bm{w} \in \Delta}\displaystyle\inf_{\bm{\lambda} \in \mathrm{Alt}(\bm{\mu})} \sum_{\ell \in \mathcal{L}(\mathcal{T})} w_{\ell} d(\mu_{\ell},\lambda_{\ell}) \label{optimalweight}
\end{align}
is the asymptotically optimal sampling proportion over leaves.

To solve the optimization problem of (\ref{optimalweight}) recursively,
we consider a subproblem for the subtree rooted at node $s$.
Define the restrictions of $\mathrm{Alt}(\bm{\mu})$ and $\Delta$ to the descendant leaves $\mathcal{D}(s)$, denoted by $\mathrm{Alt}_{s}(\bm{\mu})$ and $\Delta_{s}$, respectively, as
\begin{align*}
  \mathrm{Alt}_{s}(\bm{\mu}) = \{\bm{\lambda}=(\lambda_{\ell})_{\ell\in\mathcal{D}(s)}\in X^{|\mathcal{D}(s)|}\mid a_s(\bm{\lambda})\neq a_{s_0}(\bm{\mu})\}
\end{align*}
and
\begin{align*}  
    \Delta_{s} = \left\{ (w_{\ell})_{\ell \in \mathcal{D}(s)} \mid \sum_{\ell\in \mathcal{D}(s)} w_{\ell}=1, w_{\ell} \ge 0\  (\ell \in \mathcal{D}(s))\right\}.
\end{align*}
We consider the following optimization problem for the subtree rooted at $s$.
\begin{align}
    \bm{w}^s(\bm{\mu}) =
    \argmax_{\bm{w} \in \Delta_{s}}\displaystyle\inf_{\bm{\lambda} \in \mathrm{Alt}_{s}(\bm{\mu})} \sum_{\ell \in \mathcal{D}(s)} w_{\ell} d(\mu_{\ell},\lambda_{\ell}). \label{optimalweight_sub}
\end{align}
Let $d_s(\bm{\mu})$ be the optimal value of the optimization problem of (\ref{optimalweight_sub}), that is,
\begin{align}
    d_s(\bm{\mu}) =
    \max_{\bm{w} \in \Delta_{s}}\displaystyle\inf_{\bm{\lambda} \in \mathrm{Alt}_{s}(\bm{\mu})} \sum_{\ell \in \mathcal{D}(s)} w_{\ell} d(\mu_{\ell},\lambda_{\ell}). \label{optimalvalue_sub}
\end{align}

\captionsetup[algorithm]{name=Recursive Formula}
\begin{algorithm}[tb]
\caption{Calculation of $\bm{w}^s(\bm{\mu})$ (1/2)}\label{rf:optimal_weight_win}
Case with $a_{s_0}(\bm{\mu})=$`win'
\begin{align*}
    &d_s(\bm{\mu}) \\
    =&\begin{cases}
        d(\mu_s,\theta) & (s\in \mathcal{L}(T), \mu_s\ge \theta)\\
      \displaystyle\max_{c\in \mathcal{C}(s)}d_c(\bm{\mu}) &
       (L(s)=\text{`MAX'})\\
       \displaystyle\frac{1}{\displaystyle\sum_{c\in \mathcal{C}(s)}1/d_c(\bm{\mu})} & \left( \begin{aligned}
            &L(s)=\text{`MIN'},\\& d_c(\bm{\mu})>0 \text{ for all } c\in \mathcal{C}(s)
        \end{aligned} \right)\\
         0 & (\text{otherwise})
    \end{cases}\\
    &w^s_{\ell}(\bm{\mu}) \\
    =&\begin{cases}
         1 & (s\in \mathcal{L}(T))\\
          w^{c^*(s)}_\ell(\bm{\mu}) &
          \left( \begin{aligned}
            &L(s)=\text{`MAX'},\\& d_s(\bm{\mu}) > 0,\ \ell \in \mathcal{D}(c^*(s))
        \end{aligned}\right)\\
        0 & \left( \begin{aligned}
            &L(s)=\text{`MAX'},\\ &d_s(\bm{\mu}) > 0,\ 
            \ell \in \mathcal{D}(c),\\& c \in \mathcal{C}(s) \setminus \{ c^*(s) \}
        \end{aligned}\right)\\
\displaystyle\frac{w^c_\ell(\bm{\mu}) / d_c(\bm{\mu})}{\displaystyle\sum_{c' \in \mathcal{C}(s)} 1 / d_{c'}(\bm{\mu})}  &
        \left( \begin{aligned}
            &L(s)=\text{`MIN'},\\ &d_s(\bm{\mu}) > 0, \
            \ell \in \mathcal{D}(c),\\ & c \in \mathcal{C}(s)
        \end{aligned}\right)\\
        \begin{aligned}
        \text{any }& w_{\ell}^s \text{ with}\\&\bm{w}^s\in \Delta_{s}
        \end{aligned}&
        (\text{otherwise}) 
    \end{cases}
\end{align*}
\end{algorithm}

Let $\displaystyle c^*(s) \in \argmax_{c \in \mathcal{C}(s)} d_{c}(\bm{\mu})$.
When $a_{s_0}(\bm{\mu})=$`win', the optimization problem of (\ref{optimalweight_sub}) for the subtree rooted at $s$ can be solved recursively using Recursive Formula~\ref{rf:optimal_weight_win}.
When $a_{s_0}(\bm{\mu})=$`lose', it can be solved by a symmetric formula obtained by swapping the `MAX' and `MIN' in Recursive Formula~\ref{rf:optimal_weight_win}, which is given in Recursive Formula~\ref{rf:optimal_weight_lose} in Appendix~\ref{append:symmetric_omitted_rf}.

\begin{theorem}\label{thm:optimal_weight}
  The vector $\bm{w}^{s}(\bm{\mu})$ and the scalar $d_{s}(\bm{\mu})$ calculated by Recursive Formula~\ref{rf:optimal_weight_win} and \ref{rf:optimal_weight_lose}  are the solutions of the optimization problem defined by Eqs.~(\ref{optimalweight_sub}) and (\ref{optimalvalue_sub}).
\end{theorem}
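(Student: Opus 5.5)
We prove the statement by induction on the height of the subtree rooted at $s$. We treat only the case $a_{s_0}(\bm{\mu})=$`win'; the `lose' case follows by the symmetric argument with `MAX'/`MIN' swapped and the inequalities reversed. In this case $\mathrm{Alt}_s(\bm{\mu})=\{\bm{\lambda}: V_s(\bm{\lambda})<\theta\}$.

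For any $\bm{w}\in\Delta_s$ write
\[
F_s(\bm{w})=\inf_{\bm{\lambda}\in \mathrm{Alt}_s(\bm{\mu})}\sum_{\ell\in\mathcal{D}(s)} w_\ell d(\mu_\ell,\lambda_\ell).
\]
The key structural fact is how $\mathrm{Alt}_s$ decomposes over children. If $L(s)=$`MAX', then $V_s(\bm{\lambda})<\theta$ if and only if $V_c(\bm{\lambda})<\theta$ for all $c$. Hence $\mathrm{Alt}_s$ is the product $\prod_c \mathrm{Alt}_c$. If $L(s)=$`MIN', then $V_s(\bm{\lambda})<\theta$ if and only if $V_c(\bm{\lambda})<\theta$ for some $c$. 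Hence $\mathrm{Alt}_s$ is the union over $c$ of the sets $\mathrm{Alt}_c\times X^{\mathcal{D}(s)\setminus\mathcal{D}(c)}$.

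Write $W_c=\sum_{\ell\in\mathcal{D}(c)}w_\ell$. Since $d(x,y)$ is minimized at $0$ when $y=x$, and using the induction hypothesis $\max_{\bm{v}\in\Delta_c}F_c(\bm{v})=d_c$ applied to the normalized restriction $\bm{w}_{\mathcal{D}(c)}/W_c$, we get:
\begin{itemize}
\item For `MAX': $F_s(\bm{w})=\sum_c W_c F_c(\bm{w}_{\mathcal{D}(c)}/W_c)$.
\item For `MIN': $F_s(\bm{w})=\min_c W_c F_c(\bm{w}_{\mathcal{D}(c)}/W_c)$.
\end{itemize}
Terms with $W_c=0$ contribute $0$. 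This uses positive homogeneity of the objective in $\bm{w}$.

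We then maximize over the split $(W_c)_c$ and the inner normalized weights. The inner maximization decouples: each child's normalized weight should be $\bm{w}^c$, giving value $d_c$. It then remains to solve two scalar problems.
\begin{itemize}
\item For `MAX', maximize $\sum_c W_c d_c$ over the simplex. The maximum is $\max_c d_c$, attained by putting all mass on $c^*(s)$. This gives the formula and the zero weights elsewhere.
\item For `MIN', maximize $\min_c W_c d_c$. If some $d_c=0$, the value is $0$ and any $\bm{w}$ is optimal, which matches the ``otherwise'' branch. If all $d_c>0$, the optimum equalizes $W_c d_c=\kappa$. Summing $W_c=\kappa/d_c$ over $c$ gives $\kappa=1/\sum_c 1/d_c$ and $W_c=(1/d_c)/\sum_{c'}1/d_{c'}$. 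Combined with the inner weights, this is the stated formula.
\end{itemize}
Optimality of the equalizing choice is the standard argument: if some $W_c d_c$ exceeded the common value, another child's term would be smaller.

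Base case: let $s$ be a leaf. If $\mu_s\ge\theta$, then $\mathrm{Alt}_s=\{\lambda<\theta\}$ and the infimum of $d(\mu_s,\lambda)$ over this set is $d(\mu_s,\theta)$, by monotonicity of $d(\mu,\cdot)$ on each side of $\mu$ in an exponential family. If $\mu_s<\theta$, then $\mu_s$ itself lies in $\mathrm{Alt}_s$, so the value is $0$, matching ``otherwise''. The weight $w^s_s=1$ is forced.

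The main obstacle is making the decomposition step rigorous:
\begin{itemize}
\item Justify that the infimum over a product set splits into a sum of infima. This needs the fact that unconstrained coordinates can be set to $\lambda_\ell=\mu_\ell$ at zero cost.
\item Handle the subtleties at the boundary: the infimum need not be attained because $\mathrm{Alt}$ is defined by a strict inequality, so we rely on continuity of $d$.
\item Handle the degenerate cases where $d_c=0$ for some children, or where $d_s=0$ under a `MAX' node. In these cases the maximizer is not unique, which is why the formula allows arbitrary weights there.
\end{itemize}
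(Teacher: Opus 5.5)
Your proposal is correct and follows essentially the same route as the paper's proof. Both argue by induction on height, write the inner infimum at a node as $\sum_c W_c(\cdot)$ (product alternative set) or $\min_c W_c(\cdot)$ (union alternative set) of the children's normalized subproblems, and then solve the scalar problems $\max_{\bm W}\sum_c W_c d_c$ and $\max_{\bm W}\min_c W_c d_c$ over the simplex. The one difference is in the case split: the paper first handles the degenerate nodes by noting $\bm{\mu}_{\mathcal{D}(s)}\in\mathrm{Alt}_s(\bm{\mu})$ and splits on $A_s(\bm{\mu})$, whereas you branch directly on whether some $d_c(\bm{\mu})=0$. Your split matches the Recursive Formula's case conditions more literally, and it also covers children with $V_c(\bm{\mu})=\theta$, where $d_c(\bm{\mu})=0$ even though $c\in A_s(\bm{\mu})$.
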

The proof of this Theorem is provided in Appendix~\ref{append:optimal_weight}.

Therefore, Ineq.~(\ref{ineq:lowerbound}) is rewritten as
\begin{align*}
    \liminf_{\delta \to 0} \frac{ \mathbb{E}[\tau_\delta] }{\log(1 / \delta)} \geq
    \frac{1}{d_{s_0}(\bm{\mu})}.
\end{align*}

\section{Track-and-Stop Algorithms}
\label{sec_algorithm}

This section introduces RD-Tracking-TMCTS, a Track-and-Stop algorithm for the thresholding MCTS problem. 
A Track-and-Stop algorithm \cite{pmlr-v49-garivier16a,DBLP:conf/nips/DegenneK19} draws arms proportional to the optimal weights (Eq.~\eqref{optimalweight}) where the true mean vector is replaced by its empirical analogue. 
The design of Track-and-Stop algorithms is done by determining a sampling rule, a stopping rule, and a recommendation rule. The recommendation rule we adopt here is very simple. We recommend $a_{s_0}(\hat{\bm{\mu}}(t))$ as the answer for the given $\mathcal{T}$.
We next describe our design of stopping and sampling rules. 

\subsection{Stopping Rule}

\begin{algorithm}[tb]
\caption{GLR calculation (1/2)}\label{rf:GLR_win}
Case with $a_{s_0}(\hat{\bm{\mu}}(t))=$`win'
\begin{align*}
    Z_s(t)
    =&\begin{cases}
        N_s(t)d(\hat{\mu}_s(t),\theta) & (s\in \mathcal{L}(T), \hat{\mu}_s(t)\ge \theta)\\
      \displaystyle\sum_{c\in \mathcal{C}(s)}Z_c(t) &
       (L(s)=\text{`MAX'})\\
       \displaystyle\min_{c\in \mathcal{C}(s)}Z_c(t)  & (L(s)=\text{`MIN'})\\
         0 & (\text{otherwise})
    \end{cases}
\end{align*}
\end{algorithm}

\captionsetup[algorithm]{name=Algorithm}
\addtocounter{algorithm}{-2}

To guarantee $\delta$-correctness,
we adopt the GLR (Generalized Likelihood Ratio) stopping rule \citep{kaufmann2021}.

Let $R(t) = \{ r(u) \}_{u = 1, \ldots, t}$ be the set of observed rewards until time $t$, and $R_s(t) = \{ r(u) \in R(t) \mid I(u) \in \mathcal{D}(s) \}$
be the set of rewards observed in the descendants of node $s$.

To evaluate the reliability of $a_s(\hat{\bm{\mu}}_{\mathcal{D}(s)}(t))$, we introduce the Generalized Likelihood Ratio (GLR) statistic $Z_s(t)$:
\begin{align}  
  Z_s(t)  
  =&
  \log
  \frac{\sup_{\bm{\lambda} \in X^{|\mathcal{D}(s)|}} \Lambda(\bm{\lambda} \mid R_s(t))}
       {\sup_{\bm{\lambda} \in \mathrm{Alt}_{s}(\hat{\bm{\mu}}(t))} \Lambda(\bm{\lambda} \mid R_s(t))}\nonumber\\
  =& \inf_{\bm{\lambda} \in \mathrm{Alt}_{s}(\hat{\bm{\mu}}(t))} \sum_{\ell\in \mathcal{D}(s)}N_{\ell}(t)d(\hat{\mu}_{\ell}(t),\lambda_{\ell})\label{eq:GLR}
\end{align}
where $\Lambda(\bm{\lambda} \mid R_s(t))$ is the likelihood of $\bm{\lambda}$ given the set of observed rewards $R_s(t)$.

When $a_{s_0}(\hat{\bm{\mu}}(t))=$`win', GLR statistic $Z_s(t)$ for any node $s$ can be computed recursively using Recursive Formula~\ref{rf:GLR_win}.
When $a_{s_0}(\hat{\bm{\mu}}(t))=$`lose', it can be computed by the symmetric counterpart obtained by exchanging `MAX' and `MIN' labels in Recursive Formula~\ref{rf:GLR_win}, which is given in Recursive Formula~\ref{rf:GLR_lose} in Appendix~\ref{append:symmetric_omitted_rf}.
The following Theorem~\ref{thm:GLR} holds.
\begin{theorem}\label{thm:GLR}
    For any node $s \in S$, the GLR statistic $Z_s(t)$ defined by Eq.~(\ref{eq:GLR})
    can be computed using Recursive Formula~\ref{rf:GLR_win} and \ref{rf:GLR_lose}. 
\end{theorem}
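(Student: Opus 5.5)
We prove the claim by structural induction on the subtree rooted at $s$, from the leaves upward. We treat only the case $a_{s_0}(\hat{\bm{\mu}}(t))=$`win'; the `lose' case follows symmetrically by swapping `MAX'/`MIN' and the inequalities around $\theta$. Write $\hat{\bm{\mu}}=\hat{\bm{\mu}}(t)$ and $N_\ell=N_\ell(t)$. Then $\mathrm{Alt}_s(\hat{\bm{\mu}})=\{\bm{\lambda}\in X^{|\mathcal{D}(s)|} : V_s(\bm{\lambda})<\theta\}$, and the target is
\[
Z_s(t)=\inf_{\bm{\lambda}:V_s(\bm{\lambda})<\theta}\sum_{\ell\in\mathcal{D}(s)}N_\ell\, d(\hat{\mu}_\ell,\lambda_\ell).
\]
The two facts used throughout are:
\begin{itemize}
\item $d(x,\cdot)$ is nonnegative, vanishes at $x$, and is monotone on each side of $x$;
\item $\mathcal{D}(c)$ for $c\in\mathcal{C}(s)$ partition $\mathcal{D}(s)$, so the objective splits as a sum of independent blocks, one per child.
\end{itemize}

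\textbf{Leaf case.} For a leaf $s$ the problem is $\inf_{\lambda<\theta}N_s d(\hat{\mu}_s,\lambda)$. If $\hat{\mu}_s\ge\theta$, monotonicity of $d(\hat{\mu}_s,\cdot)$ on $(-\infty,\hat\mu_s]$ gives the infimum $N_s d(\hat{\mu}_s,\theta)$. It is attained only in the limit $\lambda\uparrow\theta$, using continuity of $d$; this is why the formula is stated as an infimum. If $\hat{\mu}_s<\theta$, then $\lambda=\hat{\mu}_s$ is feasible and gives $0$.

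\textbf{`MAX' node.} Here $V_s(\bm{\lambda})<\theta$ iff $V_c(\bm{\lambda}_{\mathcal{D}(c)})<\theta$ for every child $c$. The constraint set is therefore a product over children, and the separable objective gives
\[
\inf=\sum_c \inf_{V_c<\theta}\sum_{\ell\in\mathcal{D}(c)}N_\ell d(\hat{\mu}_\ell,\lambda_\ell)=\sum_c Z_c(t),
\]
with the last equality by the induction hypothesis. The hypothesis applies to each child because its alternative set (relative to the root answer `win') is exactly $\{V_c<\theta\}$.

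\textbf{`MIN' node.} Here $V_s(\bm{\lambda})<\theta$ iff some child $c$ has $V_c<\theta$, so the feasible set is the union $\bigcup_c\{V_c<\theta\}$. The infimum over a union is the minimum of the infima. For a fixed $c$, take $\lambda_\ell=\hat{\mu}_\ell$ on the other children's leaves at zero cost; this leaves those children unconstrained, so the infimum over the $c$-piece equals $Z_c(t)$. Hence the value is $\min_c Z_c(t)$.

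\textbf{The ``otherwise'' branch.} This branch covers leaves with $\hat{\mu}_s<\theta$ and is handled in the leaf case. For internal nodes, the `MAX'/`MIN' formulas already automatically return $0$ when the node is bad under $\hat{\bm{\mu}}$: the MIN case has some child with $Z_c=0$. For MAX, all children are bad, so inductively every $Z_c=0$. Consistency could alternatively be checked via $\bm{\lambda}=\hat{\bm{\mu}}$ being feasible.

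\textbf{Main obstacle.} The main delicate point is the leaf case with strict versus non-strict inequality. The alternative is defined by $V<\theta$, an open set, so the value is an infimum, not a minimum. The argument needs continuity of $d(\hat\mu,\cdot)$ at $\theta$ within $X$, and we must handle $\hat{\mu}_s=\theta$, which gives $0$ consistently. For the `lose' case the alternative is $\{V\ge\theta\}$, closed, and the leaf formula $N_s d(\hat\mu_s,\theta)$ for $\hat\mu_s<\theta$ is attained directly. The rest is the product/union decomposition above.
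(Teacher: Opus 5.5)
Your proof is correct and follows essentially the same route as the paper: induction on the height of $s$. You compute the leaf case directly, and in the case $a_{s_0}(\hat{\bm{\mu}}(t))=$`win' you decompose the alternative as a product over children at a `MAX' node, giving $\sum_{c} Z_c(t)$, and as a union over children at a `MIN' node, giving $\min_{c} Z_c(t)$, with the symmetric argument for `lose'. You are more explicit than the paper about the leaf infimum (via monotonicity and continuity of $d$) and about why the unconstrained siblings contribute zero in the union case, but no new idea is involved.
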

The proof of this theorem is provided in Appendix~\ref{append:GLR}.

The GLR stopping rule stops the algorithm at the first time $t$ such that $Z_{s_0}(t) \geq \beta(t,\delta)$, where $(\beta(t,\delta))_{t\in \mathbb{N}}$ is some sequence of thresholds.
The stopping time $\tau_\delta$ induced by this stopping rule can be represented as
\begin{equation}
  \tau_\delta = \inf \left\{ t \in \mathbb{N} : Z_{s_0}(t) \ge \beta(t,\delta) \right\}.\label{eq:stopping_rule}
\end{equation}
As a corollary of Proposition~15 in \citet{kaufmann2021}, $\delta$-correctness of any algorithm using this stopping rule is guaranteed if $\beta(t, \delta)$ is set as
\begin{align}
  &\beta(t,\delta)\nonumber\\ 
  =& 3\sum_{\ell\in \mathcal{L}(\mathcal{T})}\log(1 + \log N_{\ell}(t)) + |\mathcal{L}(\mathcal{T})|\mathcal{C}_{\text{exp}}\left(\frac{\log (1 / \delta)}{|\mathcal{L}(\mathcal{T})|}\right)\label{eq:betadef}
  \end{align}
where
\begin{align}
\mathcal{C}_{\text{exp}}(x)=&2\tilde{h}\left(\frac{h^{-1}(1+x)+\ln(\pi^2/3)}{2}\right),\label{Cexp}\\
\tilde{h}=&\begin{cases}
    e^{1/h^{-1}(x)}h^{-1}(x) & \text{if } x\ge h(1/\ln (3/2))\\
    \frac{3}{2}\left(x-\ln\ln\frac{3}{2}\right) & \text{otherwise}
\end{cases},\nonumber
\end{align}
and $h^{-1}(x)$ is the inverse function of $h(x)=x-\ln x$.
    
\begin{corollary}[Corollary of Proposition 15 in \cite{kaufmann2021}]\label{cor:kaufmann_prop15}
Any algorithm for Problem~\ref{prob:def} using the GLR stopping rule with the sequence of thresholds $(\beta(t,\delta))_{t\in \mathbb{N}}$ defined by Eq.~(\ref{eq:betadef})
ensures
\[
\mathbb{P}(\tau_{\delta}<\infty,a_{s_0}(\hat{\bm{\mu}}(\tau_{\delta}))\neq a_{s_0}(\bm{\mu}))\le \delta.
\]
\end{corollary}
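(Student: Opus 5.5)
The plan is to reduce the event of a wrong answer to a deviation event for the empirical KL statistics, and then invoke the concentration result of \citet{kaufmann2021} (their Proposition~15 / Theorem~7, a time-uniform bound on sums of per-arm KL deviations over exponential families).

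First, fix the true mean vector $\bm{\mu}$. On the event $\{\tau_\delta<\infty,\ a_{s_0}(\hat{\bm{\mu}}(\tau_\delta))\neq a_{s_0}(\bm{\mu})\}$, the answer for $\bm{\mu}$ differs from the one for $\hat{\bm{\mu}}(\tau_\delta)$. By the definition of $\mathrm{Alt}_{s_0}$ with $\hat{\bm{\mu}}(t)$ in place of $\bm{\mu}$, this means $\bm{\mu}\in\mathrm{Alt}_{s_0}(\hat{\bm{\mu}}(\tau_\delta))$. Because $Z_{s_0}(t)$ in Eq.~(\ref{eq:GLR}) is an infimum over that set, at $t=\tau_\delta$ we obtain
\begin{align*}
\beta(\tau_\delta,\delta)\le Z_{s_0}(\tau_\delta)\le \sum_{\ell\in\mathcal{L}(\mathcal{T})} N_\ell(\tau_\delta)\, d(\hat{\mu}_\ell(\tau_\delta),\mu_\ell).
\end{align*}
Hence the error event is contained in
\[
\Bigl\{\exists t\in\mathbb{N}:\ \sum_{\ell} N_\ell(t)\, d(\hat{\mu}_\ell(t),\mu_\ell)\ge \beta(t,\delta)\Bigr\}.
\]
Theorem~\ref{thm:GLR} is not needed for this step; it only makes the statistic computable.

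Second, we bound the probability of this deviation event using Proposition~15 of \citet{kaufmann2021}. With $K=|\mathcal{L}(\mathcal{T})|$ arms from a one-parameter exponential family and threshold
\[
\beta(t,\delta)=3\sum_\ell \log(1+\log N_\ell(t))+K\,\mathcal{C}_{\exp}\bigl(\log(1/\delta)/K\bigr),
\]
that proposition gives exactly $\mathbb{P}(\exists t: \sum_\ell N_\ell(t)\, d(\hat{\mu}_\ell(t),\mu_\ell)\ge\beta(t,\delta))\le\delta$. Our Eq.~(\ref{eq:betadef}) and Eq.~(\ref{Cexp}) match their constants. The bound holds for any sampling rule, since it is uniform in time and in the adaptive counts $N_\ell(t)$.

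The only real obstacle is bookkeeping rather than mathematics. We must check that the problem fits their framework: i.i.d.\ rewards per arm, the exponential family assumption, and the convention that $\hat{\mu}_\ell(t)$ is defined when $N_\ell(t)=0$, where the term vanishes or is treated via $\log(1+\log N_\ell)$ conventions. We must also confirm that their statement is phrased for the full-vector deviation sum used here rather than a subset. If their proposition is stated for a subset $\mathcal{S}$ of arms, we take $\mathcal{S}=\mathcal{L}(\mathcal{T})$; the inclusion above then yields the corollary directly.
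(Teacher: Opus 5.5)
Your proof is correct and follows the paper's route. The paper gives no argument beyond citing Proposition~15 of \citet{kaufmann2021}, and you have unpacked it: on the error event $\bm{\mu}\in\mathrm{Alt}_{s_0}(\hat{\bm{\mu}}(\tau_\delta))$, so $\beta(\tau_\delta,\delta)\le Z_{s_0}(\tau_\delta)\le\sum_{\ell}N_\ell(\tau_\delta)\,d(\hat{\mu}_\ell(\tau_\delta),\mu_\ell)$, and then their time-uniform deviation inequality for KL sums over all arms $\mathcal{L}(\mathcal{T})$, with the threshold of Eq.~(\ref{eq:betadef}), bounds this by $\delta$ (the bookkeeping point you flag, that $\beta$ needs $N_\ell(t)\ge 1$, is guaranteed by the initial draw of every leaf).
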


\subsection{Sampling Rules}\label{sec:sampling_rule}

For the Best Arm Identification problem, \citet{pmlr-v49-garivier16a} proposed two Track-and-Stop sampling rules called C-Tracking and D-Tracking.
Our algorithm is based on the D-Tracking sampling rule (Eq.~(\ref{eq:sampling_rule_D})).
\begin{align} 
    &I(t) \nonumber\\ 
    \gets\mbox{}& \begin{cases}
        \argmin_{\ell\in \mathcal{L}(\mathcal{T})}N_{\ell}(t-1) & \\
        \multicolumn{2}{r}{\text{if } N_{\ell}(t-1) < \sqrt{t} - |\mathcal{L}(\mathcal{T})|/2 \text{ for some } \ell\in \mathcal{L}(\mathcal{T})}\\
        \argmax_{\ell \in \mathcal{L}(\mathcal{T})} t w^{s_0}_{\ell}(\hat{\bm{\mu}}(t-1)) - N_{\ell}(t-1) & \text{otherwise}
    \end{cases}
    \label{eq:sampling_rule_D}
\end{align}
This algorithm tracks the true optimal arm draw proportion through its plug-in estimator and draws the most under-sampled arm.
To ensure consistent tracking, it forces uniform exploration via the first case of Eq.~(\ref{eq:sampling_rule_D}), which makes the empirical means converge to their true values.

While it achieves an asymptotically optimal performance, its empirical performance is compromised by the instability of the selection rule, as we demonstrate later in Section~\ref{sec:advantage}. 
In our experiments, the empirical sample complexity of D-tracking when we adopt Eq.~\eqref{eq:sampling_rule_D} is an order of magnitude worse than the theoretical bound.
To improve the performance, we propose a new sampling rule based on D-Tracking, \textbf{RD-Tracking-TMCTS} (Eq.~(\ref{eq:sampling_rule_ratio})).
\begin{align} 
    &I(t) \nonumber \\
    \gets\mbox{}& 
    \begin{cases}
        \argmin_{\ell\in \mathcal{L}(\mathcal{T})}N_{\ell}(t-1) & \\
        \multicolumn{2}{r}{\text{if } N_{\ell}(t-1) < \sqrt{t} - |\mathcal{L}(\mathcal{T})|/2 \text{ for some } \ell\in \mathcal{L}(\mathcal{T})}\\
        \argmax_{\ell \in \mathcal{L}(\mathcal{T})} w^{s_0}_{\ell}(\hat{\bm{\mu}}(t-1)) / N_{\ell}(t-1) & \text{otherwise}
    \end{cases}
    \label{eq:sampling_rule_ratio}
\end{align}
This sampling rule has the same forced exploration rule as D-Tracking.
Instead of selecting an arm by maximizing $t w^{s_0}_{\ell}(\hat{\bm{\mu}}(t)) - N_{\ell}(t)$, this tracks the target proportion through a ratio-based criterion involving $w^{s_0}_{\ell}(\hat{\bm{\mu}}(t))$ and $N_{\ell}(t)$.
This simple update leads to a several-fold reduction in empirical sample complexity.

\subsection{Advantages of the Ratio-based Sampling}\label{sec:advantage}

For ease of discussion, suppose that there exists a time $T_\mathrm{conv}$ such that $\hat{\bm{\mu}}(t) = \bm{\mu}$ for all $t \geq T_\mathrm{conv}$.
Let $T^* = \frac{\log (1/\delta)}{d_{s_0}(\bm{\mu})}$ denote a lower bound on the expected stopping time.
If the algorithm samples according to the optimal allocation after time $T_\mathrm{conv}$, it would stop by time $T_\mathrm{conv} + T^*$.
In other words, once the empirical means converged, selecting an arm $\ell$ more than $T^* w^{s_0}_\ell(\bm{\mu})$ times is unnecessary.

However, the original D-Tracking can sample more than $T^* w^{s_0}_\ell(\bm{\mu})$ times for some arm $\ell$, even after the empirical means have sufficiently converged.

\begin{example}
    We consider a depth-2 tree with $\theta = 0.5$, $\delta = 10^{-10}$, $\bm{w}^{s_0} = (0.920, 0.056, 0.024, 0, 0, 0)$ (Figure~\ref{fig:example_prob}).
    In this case, the lower bound of the expected stopping time is $T^* = 4996.8$.
    \begin{figure}
        \centering
        \includegraphics[width=0.95\linewidth]{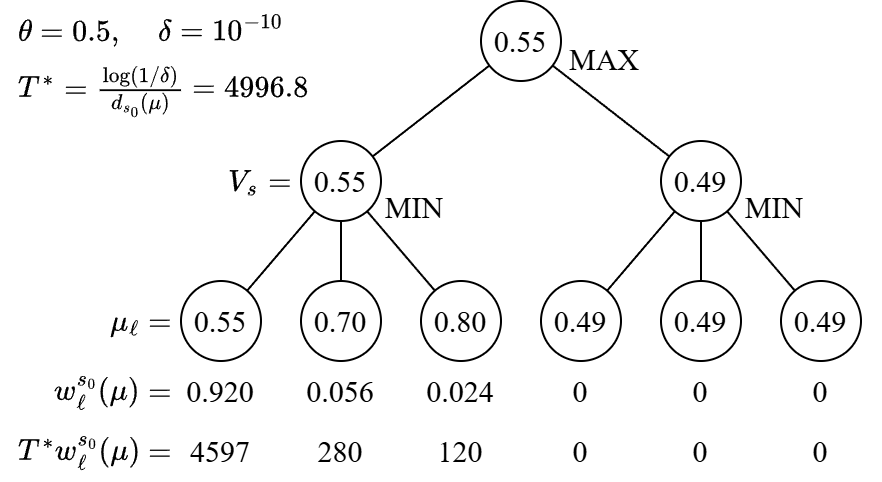}
        \caption{A six-armed example to demonstrate the inefficiency of the original D-tracking.}
        \label{fig:example_prob}
    \end{figure}
    
Suppose that $(N_\ell(T_\mathrm{conv}))_\ell = (100, 100, 100, 3300,\\ 3300, 3400)$ at $T_\mathrm{conv} = 10000$, and that $\hat{\bm{\mu}}(t) = \bm{\mu}$ for all $t > T_\mathrm{conv}$.
Here, the optimal weight is $\bm{w}^{s_0}(\bm{\mu}) = (0.92, 0.056, 0.024,0,0,0)$, and an algorithm stops after drawing each arm $\ell = 1,2,3$ approximately $T^* w_\ell^{s_0}(\bm{\mu})$ times. 
This corresponds to the scenario in which the sampling proportion was largely misspecified before round $T_\mathrm{conv} = 10000$, but the algorithm identified the optimal weight after round $T_\mathrm{conv}$.

In this case, there is a clear difference in the behavior of the original D-tracking and our ratio-based D-tracking. 
Even though $w_\ell^{s_0}(\bm{\mu})$ is correctly specified for $t > T_\mathrm{conv}$, the original D-tracking rule (Eq.~(\ref{eq:sampling_rule_D})) continues to draw arm~$1$ as long as it maximizes the second case of Eq.~(\ref{eq:sampling_rule_D}). 
As a result, $N_1(t)$ is driven close to $t w_1^{s_0}(\bm{\mu})$, leading to substantial oversampling.
In this case, $T_\mathrm{conv} \bm{w}^{s_0}(\bm{\mu}) = (9200, 560, 240, 0, 0, 0)$, and it continues to draw arm~$1$ until $N_1 \approx 9000$.
This leads to an oversampling of arm~$1$ well beyond the true optimal allocation $T^* w^{s_0}_1 = 4597$, even after the empirical mean has converged.
\end{example}

In contrast, RD-Tracking-TMCTS, our ratio-based variant of D-tracking that draws arms in accordance with Eq.~\eqref{eq:sampling_rule_ratio}, avoids such oversampling of the arm~$1$ once the empirical mean has converged.
\begin{lemma}
Assume that $\hat{\bm{\mu}}(t) = \bm{\mu}$ at round $t$. Then, under \textsc{RD-Tracking-TMCTS} (Eq.~\eqref{eq:sampling_rule_ratio}, second case), no leaf $\ell$ with
\[
N_\ell(t) > T^* w^{s_0}_\ell(\bm{\mu})
\]
is selected, unless this condition holds for all leaves.
\end{lemma}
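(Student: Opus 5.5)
This is a direct argmax comparison under the second case of Eq.~\eqref{eq:sampling_rule_ratio}. Write $w_\ell = w^{s_0}_\ell(\bm{\mu})$. By assumption $\hat{\bm{\mu}}(t)=\bm{\mu}$, so the selection criterion evaluated for each leaf is the ratio $w_\ell/N_\ell(t)$. All leaves have $N_\ell\ge 1$ after forced exploration, so these ratios are well defined; if one wants to be careful at $N_\ell=0$, that case is already covered by the first (forced-exploration) case. It is also worth noting that the round index in the statement and in Eq.~\eqref{eq:sampling_rule_ratio} differ by one, $t$ versus $t-1$. The argument below is identical either way, so I state it with the counts $N_\ell$ that enter the criterion.

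Call a leaf \emph{oversampled} if $N_\ell > T^* w_\ell$ and \emph{undersampled} otherwise, i.e.\ if $N_\ell \le T^* w_\ell$. The claim is that if at least one undersampled leaf exists, the argmax is never an oversampled leaf.

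For an oversampled leaf $\ell$ we have $w_\ell/N_\ell < 1/T^*$. This uses $N_\ell>0$. It also covers the case $w_\ell=0$, where the ratio is $0<1/T^*$ because $T^*$ is finite and positive.

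For an undersampled leaf $\ell'$ with $w_{\ell'}>0$ we get $w_{\ell'}/N_{\ell'} \ge 1/T^*$. So every oversampled leaf has a strictly smaller criterion value than such an $\ell'$ and cannot attain the argmax.

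The one subtlety is an undersampled leaf with $w_{\ell'}=0$. Then $N_{\ell'}\le 0$ forces $N_{\ell'}=0$, which cannot occur outside forced exploration. To handle this cleanly, I would note that $\sum_\ell w_\ell = 1$ guarantees some leaf with $w_\ell>0$. If every leaf with positive weight were oversampled, then all leaves would be oversampled except possibly those with $w=0$ and $N=0$, which are excluded. So whenever the condition fails for some leaf, there is an undersampled leaf with positive weight, and the strict comparison applies. I expect this edge case, together with the $N=0$ / index bookkeeping, to be the only real obstacle; the core argument is a one-line inequality.
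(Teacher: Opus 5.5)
Your argument is correct and is essentially the one the paper intends: it reuses the argmax comparison from the proof of Lemma~\ref{lem:DrawProportionConvergence}, where the maximizing ratio is bounded below by the ratio of an undersampled leaf, here with $T^*$ in the role of $t$. One small correction: $N_\ell\ge 1$ holds because Algorithm~\ref{alg:proposed} draws every leaf once at the start, not because of forced exploration. The forced-exploration trigger $N_\ell(t-1)<\sqrt{t}-|\mathcal{L}(\mathcal{T})|/2$ can fail for $N_\ell=0$ when $t$ is small.
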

This lemma can be proved using a similar argument to Lemma~\ref{lem:DrawProportionConvergence}.
As a result, if $\hat{\bm{\mu}}(t) = \bm{\mu}$ holds for all $t \ge T_\mathrm{conv}$, RD-Tracking-TMCTS selects leaves so that, by approximately time
$T_\mathrm{conv} + T^*$, $N_\ell > T^* w^{s_0}_\ell(\bm{\mu})$ holds for all arm $\ell$. 
Therefore, the algorithm stops by approximately time $T_\mathrm{conv} + T^*$.

Because the GLR involves a minimum over leaves, the algorithm cannot stop while there exists a leaf satisfying $N_\ell < T^* w^{s_0}_\ell(\bm{\mu})$ (as long as the empirical means do not deviate significantly).
Consequently, once the empirical means have converged, the time required to reach stopping is always shorter or equal for the proposed method; it is never longer.
Moreover, since both methods employ the same forced-exploration mechanism, the time until the empirical means converge is essentially the same, and thus the proposed method is expected to stop earlier.
This difference in expected stopping times is confirmed experimentally in Section~\ref{sec:experiment}.

Moreover, the ratio-based sampling of Eq.~\eqref{eq:sampling_rule_ratio} can run in logarithmic time of the tree per round, as we will describe in Section~\ref{sec:time_complexity}.

\subsection{Algorithms}

Algorithm~\ref{alg:proposed} is the pseudocode of RD-Tracking-TMCTS.

\begin{algorithm}[tb]
\caption{Ratio-based D-Tracking for Thresholding MCTS (RD-Tracking-TMCTS)} \label{alg:proposed}
\begin{algorithmic}[1]
  \Require $\mathcal{T}$: tree, $\theta$: threshold, $\delta$: confidence parameter
  \Ensure Return $a_{s_0}(\bm{\mu}) \in \{\text{`win'}, \text{`lose'}\}$ correctly with prob. at least $1 - \delta$
  \State Draw each leaf node $\ell\in\mathcal{L}(\mathcal{T})$ once
  \For{$t = 1, 2, \ldots$}
    \State $I(t) \gets \text{Eq.~(\ref{eq:sampling_rule_ratio})}$
    \State Observe reward $r(t)$ from leaf $I(t)$ and update $N_{I(t)}(t)$ and $\hat{\mu}_{I(t)}(t)$
    \If{$Z_{s_0}(t)\ge\beta(t,\delta)$} \label{line:stopping_condition}
      \State \textbf{Return} $a_{s_0}(\hat{\bm{\mu}}(t))$\label{line:output}
    \EndIf
  \EndFor
\end{algorithmic}
\end{algorithm}

\section{Analysis}

In Section~\ref{sec:assumption}, we describe the assumptions necessary for our analysis.
In Sections~\ref{sec:as_upper_bound} and \ref{sec:sample_complexity}, we derive an upper bound on the stopping time for Algorithm~\ref{alg:proposed}.
Furthermore, Section~\ref{sec:time_complexity} shows the computational cost at each time step $t$ for Algorithm~\ref{alg:proposed}.

\subsection{Assumptions on the True Parameters}\label{sec:assumption}

We assume that the true parameters $\bm{\mu}$ satisfy the following conditions.

\begin{assumption} \label{assumption:no_exact_theta}
  $V_{s_0}(\bm{\mu})\neq\theta$.  
\end{assumption}

\begin{assumption} \label{assumption:unique_weight}
    The optimal proportion $\bm{w}^{s_0}(\bm{\mu})$ is unique.
\end{assumption}

Assumption~\ref{assumption:no_exact_theta} is required to ensure that the algorithm terminates.
Assumption~\ref{assumption:unique_weight} is used to establish the convergence of the estimated optimal weights.
Note that if $c^*(s)$ is unique for every $s \in S$, Assumption~\ref{assumption:unique_weight} holds.

\subsection{Almost-Sure Upper Bound of Sample Complexity} \label{sec:as_upper_bound}

The following theorem is the almost sure upper bound of the stopping time for the class of algorithms whose sampling proportion of leaf nodes converges to the optimal proportion.
\begin{theorem}\label{thm:as_upper_bound}
    Using GLR stopping rule with the threshold $\beta(t,\delta)$ defined by Eq.~(\ref{eq:betadef}), 
    and any sampling rule ensuring that for every leaf $\ell \in \mathcal{L}(T)$, $N_\ell(t) / t$ converges almost surely to $w^{s_0}_\ell(\bm{\mu})$, any algorithm for Problem~\ref{prob:def} guarantees that for all $\delta \in (0,1)$,
    $\mathbb{P}(\tau_\delta < +\infty) = 1$ and
    \[ \mathbb{P}_{\bm{\mu}} \left( \limsup_{\delta \to 0} \frac{\tau_\delta}{\log(1 / \delta)} \leq \frac{1}{d_{s_0}(\bm{\mu})} \right) = 1. \]
\end{theorem}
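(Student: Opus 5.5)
We follow the proof of Proposition~13 in \citet{pmlr-v49-garivier16a}. The idea is to show that along almost every trajectory the GLR statistic $Z_{s_0}(t)$ grows linearly at rate $d_{s_0}(\bm{\mu})$, while $\beta(t,\delta)$ grows only like $\log(1/\delta)+O(\log\log t)$.

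\textbf{Step 1: a good event.} Let $\mathcal{E}$ be the event on which $N_\ell(t)/t\to w^{s_0}_\ell(\bm{\mu})$ for all leaves and $\hat{\bm{\mu}}(t)\to\bm{\mu}$. The first convergence holds almost surely by hypothesis. For the second, note that $N_\ell(t)\to\infty$ for every leaf with $w^{s_0}_\ell(\bm{\mu})>0$, so the strong law of large numbers gives convergence of those empirical means. Leaves with $w^{s_0}_\ell(\bm{\mu})=0$ may be sampled only sublinearly; we will show their estimates are not needed in Step 2. We fix a trajectory in $\mathcal{E}$.

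\textbf{Step 2: continuity of the rate.} Define
\[
F(\bm{\lambda}',\bm{w})=\inf_{\bm{\lambda}\in\mathrm{Alt}(\bm{\lambda}')}\sum_\ell w_\ell d(\lambda'_\ell,\lambda_\ell).
\]
Then $Z_{s_0}(t)=t\,F(\hat{\bm{\mu}}(t),(N_\ell(t)/t)_\ell)$. We want $\liminf_t Z_{s_0}(t)/t\ge d_{s_0}(\bm{\mu})=F(\bm{\mu},\bm{w}^{s_0}(\bm{\mu}))$.

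By Assumption~\ref{assumption:no_exact_theta} and continuity of $V_{s_0}$, we have $a_{s_0}(\hat{\bm{\mu}}(t))=a_{s_0}(\bm{\mu})$ for all large $t$. From then on the alternative set is the fixed set $\mathrm{Alt}(\bm{\mu})$, and the infimum can be evaluated with Recursive Formula~\ref{rf:GLR_win}.

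Continuity is easiest to see via the recursion (Theorem~\ref{thm:GLR}). $Z_{s_0}(t)/t$ is a composition of sums, mins, and leaf terms $(N_\ell/t)\,d(\hat{\mu}_\ell,\theta)\mathbb{1}[\hat{\mu}_\ell\ge\theta]$. These leaf terms are continuous at every leaf with $\mu_\ell\neq\theta$.

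Now consider a leaf whose weight $w_\ell$ is $0$, or whose mean is not yet converged. It contributes a nonnegative term inside a sum at a MAX node (in the `win' case), so dropping it only decreases $Z$. Along the argmax chain $c^*(s)$, the relevant leaves all have positive weight, so their means converge; those leaves also satisfy $\mu_\ell\neq\theta$, otherwise $d_{s_0}(\bm{\mu})$ would be zero. The restricted recursion along this chain is continuous and equals $d_{s_0}(\bm{\mu})$ at the limit point, by Theorem~\ref{thm:optimal_weight}. Hence for every $\epsilon>0$ there is $t_0$ such that $Z_{s_0}(t)\ge t\,(1-\epsilon)\,d_{s_0}(\bm{\mu})$ for all $t\ge t_0$. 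I expect this to be the main obstacle. It needs care at leaves with $\mu_\ell=\theta$ or zero weight, and at MIN nodes, where every child must have positive $d_c$. The monotonicity argument, restricting to the support of $\bm{w}^{s_0}(\bm{\mu})$ and lower-bounding $Z$, handles it.

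\textbf{Step 3: threshold and conclusion.} Since $N_\ell(t)\le t$, we have
\[
\beta(t,\delta)\le 3|\mathcal{L}|\log(1+\log t)+|\mathcal{L}|\,\mathcal{C}_{\exp}\bigl(\log(1/\delta)/|\mathcal{L}|\bigr).
\]
Moreover $\mathcal{C}_{\exp}(x)=x+O(\log x)$, so $|\mathcal{L}|\,\mathcal{C}_{\exp}(\log(1/\delta)/|\mathcal{L}|)=\log(1/\delta)+o(\log(1/\delta))$. Write $C(\delta)$ for this term. Then
\[
\tau_\delta\le t_0\vee\inf\bigl\{t:\ t(1-\epsilon)d_{s_0}(\bm{\mu})\ge C(\delta)+3|\mathcal{L}|\log(1+\log t)\bigr\}.
\]
This is finite because the left side grows linearly while the right side grows like $\log\log t$, which gives $\mathbb{P}(\tau_\delta<\infty)=1$. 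Solving the inequality (using, e.g., Lemma~18 of \citet{pmlr-v49-garivier16a}) gives
\[
\limsup_{\delta\to0}\frac{\tau_\delta}{\log(1/\delta)}\le\frac{1}{(1-\epsilon)\,d_{s_0}(\bm{\mu})}.
\]
Letting $\epsilon\to0$ along a countable sequence concludes the proof on $\mathcal{E}$, which has probability one.
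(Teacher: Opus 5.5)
Your proposal is correct and follows essentially the paper's route. Lemma~\ref{lem:GLR_converge} proves by induction over the recursion that $Z_s(t)/t\to d_{s_0}(\bm{\mu})$ on the support of $\bm{w}^{s_0}(\bm{\mu})$ and $Z_s(t)/t\to 0$ off it; you only need, and prove, the lower bound by discarding the nonnegative off-support terms. The paper then inverts $\beta(t,\delta)$ as in your Step~3, using Jensen and Propositions~\ref{prop:ineq-hold-upper-bound}--\ref{prop:Cexp-order} where you use $N_\ell(t)\le t$ and Lemma~18 of \citet{pmlr-v49-garivier16a}.

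One small repair. As you note, zero-weight leaves need not converge under a general sampling rule, so you cannot get $a_{s_0}(\hat{\bm{\mu}}(t))=a_{s_0}(\bm{\mu})$ eventually from continuity of $V_{s_0}$. Derive it from the support leaves alone: once they all lie strictly on the correct side of $\theta$, the support subtree certifies the sign of $V_{s_0}(\hat{\bm{\mu}}(t))-\theta$. The paper leaves this step implicit.
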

The proof of this theorem is provided in Appendix~\ref{append:as_upper_bound}.

The following lemma guarantees that the sampling ratio of RD-Tracking-TMCTS converges to the optimal ratio.
(See Appendix~\ref{appendix:RatioConvergence} for the proof.)
\begin{lemma}\label{lem:sampling_proportion_convergence}
    RD-Tracking-TMCTS sampling rule satisfies
    \[ \mathbb{P}\left( \lim_{t \to \infty} \frac{N_\ell(t)}{t} = w^{s_0}_\ell(\bm{\mu}) \right) = 1.\]
\end{lemma}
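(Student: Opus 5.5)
The argument follows the proof of Theorem~7 in \citet{pmlr-v49-garivier16a}, with the additive tracking bound replaced by a ratio-based one. There are three steps.

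\textbf{Step 1: forced exploration and consistency of the plug-in weights.} The first case of Eq.~(\ref{eq:sampling_rule_ratio}) is the same forced-exploration rule as in D-Tracking. Lemma~8 of \citet{pmlr-v49-garivier16a} then gives $N_\ell(t)\ge \sqrt{t}-|\mathcal{L}(\mathcal{T})|/2-1$ for every leaf $\ell$ and every $t$. Since each $N_\ell(t)\to\infty$, the strong law of large numbers gives $\hat{\bm{\mu}}(t)\to\bm{\mu}$ almost surely.

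Next we need the plug-in weights to converge, $\bm{w}^{s_0}(\hat{\bm{\mu}}(t))\to\bm{w}^{s_0}(\bm{\mu})$. By Assumption~\ref{assumption:no_exact_theta}, $a_{s_0}(\hat{\bm{\mu}}(t))=a_{s_0}(\bm{\mu})$ for all large $t$. Combined with Assumption~\ref{assumption:unique_weight}, this lets us argue either way:
\begin{itemize}
\item via Berge's maximum theorem, since the objective is continuous in $(\bm{w},\bm{\mu})$ and the maximizer is unique; or
\item directly from Recursive Formula~\ref{rf:optimal_weight_win}, since each $d_s$ is continuous in $\bm{\mu}$ near $\bm{\mu}$, and the argmax $c^*(s)$ is eventually the correct one whenever it matters for the weights.
\end{itemize}
Hence for every $\varepsilon>0$ there is an almost surely finite $t_\varepsilon$ such that $\|\bm{w}^{s_0}(\hat{\bm{\mu}}(t-1))-\bm{w}^{s_0}(\bm{\mu})\|_\infty\le\varepsilon$ for all $t\ge t_\varepsilon$.

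\textbf{Step 2: a deterministic tracking lemma for the ratio rule.} This is Lemma~\ref{lem:DrawProportionConvergence}. Suppose that from some time $t_0$ on, the weights $w_\ell(t)$ used in the second case satisfy $|w_\ell(t)-w_\ell|\le\varepsilon$. We want to show that $\limsup_t \max_\ell |N_\ell(t)/t-w_\ell|\le C\varepsilon$ for a constant $C$ depending only on $|\mathcal{L}(\mathcal{T})|$.

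\emph{Upper bound for leaves with $w_\ell>0$.} Suppose leaf $\ell$ is selected by the ratio rule at time $t$. Then $w_\ell(t)/N_\ell\ge w_k(t)/N_k$ for all $k$. Multiplying by $N_k$ and summing over $k$ (using $\sum_k w_k(t)=1$) gives $N_\ell(t-1)\le w_\ell(t)(t-1)\le (w_\ell+\varepsilon)t$. The count $N_\ell$ increases only when $\ell$ is selected, either by this rule or by forced exploration. Forced selections only happen while $N_\ell<\sqrt{t}$, which is $o(t)$. Hence $N_\ell(t)\le (w_\ell+\varepsilon)t+O(\sqrt t)+N_\ell(t_0)$ for all $t$.

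\emph{Leaves with $w_\ell=0$.} The same inequality gives $N_\ell(t)\le \varepsilon t+O(\sqrt t)$.

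\emph{Lower bound.} Use $N_\ell(t)=t-\sum_{k\ne\ell}N_k(t)\ge (w_\ell-|\mathcal{L}(\mathcal{T})|\varepsilon)t-o(t)$.

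\textbf{Step 3: conclusion.} Apply Step 2 on the almost sure event of Step 1, with $t_0=t_\varepsilon$. This gives $\limsup_t|N_\ell(t)/t-w^{s_0}_\ell(\bm{\mu})|\le C\varepsilon$ for each $\varepsilon$. Letting $\varepsilon\to0$ along a countable sequence yields the claim.

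\textbf{Main obstacle.} The hardest part is the continuity and convergence of $\bm{w}^{s_0}(\hat{\bm{\mu}})$ in Step 1. The recursive formula has the degenerate ``any $w$'' branch and argmax selections, and the weights could in principle oscillate. My plan is to show by induction on the tree that $d_s$ is continuous at $\bm{\mu}$. Then uniqueness (Assumption~\ref{assumption:unique_weight}) forces every limit point of the plug-in weights to be an optimal allocation for $\bm{\mu}$: upper semicontinuity of the max-inf objective and $\Delta$ compact give this. Since that allocation is unique, the limit point is $\bm{w}^{s_0}(\bm{\mu})$.
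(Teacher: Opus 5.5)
Your proof is correct and follows the paper's skeleton: forced exploration gives $\hat{\bm{\mu}}(t)\to\bm{\mu}$, continuity of $\bm{w}^{s_0}(\cdot)$ at $\bm{\mu}$ passes this to the plug-in weights, and a deterministic tracking lemma concludes.

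Your Step~2 is the paper's Lemma~\ref{lem:DrawProportionConvergence}. It rests on the same observation: a ratio-maximizing leaf satisfies $N_\ell(t)\le t\,w^{s_0}_\ell(\hat{\bm{\mu}}(t))$, because both vectors lie in the simplex. The paper does the bookkeeping by induction on $E_{\ell,t}=N_\ell(t)-t\,w^{s_0}_\ell(\bm{\mu})$, using $\max_\ell|E_{\ell,t}|\le(|\mathcal{L}(\mathcal{T})|-1)\max_\ell E_{\ell,t}$. Your last-selection-time bound plus the complement for the lower bound is equivalent.

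The real difference is the continuity step. The paper proves Lemma~\ref{lem:weight_continuity} constructively, which is essentially your second bullet. The margins $\Gamma^{*}_{\bm{\mu}}$ and $\Gamma^{\mathrm{abs}}_{\bm{\mu}}$ in Lemma~\ref{lem:same-nonzero-range} show that the support of $\bm{w}^{s_0}(\bm{\mu}')$ is locally constant. On that support, $w^{s_0}_\ell=d_{s_0}/d(\mu_\ell,\theta)$ (Proposition~\ref{prop:weight_val_form}) is continuous by Proposition~\ref{prop:d-value-change}.

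Your main plan is instead the generic argmax argument. Once Assumption~\ref{assumption:no_exact_theta} fixes $\mathrm{Alt}(\hat{\bm{\mu}}(t))=\mathrm{Alt}(\bm{\mu})$, the objective is an infimum of continuous functions, hence jointly upper semicontinuous. Continuity of $d_{s_0}$, compactness of $\Delta$ and Assumption~\ref{assumption:unique_weight} then force every limit point of the plug-in weights to be $\bm{w}^{s_0}(\bm{\mu})$. This step uses Theorem~\ref{thm:optimal_weight} to know that the recursive output is a maximizer whatever is chosen in the ``any'' branches.

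Your route is shorter and applies to any problem with a locally constant answer and a unique optimal allocation. The paper's route costs more casework but also yields local stability of the support and the explicit form of the weights.

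Your Berge bullet is valid here as well. The inner infimum obeys the same sum/min recursion as the GLR (Theorem~\ref{thm:GLR} with $N_\ell$ replaced by $w_\ell$), and its leaf terms are continuous, so the objective is in fact jointly continuous.
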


We have the following corollary by Theorem~\ref{thm:as_upper_bound} and Lemma~\ref{lem:sampling_proportion_convergence}.
\begin{corollary}
 Using the threshold sequence $(\beta(t,\delta))_{t\in \mathbb{N}}$ defined by Eq.~(\ref{eq:betadef}), 
 RD-Tracking-TMCTS guarantees that for all $\delta \in (0,1)$,
    $\mathbb{P}(\tau_\delta < +\infty) = 1$ and
    \[ \mathbb{P}_{\bm{\mu}} \left( \limsup_{\delta \to 0} \frac{\tau_\delta}{\log(1 / \delta)} \leq \frac{1}{d_{s_0}(\bm{\mu})} \right) = 1. \]   
\end{corollary}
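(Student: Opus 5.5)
The corollary follows by directly combining Theorem~\ref{thm:as_upper_bound} with Lemma~\ref{lem:sampling_proportion_convergence}. Theorem~\ref{thm:as_upper_bound} has two hypotheses. First, the algorithm must use the GLR stopping rule with threshold $\beta(t,\delta)$ from Eq.~(\ref{eq:betadef}). Second, for every leaf $\ell$ the sampling rule must ensure $N_\ell(t)/t \to w^{s_0}_\ell(\bm{\mu})$ almost surely. I will check that RD-Tracking-TMCTS (Algorithm~\ref{alg:proposed}) meets both and then read off the conclusion.

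For the first hypothesis, line~\ref{line:stopping_condition} of Algorithm~\ref{alg:proposed} stops at the first $t$ with $Z_{s_0}(t)\ge\beta(t,\delta)$. This is exactly the stopping time (\ref{eq:stopping_rule}), and $\beta$ is the one in (\ref{eq:betadef}). The statistic $Z_{s_0}(t)$ computed via Recursive Formulas~\ref{rf:GLR_win} and \ref{rf:GLR_lose} coincides with the GLR statistic (\ref{eq:GLR}) by Theorem~\ref{thm:GLR}.

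For the second hypothesis, Lemma~\ref{lem:sampling_proportion_convergence} states that for each leaf $\ell$ the event $\{\lim_t N_\ell(t)/t = w^{s_0}_\ell(\bm{\mu})\}$ has probability one. Since $\mathcal{L}(\mathcal{T})$ is finite, a union bound over the complements shows that the simultaneous convergence for all leaves also holds almost surely, which is the form Theorem~\ref{thm:as_upper_bound} requires. The lemma relies on Assumptions~\ref{assumption:no_exact_theta} and \ref{assumption:unique_weight}, which are standing assumptions, so they carry over to the corollary.

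One point needs care: the initial round in which each leaf is drawn once. This adds only $|\mathcal{L}(\mathcal{T})|$ deterministic samples, so it does not affect the limits $N_\ell(t)/t$. It also changes $\tau_\delta$ by at most an additive constant, which vanishes after dividing by $\log(1/\delta)$. The only real work therefore lies in Lemma~\ref{lem:sampling_proportion_convergence} itself, which is taken as given. Applying Theorem~\ref{thm:as_upper_bound} then yields both $\mathbb{P}(\tau_\delta<\infty)=1$ for every $\delta\in(0,1)$ and the almost-sure bound $\limsup_{\delta\to0}\tau_\delta/\log(1/\delta)\le 1/d_{s_0}(\bm{\mu})$.
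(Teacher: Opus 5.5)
Your proposal is correct and follows the paper's argument: the paper obtains this corollary by directly combining Theorem~\ref{thm:as_upper_bound} with Lemma~\ref{lem:sampling_proportion_convergence}. Your extra remarks on the finite union over leaves, the standing assumptions, and the initialization round are sound additions to that same one-step deduction.
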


\subsection{Upper Bound of Expected Sample Complexity}\label{sec:sample_complexity}

The following theorem states that RD-Tracking-TMCTS achieves asymptotically optimal expected sample complexity.
\begin{theorem}\label{thm:sample_complexity}
    For RD-Tracking-TMCTS with the threshold sequence $(\beta(t,\delta))_{t\in \mathbb{N}}$ defined by Eq.~(\ref{eq:betadef}), expected stopping time $\mathbb{E}_{\bm{\mu}}[\tau_\delta]$ is asymptotically upper-bounded by 
    \[ \limsup_{\delta \to +0} \frac{\mathbb{E}_{\bm{\mu}}[\tau_\delta]}{\log(1 / \delta)} \leq \frac{1}{ d_{s_0}(\bm{\mu})}. \]
\end{theorem}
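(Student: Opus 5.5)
We follow the expected sample complexity analysis of Theorem~14 in \citet{pmlr-v49-garivier16a}. Fix $\varepsilon>0$ and define the good event
\[
\mathcal{E}_T(\varepsilon)=\bigcap_{t=h(T)}^{T}\left\{ \|\hat{\bm{\mu}}(t)-\bm{\mu}\|_\infty \le \xi(\varepsilon)\right\},\qquad h(T)=T^{1/4},
\]
where $\xi(\varepsilon)>0$ is small. We choose it so that, by continuity of $\bm{\lambda}\mapsto \bm{w}^{s_0}(\bm{\lambda})$ at $\bm{\mu}$, $\|\bm{\lambda}-\bm{\mu}\|_\infty\le\xi$ implies $\|\bm{w}^{s_0}(\bm{\lambda})-\bm{w}^{s_0}(\bm{\mu})\|_\infty\le\varepsilon$. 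It should also imply that $a_{s_0}(\bm{\lambda})=a_{s_0}(\bm{\mu})$, which holds by Assumption~\ref{assumption:no_exact_theta}. We then use the standard decomposition $\mathbb{E}[\tau_\delta]\le T_0(\delta)+\sum_{T\ge T_0(\delta)}\mathbb{P}(\tau_\delta>T)$ and bound $\mathbb{P}(\tau_\delta>T)$ by $\mathbb{P}(\mathcal{E}_T^c)$ for large $T$.

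\textbf{Step 1 (continuity).} We show that $\bm{w}^{s_0}$ and $d_{s_0}$ are continuous at $\bm{\mu}$. For $d_{s_0}$ this follows from Recursive Formula~\ref{rf:optimal_weight_win}. Each $d_s$ is obtained from leaf terms $d(\mu_\ell,\theta)$ by max and harmonic-sum operations, and these are continuous, with the sign pattern locally constant away from $\theta$. For $\bm{w}^{s_0}$ we use Assumption~\ref{assumption:unique_weight} together with Berge's maximum theorem, as in \citet{pmlr-v49-garivier16a}. Alternatively, it follows directly from the recursion, since uniqueness forces the argmax $c^*(s)$ to be locally constant wherever it matters.

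\textbf{Step 2 (tracking on the good event).} We need a deterministic tracking lemma for the ratio rule. This is the analogue of Lemma~\ref{lem:DrawProportionConvergence}, on which the earlier lemma also relies. On $\mathcal{E}_T$ we want a $T_\varepsilon$ such that, for $t\ge T_\varepsilon$ with $t\le T$,
\[
\left|\frac{N_\ell(t)}{t}-w^{s_0}_\ell(\bm{\mu})\right|\le C\varepsilon \quad\text{for all } \ell.
\]
The argument is as follows. Forced exploration ensures $N_\ell(t)\ge\sqrt{t}-|\mathcal{L}(\mathcal{T})|/2$. After $h(T)$, the ratio rule picks $\ell$ maximizing $\hat w_\ell/N_\ell$ with $\hat w$ within $\varepsilon$ of $w:=\bm{w}^{s_0}(\bm{\mu})$. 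If some arm has $N_\ell(t)/t> w_\ell+C\varepsilon$, then since $\sum_\ell N_\ell=t$, some other arm $\ell'$ has $N_{\ell'}/t<w_{\ell'}$. Comparing the two ratios shows that the arm $\ell$ is not chosen. Hence over-sampled arms stop growing and their proportion decays toward $w_\ell+C\varepsilon$ at rate $O(h(T)/t)$. Zero-weight arms must be handled separately: they are drawn only through forced exploration or when $\hat w_\ell>0$ is at most $\varepsilon$. Their proportion is therefore $O(\varepsilon+t^{-1/2})$. This is the main obstacle. The ratio rule has no ready-made tracking lemma (Lemma~17 of Garivier--Kaufmann covers the additive rule), so the argmax comparison, including ties and zero weights, must be carried out carefully. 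We will try to prove it via the invariant $\max_\ell N_\ell(t)/(t\hat w_\ell)\le \min_{\ell:\hat w_\ell>0}N_\ell(t)/(t\hat w_\ell)+O(1/t)$ after a burn-in.

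\textbf{Step 3 (stopping on $\mathcal{E}_T$).} Define $g(\bm{\lambda},\bm{w})=\inf_{\bm{\lambda}'\in\mathrm{Alt}(\bm{\lambda})}\sum_\ell w_\ell d(\lambda_\ell,\lambda'_\ell)$; then $Z_{s_0}(t)=t\,g(\hat{\bm{\mu}}(t),\bm{N}(t)/t)$. By continuity (again through the recursive formula), on $\mathcal{E}_T$ and for $t\ge T_\varepsilon$ we have $Z_{s_0}(t)\ge t(d_{s_0}(\bm{\mu})-\eta(\varepsilon))$ with $\eta(\varepsilon)\to0$. Moreover $\beta(t,\delta)\le \log(Ct^{\alpha}/\delta)$ for large $t$, since $\mathcal{C}_{\exp}(x)=x+O(\log x)$ and the $\log\log N_\ell$ terms are sublinear in $\log t$. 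The usual computation then gives $\tau_\delta\le T$ on $\mathcal{E}_T$ for all $T\ge T_0(\delta)$, where $\limsup_{\delta\to0}T_0(\delta)/\log(1/\delta)\le 1/(d_{s_0}(\bm{\mu})-\eta(\varepsilon))$.

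\textbf{Step 4 (bad event).} By forced exploration and Chernoff bounds for exponential families, $\mathbb{P}(\mathcal{E}_T^c)\le BT\exp(-CT^{1/8})$, which is summable in $T$ (Lemma~19 of \citet{pmlr-v49-garivier16a}). Combining the steps yields $\limsup_{\delta\to0}\mathbb{E}[\tau_\delta]/\log(1/\delta)\le 1/(d_{s_0}(\bm{\mu})-\eta(\varepsilon))$. Letting $\varepsilon\to0$ proves the theorem.
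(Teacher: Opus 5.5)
Your proposal follows essentially the same route as the paper's proof: the good event over $t\in[T^{1/4},T]$, continuity of $\bm{w}^{s_0}$ through the recursion, a ratio-rule tracking lemma on $[\sqrt{T},T]$, the representation $Z_{s_0}(t)=t\,g(\hat{\bm{\mu}}(t),\bm{N}(t)/t)$, and Lemma~19 of Garivier and Kaufmann to bound $\mathbb{P}(\mathcal{E}_T^c)$. The step you flag as the main obstacle becomes easy once you compare against $\hat{\bm{w}}$ rather than $\bm{w}^{s_0}(\bm{\mu})$, and your comparison with an arm that is under-sampled relative to the true weights can fail when that arm's weight is small; since $\hat{\bm{w}}$ and $\bm{N}(t)/t$ both lie in the simplex, the arm chosen by the ratio rule satisfies $t\hat w_\ell/N_\ell(t)\ge 1$, so $N_\ell(t)\le t(w^{s_0}_\ell(\bm{\mu})+\epsilon)$ whenever $\ell$ is drawn, and the induction of their Lemma~17 then applies verbatim to every arm, zero-weight arms included, which is exactly Lemma~\ref{lem:DrawProportionConvergence}.
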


The proof of this theorem is provided in Appendix~\ref{append:sample_complexity}.

Note that, given the matching lower bound, Theorem \ref{thm:sample_complexity}, which bounds the expected stopping time, is slightly stronger than Theorem \ref{thm:as_upper_bound}, which bounds the stopping time with probability $1$. This is because a sequence of random variables that converges to a constant with probability $1$ can have a larger (or even diverging) expected value than the constant.

\subsection{Time Complexity}\label{sec:time_complexity}
Our tracking algorithm maintains the statistics of $|S|$ nodes, and its space complexity is $O(|S|)$. 
In this section, we consider the time complexity of calculating $I(t)$ in Eq.~\eqref{eq:sampling_rule_ratio}.
The first line of this equation, which corresponds to the forced exploration, can be implemented in $O(\log |\mathcal{L}(\mathcal{T})|)$ time by using a heap keyed by the number of observations at each leaf.
The more nontrivial part is the second line of this equation, which corresponds to the ratio-based sampling. To compute $I(t) = \argmax_{\ell \in \mathcal{L}(\mathcal{T})} w^{s_0}_{\ell}(\hat{\bm{\mu}}(t-1)) / N_{\ell}(t-1)$, we need to update the weight vector $\bm{w}^{s_0}(\hat{\bm{\mu}}(t-1))$ after each draw as well as choose its maximizer in a computationally efficient way.

A straightforward implementation of Track-and-Stop algorithms requires recomputation of the entire weight vector $\bm{w}^{s_0}(\bm{\mu})$ at each step, which takes $O(|S|)$ time using Recursive Formula~\ref{rf:optimal_weight_win} and \ref{rf:optimal_weight_lose}, since it visits all nodes at least once.
However, our proposed RD-Tracking-TMCTS admits a more efficient implementation. 

Instead of computing $I(t)$ directly, we consider the following subproblem for each subtree rooted at $s$:
\[I_s(t) = \argmax_{\ell \in \mathcal{D}(s)} w^s_\ell(\hat{\bm{\mu}}(t)) / N_\ell(t).\]
We can solve this subproblem recursively.
\begin{lemma} \label{lem:representative_leaf}
    $I_s(t) \in \{I_c(t)\}_{c \in \mathcal{C}(s)}$.
\end{lemma}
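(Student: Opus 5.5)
The approach is to unfold one level of Recursive Formula~\ref{rf:optimal_weight_win} (and its symmetric counterpart, Recursive Formula~\ref{rf:optimal_weight_lose}). The point is that for every child $c \in \mathcal{C}(s)$ and every leaf $\ell \in \mathcal{D}(c)$, the weight $w^s_\ell(\hat{\bm{\mu}}(t))$ equals $\kappa_c \, w^c_\ell(\hat{\bm{\mu}}(t))$ for some nonnegative constant $\kappa_c$ that depends on $c$ but not on $\ell$. Since $\mathcal{D}(s)$ is the disjoint union of the sets $\mathcal{D}(c)$, $c\in\mathcal{C}(s)$, the maximization of $w^s_\ell/N_\ell$ splits into one maximization per child. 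Within $\mathcal{D}(c)$, the function $\ell \mapsto \kappa_c w^c_\ell/N_\ell$ is a nonnegative multiple of $\ell\mapsto w^c_\ell/N_\ell$, so $I_c(t)$ is a maximizer of it. Therefore the overall maximizer can be taken among $\{I_c(t)\}_{c\in\mathcal{C}(s)}$.

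First I identify $\kappa_c$ case by case, using the `win' formula (the `lose' case is symmetric with MAX and MIN swapped).
\begin{itemize}
\item If $L(s)=$`MAX' and $d_s>0$, then $\kappa_{c^*(s)}=1$ and $\kappa_c=0$ for every other child.
\item If $L(s)=$`MIN' and $d_s>0$, then $\kappa_c = (1/d_c)/\sum_{c'} 1/d_{c'}$, which is positive.
\end{itemize}
Leaves are trivial, since $I_s(t)=s$ there. All $N_\ell(t)\ge 1$ because every leaf is drawn once initially, so the ratios are well defined.

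The step I expect to be the main obstacle is the ``otherwise'' case with $d_s=0$, where the formula allows any $\bm{w}^s\in\Delta_s$ and the scaling structure is not automatic. I would handle it by fixing a canonical tie-breaking choice that still has the product form, for example splitting the weight among children with constants $\kappa_c$ and using $\bm{w}^c$ inside each child. This is legitimate because any element of $\Delta_s$ is optimal when the value is $0$. I also need to make sure the implementation's argmax tie-breaking is consistent across levels, so that the chosen $I_s(t)$ is literally one of the $I_c(t)$.

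A second subtlety is that $\kappa_c=0$ makes every ratio in $\mathcal{D}(c)$ zero. In that case $I_c(t)$ still belongs to $\mathcal{D}(c)$, and the maximum over $\mathcal{D}(s)$ is attained either at $I_{c^*(s)}(t)$ or, if all ratios are zero, at any leaf, including some $I_c(t)$. Putting these cases together gives $I_s(t)\in\{I_c(t)\}_{c\in\mathcal{C}(s)}$.
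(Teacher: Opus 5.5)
Your argument is correct and is essentially the paper's. In Appendix~\ref{append:time_complexity} the authors go case by case on $L(s)$ and $a_s(\hat{\bm{\mu}}(t))$. They observe that on each $\mathcal{D}(c)$ the subtree weights are the child weights times a nonnegative child-dependent factor: either a single selected child $c^{\pm}(s)$, or the factor $(1/\tilde{d}_c)/\sum_{c'}1/\tilde{d}_{c'}$. The argmax then splits over the children exactly as in your proof.

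The only difference is the degenerate $d_s=0$ branch. You fix a product-form element of $\Delta_s$ there. The paper instead removes the ``any $\bm{w}^s\in\Delta_s$'' branch altogether by working with the redefined weights $\tilde{\bm{w}}^s$ of Recursive Formula~\ref{rf:tilde_stat}, which evaluate the recursion with $a_s$ in place of $a_{s_0}$ and have this product form in every case.
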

Each solution $I_s(t)$ depends on the information in the subtree rooted at $s$, and thus remains unchanged unless an observation occurs within this subtree.
\footnote{Strictly speaking, $\bm{w}^s(\hat{\bm{\mu}}(t))$ depends on $a_{s_0}(\hat{\bm{\mu}}(t))$. Therefore, we redefine $I_s(t)$ and introduce auxiliary variables that remove this dependence. (See Appendix~\ref{append:time_complexity}.)}
This enables updating the $I_{s_0}(t)$ via a backpropagation along a single root-to-leaf path (Figure~\ref{fig:updating_repr_leaf}).
\begin{figure}
    \centering
    \includegraphics[width=0.87\linewidth]{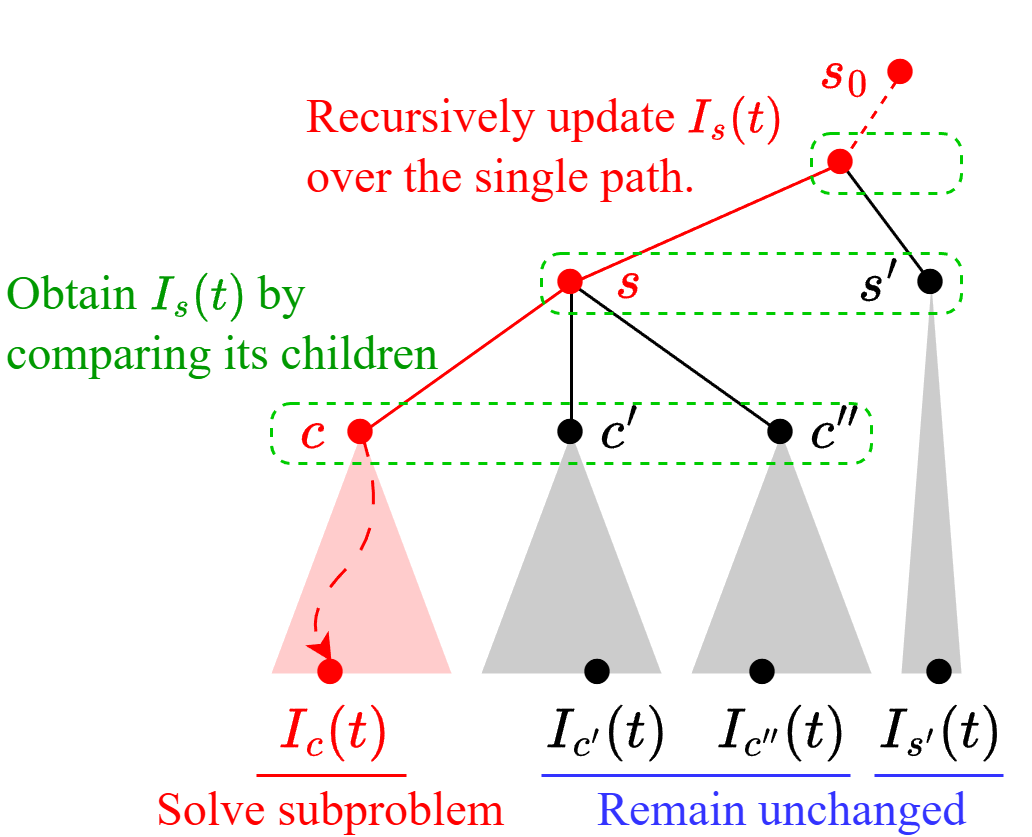}
    \caption{Recursive calculation of $I_s(t)$}
    \label{fig:updating_repr_leaf}
\end{figure}

We formally state the time complexity of RD-Tracking-TMCTS.
\begin{theorem} \label{thm:time_complexity}
    Let $D$ be the depth of the tree $\mathcal{T}$. Let $K$ be the maximum number of children $\max_{s \in S} |\mathcal{C}(s)|$.
    RD-Tracking-TMCTS (Algorithm~\ref{alg:proposed}) runs in $O(D \log K)$ time per round.
\end{theorem}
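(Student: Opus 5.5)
We charge the cost of one round of Algorithm~\ref{alg:proposed} to the single root-to-leaf path through the sampled leaf $I(t)$. This path has at most $D+1$ nodes, and we maintain constant-size statistics plus one heap at each node. A round has four parts: choosing $I(t)$ (forced exploration or the ratio rule), updating $N_{I(t)}$ and $\hat{\mu}_{I(t)}$, recomputing the relevant recursive quantities, and checking the stopping condition $Z_{s_0}(t)\ge\beta(t,\delta)$. We show each part costs $O(D\log K)$.

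At every node $s$ we store the following quantities under both hypotheses for $a_{s_0}$ (`win' and `lose'), which removes the dependence noted in the footnote:
\begin{itemize}
\item $d_s(\hat{\bm{\mu}})$, $Z_s(t)$ and the representative leaf $I_s(t)$;
\item a max-heap over the children $c\in\mathcal{C}(s)$, keyed by $d_c$ (for $c^*(s)$ and for the MAX/MIN cases of $d_s$);
\item the aggregates $\sum_c 1/d_c$, $\sum_c Z_c$, a min-heap on $Z_c$, and a max-heap on the children's representative scores.
\end{itemize}

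The representative score needs care, because the weight of a leaf factorizes along its path. By Recursive Formula~\ref{rf:optimal_weight_win}, $w^s_\ell=\alpha_{s,c}\,w^c_\ell$ for $\ell\in\mathcal{D}(c)$. The factor $\alpha_{s,c}$ depends only on $s$ and $c$: it is $1$ or $0$ at a MAX node, and $(1/d_c)/\sum_{c'}1/d_{c'}$ at a MIN node. Therefore
\[
\max_{\ell\in\mathcal{D}(c)}\frac{w^s_\ell}{N_\ell}=\alpha_{s,c}\,\frac{w^c_{I_c}}{N_{I_c}}.
\]
This proves Lemma~\ref{lem:representative_leaf}. It also means the ranking among children can use the keys $w^c_{I_c}/N_{I_c}$ divided by $d_c$ (or restricted to $c^*(s)$). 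These keys change only for the child on the updated path, since the common normalizer $\sum_{c'}1/d_{c'}$ does not affect the argmax. So $I_s$, together with its value $w^s_{I_s}/N_{I_s}$, is read from a heap top in $O(1)$, and each heap needs only one update per round.

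After drawing leaf $\ell$, we walk from $\ell$ up to $s_0$. At each ancestor $s$ with child $c$ on the path, we:
\begin{enumerate}
\item update $c$'s key in each of $s$'s heaps, at cost $O(\log K)$;
\item update the aggregate sums in $O(1)$;
\item recompute $d_s$, $Z_s$, $c^*(s)$ and $I_s$ from heap tops and aggregates in $O(1)$.
\end{enumerate}
Correctness of these recomputations follows from Theorem~\ref{thm:optimal_weight} and Theorem~\ref{thm:GLR}, since the recursions use only children's values. Over the path this costs $O(D\log K)$.

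The remaining steps each fit within this budget:
\begin{itemize}
\item At the root, $a_{s_0}(\hat{\bm{\mu}})$ follows from the maintained values $V_s$ (a MAX/MIN heap per node, updated the same way). It tells us which hypothesis copy to use for $I_{s_0}$ and $Z_{s_0}$.
\item The term $\sum_\ell\log(1+\log N_\ell)$ in $\beta(t,\delta)$ changes in one summand per round, so it updates in $O(1)$.
\item The forced-exploration test and its argmin can use a global heap keyed by $N_\ell$. That heap costs $O(\log|\mathcal{L}|)$ per round, and $O(\log|\mathcal{L}|)\le O(D\log K)$. Alternatively, each node can store the minimum $N$ in its subtree, updated along the path in $O(\log K)$ per node.
\end{itemize}

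The main obstacle is the dependence of $\bm{w}^s$ on $a_{s_0}(\hat{\bm{\mu}})$. When the root answer flips, every stored weight and $Z$ value would seem to change. Maintaining both hypothesis copies at every node resolves this at constant-factor cost, because each copy depends only on its own subtree.

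A second subtlety is the degenerate ``otherwise'' cases (e.g.\ $d_s=0$, where the weights are arbitrary). For these we fix a canonical tie-breaking choice, such as concentrating all weight on one fixed child. That choice is also local and keeps each update within the per-node $O(\log K)$ budget.
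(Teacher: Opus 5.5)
Your proposal is correct. Its skeleton matches the paper's: a bottom-up update along the single path from $I(t)$ to $s_0$, a constant number of size-$|\mathcal{C}(s)|$ heaps per node with $O(\log K)$ rewrites, incrementally maintained harmonic and additive sums, and the representative-leaf recursion. Where you differ is in how you remove the dependence on $a_{s_0}(\hat{\bm{\mu}}(t))$.

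\textbf{The paper's route.} It keeps a single signed copy $\tilde d_s,\tilde Z_s,\tilde{\bm w}^s$, evaluated with each node's own answer $a_s$ in place of $a_{s_0}$. It then reads $a_{s_0}$ and $Z_{s_0}$ off as the sign and absolute value of $\tilde Z_{s_0}$. Candidate leaves are ranked by the leaf-intrinsic key $1/(\tilde d_\ell N_\ell)$, which is justified through the analogue of Proposition~\ref{prop:weight_val_form}.

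\textbf{Your route.} You run the unmodified recursions twice, once per global hypothesis. Each copy is then literally a function of its own subtree, so correctness at the root follows directly from Theorems~\ref{thm:optimal_weight} and~\ref{thm:GLR}. Your ranking key $(w^c_{I_c}/N_{I_c})/d_c$ comes from the factorization $w^s_\ell=\alpha_{s,c}w^c_\ell$. It is numerically the same as the paper's key $1/(d(\hat\mu_{I_c},\theta)N_{I_c})$, because $w^c_{I_c}=d_c/d(\hat\mu_{I_c},\theta)$ whenever $d_c>0$. So your heaps coincide with the paper's $\mathrm{RD}$-heaps, but you reach them without Proposition~\ref{prop:weight_val_form}.

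\textbf{Trade-offs.} Your version is more transparent. It avoids having to check that evaluating subtrees with their own answers, rather than $a_{s_0}$, still reproduces $\bm w^{s_0}$ and $Z_{s_0}$ at the root; the paper asserts this without detailed proof. The price is doubled storage and heap work, plus separate maintenance of $V_s$, which the paper's sign trick makes unnecessary. You also account explicitly for the $\sum_\ell\log(1+\log N_\ell)$ term in $\beta(t,\delta)$, which the paper leaves implicit.

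\textbf{One small repair.} A max-heap on $d_c$ does not tell you whether some child has $d_c=0$ at a MIN node in the `win' copy (or a MAX node in the `lose' copy). Keep a counter of zero-valued children, or a min-heap, at $O(1)$ or $O(\log K)$ extra cost. Also, ``one heap at each node'' should read ``a constant number of heaps.''
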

Since $|\mathcal{L}(\mathcal{T})| \leq K^D$, we have $\log |\mathcal{L}(\mathcal{T})| \leq D \log K$. 
If the tree is balanced, that is, if $|\mathcal{L}(\mathcal{T})| = \Theta(K^D)$ holds, then $D \log K = \Theta(\log |\mathcal{L}(\mathcal{T})|)$ (and also $\Theta(\log |S|)$) holds.

Theorem~\ref{thm:time_complexity} is achieved through the careful construction of data structures and algorithmic implementations, which we elaborate on in Appendix~\ref{append:time_complexity}.

\section{Experimental Results}\label{sec:experiment}

We conduct experiments on instances where the label of the root node $s_0$ satisfies $L(s_0) = \text{`MAX'}$.

To validate the effectiveness of our method, RD-Tracking-TMCTS, we compare it with several baseline algorithms.
Specifically, we use D-Tracking~\citep{pmlr-v49-garivier16a}, C-Tracking~\citep{pmlr-v49-garivier16a}, UGapE-MCTS~\citep{DBLP:conf/nips/KaufmannK17}, LUCB-micro~\citep{DBLP:conf/alt/HuangAS017}, and round-robin (= uniform sampling) as baselines.

D-Tracking and C-Tracking are instances of the Track-and-Stop algorithm.
To apply these algorithms to our Thresholding MCTS problem, we made them select a leaf following the plug-in estimator of the optimal weight $\bm{w}(\hat{\bm{\mu}}(t))$, which leads to convergence of the sampling proportion to the optimal weight $\bm{w}(\bm{\mu})$.
For fairness, we employ the same stopping rule as in the proposed method and use the same confidence function $\beta(t, \delta)$.
Since these methods ensure that the selection proportions converge to $\bm{w}(\bm{\mu}(t))$, they satisfy the conditions of Theorem~\ref{thm:as_upper_bound}.

UGapE-MCTS and LUCB-micro are confidence-interval-based algorithms designed for the Best Arm Identification problem.
Since these algorithms identify $s^* = \argmax_{c \in \mathcal{C}(s_0)} V_c(\bm{\mu})$, we modify them to instead return whether the value of $s^*$ is at least $\theta$.
For UGapE-MCTS, we set the tolerance parameter $\epsilon$ to zero, thereby requiring the algorithm to identify the exact $s^*$.
For fairness, we define the confidence interval at the leaf node $\ell$ as $\left[ \hat{\mu}_\ell(t) - \sqrt{\frac{\beta(N_\ell(t), \delta)}{2 N_\ell(t)}}, \hat{\mu}_\ell(t) + \sqrt{\frac{\beta(N_\ell(t), \delta)}{2 N_\ell(t)}} \right]$, and we use the same function $\beta(t,\delta)$ as that employed in the proposed method.

The round-robin algorithm pulls each leaf in a fixed cyclic order. 
We employ the same stopping rule as in the proposed method and use the same confidence function $\beta(t, \delta)$.
Once the condition $Z_{s_0}(t) \geq \beta(t,\delta)$ is met, the algorithm terminates and outputs $a_{s_0}(\hat{\bm{\mu}})$.

We conducted experiments on randomly generated complete 3-ary trees of depths $2, 3, 4,$ and $5$. ($|\mathcal{L}(\mathcal{T})| = 9, 27, 81, 243$).
We set nodes at even depths as MAX nodes and those at odd depths as MIN nodes.
To ensure a fair comparison with Best Arm Identification methods, we set the value of the second-best depth-1 node to $\theta$.
This setting makes the gap between the best and second-best nodes equal to the gap between the best node and the threshold.

We performed experiments with $\delta = 10^{-5}, 10^{-10},\\ 10^{-20}, \ldots, 10^{-60}$. 
For each value of $\delta$, we ran the algorithm $1000$ times and recorded the average and standard deviation of the stopping times.
Figure~\ref{fig:result_stopping_time} shows the ratio of the average stopping time to its asymptotic lower bound $\log(1/\delta) / d_{s_0}(\bm{\mu})$.

\begin{figure*}[t]
  \centering
  \begin{subfigure}{0.24\linewidth}
    \centering
    \includegraphics[width=\linewidth]{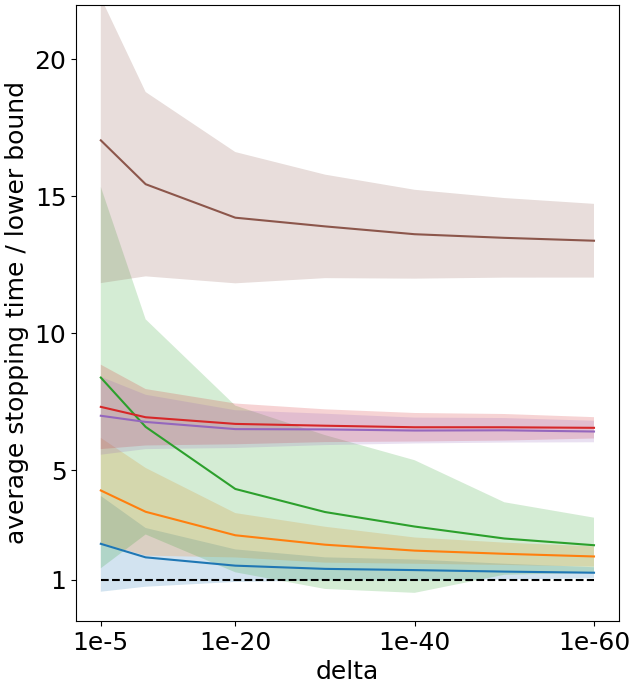}
    \caption{Tree depth $= 2$}
    \label{fig:result_depth2}
  \end{subfigure}
  \hfill
  \begin{subfigure}{0.24\linewidth}
    \centering
    \includegraphics[width=\linewidth]{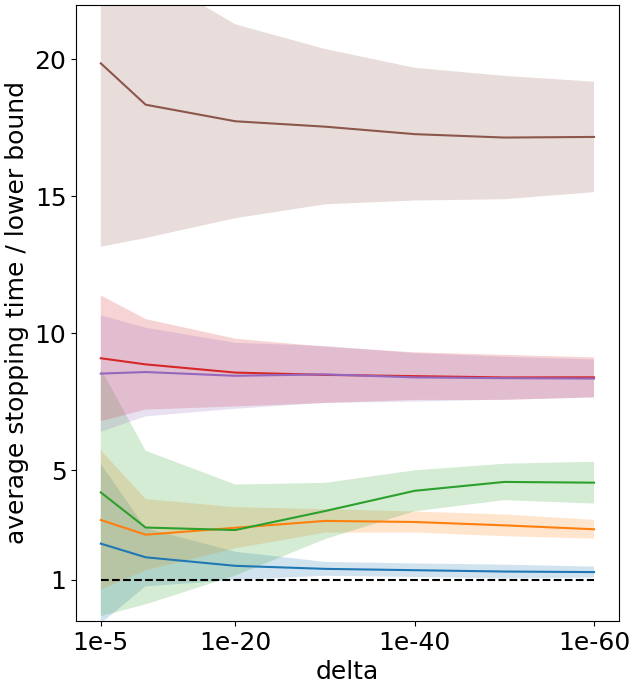}
    \caption{Tree depth $= 3$}
    \label{fig:result_depth3}
  \end{subfigure}
  \hfill
  \begin{subfigure}{0.24\linewidth}
    \centering
    \includegraphics[width=\linewidth]{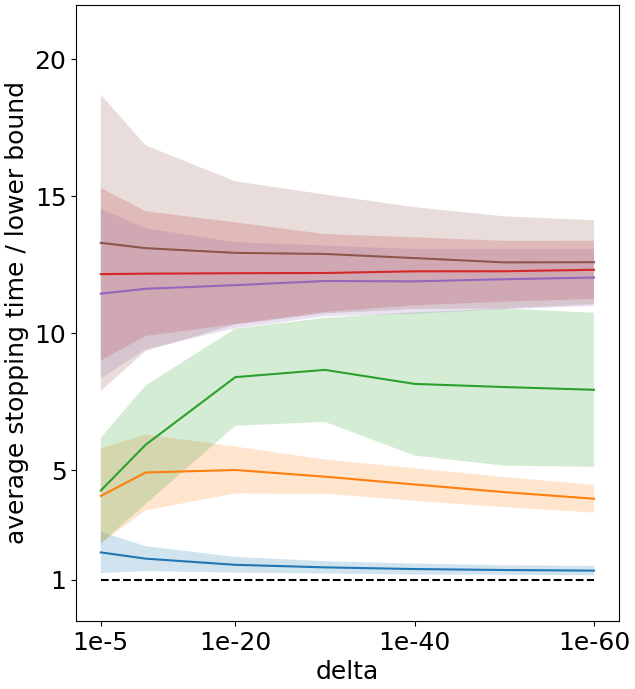}
    \caption{Tree depth $= 4$}
    \label{fig:result_depth4}
  \end{subfigure}
  \hfill
  \begin{subfigure}{0.24\linewidth}
    \centering
    \includegraphics[width=\linewidth]{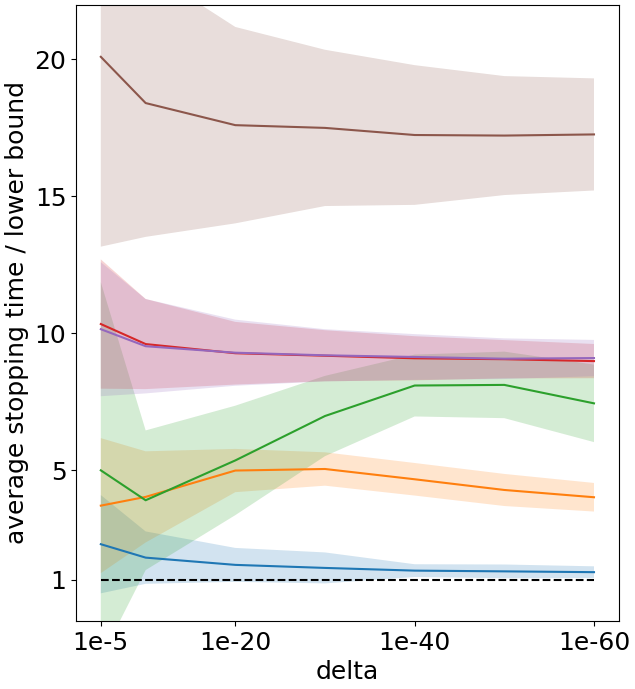}
    \caption{Tree depth $= 5$}
    \label{fig:result_depth5}
  \end{subfigure}
  \includegraphics[width=\linewidth]{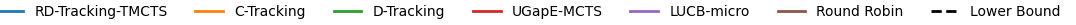}
  \caption{Ratio of the average stopping time to the lower bound for different values of $\delta$. Shaded regions indicate the standard deviation.}
  \label{fig:result_stopping_time}
\end{figure*}

We did not observe any incorrect outputs in $1000$ trials for any setting, which is consistent with\linebreak $\delta$-correctness.

The ratios of the average stopping time to the lower bound for UGapE-MCTS, LUCB-micro, and round-robin do not seem to approach $1$ as $\delta \to 0$, indicating that these methods are not asymptotically optimal.
Although RD-Tracking-TMCTS, as well as C-Tracking and D-Tracking, are all theoretically guaranteed to be asymptotically optimal, the rates at which their stopping times approach the lower bound differ substantially.
In particular, RD-Tracking-TMCTS converges to the lower bound significantly faster than the other methods, with a clear separation in performance.

\section{Good Action Identification\linebreak Problem}\label{sec:good_arm_identification}

Up to now, we have focused on the thresholding bandit problem, where the goal is to determine whether the value of the root node is at least a given threshold $\theta$. 
When $L(s_0) = \text{`MAX'}$, identifying $a_{s_0}(\bm{\mu})$ is equivalent to determining whether there exists some $c \in \mathcal{C}(s_0)$ such that $V_c(\bm{\mu}) > \theta$.
A natural question is to ask whether our framework can be applied to find a good action at the root node $s_0$ with $L(s_0) = \text{`MAX'}$, i.e., the child node whose value is at least $\theta$.
We call this problem the good action identification (GAI) problem and answer this question affirmatively.
The correct answers of GAI problem $a^{\mathrm{GAI}}(\bm{\mu})$ are defined as
\begin{align*}
a^{\mathrm{GAI}}(\bm{\mu})=
\begin{cases}
\text{any } c\in A_{s_0}(\bm{\mu}) & (A_{s_0}(\bm{\mu})\neq \emptyset)\\
\text{``no good action"} & (A_{s_0}(\bm{\mu})=\emptyset).
\end{cases}
\end{align*}
For GAI problem, $\liminf_{\delta\rightarrow 0}\mathbb{E}_{\bm{\mu}}[\tau_{\delta}]/\log(1/\delta)$ is known to be lower bounded \cite{DBLP:conf/nips/DegenneK19} by
\begin{align}
\left(\max_{c\in A_{s_0}(\bm{\mu})}\max_{\bm{w}\in\Delta}\inf_{\bm{\lambda}\in {\mathrm{Alt}^{\mathrm{GAI}}(c)}}\sum_{\ell\in \mathcal{L}(\mathcal{T})}w_{\ell}d(\mu_{\ell},\lambda_{\ell})\right)_{,}^{-1}\label{GAI:LB}
\end{align}
where
\[
\mathrm{Alt}^{\mathrm{GAI}}(c)=\{\bm{\lambda}\in X^{|\mathcal{L}(\mathcal{T})|}\mid c\not\in A_{s_0}(\bm{\lambda})\}.
\]
We can show that (\ref{GAI:LB}) is lower bounded by $d_{s_0}(\bm{\mu})^{-1}$ as the decision version.
An algorithm for GAI problem is constructed by modifying the GLR statistic $Z_{s_0}(t)$ in the stopping rule (line~\ref{line:stopping_condition} of Algorithm~\ref{alg:proposed}) as
\begin{equation*}\label{eq:GLR_GAI}
    Z^\mathrm{GAI}_{s_0}(t) =
    \begin{cases}
        \max_{c \in \mathcal{C}(s_0)} Z_c(t) & (V_{s_0}(\hat{\bm{\mu}}(t)) \ge \theta)\\
        Z_{s_0}(t) & (V_{s_0}(\hat{\bm{\mu}}(t)) < \theta).
    \end{cases}
\end{equation*}
and the output $a_{s_0}(\hat{\bm{\mu}}(t))$ (line~\ref{line:output} of Algorithm~\ref{alg:proposed}) as
\begin{align*}
    \hat{a}^\mathrm{GAI}(t) = \begin{cases}
        \argmax_{c \in \mathcal{C}(s_0)} Z^\mathrm{GAI}_c(t) & (A_{s_0}(\hat{\bm{\mu}}(t)) \neq \emptyset)\\
        \text{``no good action"} & (\text{otherwise}).
    \end{cases}
\end{align*}
Then, the modified algorithm satisfies 
\[
\mathbb{P}[\hat{a}^\mathrm{GAI}(\tau_\delta) \text{ is a correct answer}] \geq 1 - \delta.
\]

Since $Z^\mathrm{GAI}_{s_0}(t) \leq Z_{s_0}(t)$, the empirical stopping time becomes larger than that of the thresholding problem.
However, we can nevertheless derive the same upper bound on the asymptotic sample complexity.
\begin{theorem}\label{thm:sample_complexity_GAI}
    The algorithm obtained by replacing the stopping condition of Algorithm~\ref{alg:proposed} (line~\ref{line:stopping_condition}) with $Z^\mathrm{GAI}_{s_0}(t) \geq \beta(t,\delta)$ satisfies 
    \[\limsup_{\delta \to +0} \frac{ \mathbb{E}[\tau_\delta] }{\log(1 / \delta)} \leq
    \frac{1}{d_{s_0}(\bm{\mu})}.\]
\end{theorem}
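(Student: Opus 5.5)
The plan is to reuse the proof of Theorem~\ref{thm:sample_complexity}, since the sampling rule is unchanged. Only the stopping statistic changes, from $Z_{s_0}(t)$ to $Z^{\mathrm{GAI}}_{s_0}(t)$. In the case $a_{s_0}(\bm{\mu})=$`lose' (so $V_{s_0}(\bm{\mu})<\theta$ by Assumption~\ref{assumption:no_exact_theta}), on the concentration event where $\hat{\bm{\mu}}(t)$ is close to $\bm{\mu}$ we have $V_{s_0}(\hat{\bm{\mu}}(t))<\theta$. Hence $Z^{\mathrm{GAI}}_{s_0}(t)=Z_{s_0}(t)$ there, and the argument of Appendix~\ref{append:sample_complexity} applies verbatim.

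The real work is the `win' case with $L(s_0)=$`MAX'. Here the Recursive Formula~\ref{rf:optimal_weight_win} gives $d_{s_0}(\bm{\mu})=d_{c^*}(\bm{\mu})$ with $c^*=c^*(s_0)$. It also gives $w^{s_0}_\ell(\bm{\mu})=w^{c^*}_\ell(\bm{\mu})$ on $\mathcal{D}(c^*)$ and zero weight elsewhere. By Assumption~\ref{assumption:unique_weight}, $c^*$ is well defined in the relevant sense. I will show that the single child term $Z_{c^*}(t)$ alone already grows like $t\,d_{c^*}(\bm{\mu})$, which gives $Z^{\mathrm{GAI}}_{s_0}(t)\ge Z_{c^*}(t)$.

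The key step is a continuity statement. Consider the function $g(\bm{\mu}',\bm{w})=\inf_{\bm{\lambda}\in\mathrm{Alt}_{c^*}(\bm{\mu}')}\sum_{\ell\in\mathcal{D}(c^*)}w_\ell d(\mu'_\ell,\lambda_\ell)$, with $\bm{w}$ ranging over nonnegative vectors, not necessarily normalized. I need $g$ to be continuous at $(\bm{\mu},\bm{w}^{s_0}(\bm{\mu})_{\mathcal{D}(c^*)})$. At that point its value is $d_{c^*}(\bm{\mu})=d_{s_0}(\bm{\mu})$, because the weights restricted to $\mathcal{D}(c^*)$ sum to one. By Theorem~\ref{thm:GLR} applied to the subtree, $Z_{c^*}(t)=t\,g(\hat{\bm{\mu}}(t),\bm{N}(t)/t)$.

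Continuity should follow from the recursive forms in Recursive Formula~\ref{rf:GLR_win}, which are built from sums, mins and maxes of the continuous functions $N_\ell d(\hat\mu_\ell,\theta)$. One point needs care. Since $V_{c^*}(\bm{\mu})>\theta$ strictly, the branch pattern (which leaves lie above $\theta$) is locally constant near $\bm{\mu}$ for the leaves that matter. Leaves exactly at $\theta$ contribute $0$ on both sides, so they cause no discontinuity.

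Next, follow the structure of the expected-time proof, as in \citet{pmlr-v49-garivier16a}. Fix $\varepsilon>0$ and define the good event $\mathcal{E}_T$: for all $t\ge\sqrt{T}$ (or $T^{1/4}$), $\hat{\bm{\mu}}(t)$ is $\xi$-close to $\bm{\mu}$ and $\bm{N}(t)/t$ is $\xi$-close to $\bm{w}^{s_0}(\bm{\mu})$. Lemma~\ref{lem:sampling_proportion_convergence}, in its quantitative form with forced exploration and the tracking lemma, should give $\sum_T\mathbb{P}(\mathcal{E}_T^c)<\infty$. This part is reused unchanged from Appendix~\ref{append:sample_complexity}.

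On $\mathcal{E}_T$, two facts hold. First, $V_{s_0}(\hat{\bm{\mu}}(t))\ge\theta$. Second, by continuity, $Z^{\mathrm{GAI}}_{s_0}(t)\ge Z_{c^*}(t)\ge t\,(d_{s_0}(\bm{\mu})-\varepsilon)$. Since $\beta(t,\delta)=\log(1/\delta)+O(\log\log t)+o(\log(1/\delta))$, stopping occurs before $T$ once $T(d_{s_0}(\bm{\mu})-\varepsilon)\ge\beta(T,\delta)$. The usual bound $\mathbb{E}[\tau_\delta]\le T_0(\delta)+\sum_T\mathbb{P}(\mathcal{E}_T^c)$ then yields $\limsup \mathbb{E}[\tau_\delta]/\log(1/\delta)\le 1/(d_{s_0}(\bm{\mu})-\varepsilon)$. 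Letting $\varepsilon\to0$ finishes the proof.

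The main obstacle I expect is the continuity step in the `win' case. One issue is that $\mathrm{Alt}_{c^*}$ is defined relative to $a_{s_0}(\hat{\bm{\mu}})$, so it is stable only on the event where the root answer is estimated correctly. The other is handling leaves in $\mathcal{D}(c^*)$ whose weight vanishes. I would first try to show lower semicontinuity only, which is all that is needed, directly from the recursion by induction on depth.
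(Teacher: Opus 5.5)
Your proposal is correct and follows essentially the same route as the paper. In the `lose' case $Z^{\mathrm{GAI}}_{s_0}(t)=Z_{s_0}(t)$ and the proof of Theorem~\ref{thm:sample_complexity} is reused; in the `win' case $\max_{c\in\mathcal{C}(s_0)}Z_c(t)$ is lower-bounded by a child statistic that grows like $t\,d_{s_0}(\bm{\mu})$ on the concentration event, with the rest of the argument rerun. Restricting to $c^*(s_0)$ and splitting on the true answer $a_{s_0}(\bm{\mu})$ rather than on $a_{s_0}(\hat{\bm{\mu}}(t))$ is a cleaner account of what the paper's argument actually uses. Only $c^*(s_0)$ carries tracked weight, so $d^\epsilon_{c^*(s_0)}(\bm{\mu})\to d_{s_0}(\bm{\mu})$ is the only limit that matters.
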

This result follows from the fact that, under sufficient convergence of the empirical means, the GLR statistics $Z_{s_0}(t)$ and $Z^\mathrm{GAI}_{s_0}(t)$ grow at the same rate as functions of $t$.
Since their growth behaviors are identical and the same threshold $\beta(t, \delta)$ is used, the resulting stopping times share the same asymptotic behavior.
We provide the proof of these statements in Appendix~\ref{append:GAI}.

\bibliography{paper_arxiv}

@InProceedings{pmlr-v49-garivier16a,
  title = 	 {Optimal Best Arm Identification with Fixed Confidence},
  author = 	 {Garivier, Aurélien and Kaufmann, Emilie},
  booktitle = 	 {29th Annual Conference on Learning Theory},
  pages = 	 {998--1027},
  year = 	 {2016},
  editor = 	 {Feldman, Vitaly and Rakhlin, Alexander and Shamir, Ohad},
  volume = 	 {49},
  series = 	 {Proceedings of Machine Learning Research},
  address = 	 {Columbia University, New York, New York, USA},
  month = 	 {23--26 Jun},
  publisher =    {PMLR},
  url = 	 {https://proceedings.mlr.press/v49/garivier16a.html}
}

@article{kaufmann2021,
  author  = {Emilie Kaufmann and Wouter M. Koolen},
  title   = {Mixture Martingales Revisited with Applications to Sequential Tests and Confidence Intervals},
  journal = {Journal of Machine Learning Research},
  year    = {2021},
  volume  = {22},
  number  = {246},
  pages   = {1--44},
  url     = {http://jmlr.org/papers/v22/18-798.html}
}

@inproceedings{DBLP:conf/nips/KaufmannK17,
  author       = {Emilie Kaufmann and
                  Wouter M. Koolen},
  editor       = {Isabelle Guyon and
                  Ulrike von Luxburg and
                  Samy Bengio and
                  Hanna M. Wallach and
                  Rob Fergus and
                  S. V. N. Vishwanathan and
                  Roman Garnett},
  title        = {Monte-Carlo Tree Search by Best Arm Identification},
  booktitle    = {Advances in Neural Information Processing Systems 30: Annual Conference
                  on Neural Information Processing Systems 2017, December 4-9, 2017,
                  Long Beach, CA, {USA}},
  pages        = {4897--4906},
  year         = {2017},
  url          = {https://proceedings.neurips.cc/paper/2017/hash/a6d259bfbfa2062843ef543e21d7ec8e-Abstract.html},
  bibsource    = {dblp computer science bibliography, https://dblp.org}
}

@inproceedings{DBLP:conf/nips/DegenneK19,
  author       = {R{\'{e}}my Degenne and
                  Wouter M. Koolen},
  editor       = {Hanna M. Wallach and
                  Hugo Larochelle and
                  Alina Beygelzimer and
                  Florence d'Alch{\'{e}}{-}Buc and
                  Emily B. Fox and
                  Roman Garnett},
  title        = {Pure Exploration with Multiple Correct Answers},
  booktitle    = {Advances in Neural Information Processing Systems 32: Annual Conference
                  on Neural Information Processing Systems 2019, NeurIPS 2019, December
                  8-14, 2019, Vancouver, BC, Canada},
  pages        = {14564--14573},
  year         = {2019},
  url          = {https://proceedings.neurips.cc/paper/2019/hash/60cb558c40e4f18479664069d9642d5a-Abstract.html},
  bibsource    = {dblp computer science bibliography, https://dblp.org}
}

@inproceedings{DBLP:conf/ecml/KocsisS06,
  author       = {Levente Kocsis and
                  Csaba Szepesv{\'{a}}ri},
  editor       = {Johannes F{\"{u}}rnkranz and
                  Tobias Scheffer and
                  Myra Spiliopoulou},
  title        = {Bandit Based Monte-Carlo Planning},
  booktitle    = {Machine Learning: {ECML} 2006, 17th European Conference on Machine
                  Learning, Berlin, Germany, September 18-22, 2006, Proceedings},
  series       = {Lecture Notes in Computer Science},
  volume       = {4212},
  pages        = {282--293},
  publisher    = {Springer},
  year         = {2006},
  url          = {https://doi.org/10.1007/11871842\_29},
  doi          = {10.1007/11871842\_29},
  bibsource    = {dblp computer science bibliography, https://dblp.org}
}

@inproceedings{DBLP:conf/icml/LocatelliGC16,
  author       = {Andrea Locatelli and
                  Maurilio Gutzeit and
                  Alexandra Carpentier},
  editor       = {Maria{-}Florina Balcan and
                  Kilian Q. Weinberger},
  title        = {An optimal algorithm for the Thresholding Bandit Problem},
  booktitle    = {Proceedings of the 33nd International Conference on Machine Learning,
                  {ICML} 2016, New York City, NY, USA, June 19-24, 2016},
  series       = {{JMLR} Workshop and Conference Proceedings},
  volume       = {48},
  pages        = {1690--1698},
  publisher    = {JMLR.org},
  year         = {2016},
  url          = {http://proceedings.mlr.press/v48/locatelli16.html},
  bibsource    = {dblp computer science bibliography, https://dblp.org}
}

@article{DBLP:journals/ml/KanoHSMNS19,
  author       = {Hideaki Kano and
                  Junya Honda and
                  Kentaro Sakamaki and
                  Kentaro Matsuura and
                  Atsuyoshi Nakamura and
                  Masashi Sugiyama},
  title        = {Good arm identification via bandit feedback},
  journal      = {Mach. Learn.},
  volume       = {108},
  number       = {5},
  pages        = {721--745},
  year         = {2019},
  url          = {https://doi.org/10.1007/s10994-019-05784-4},
  doi          = {10.1007/S10994-019-05784-4},
  bibsource    = {dblp computer science bibliography, https://dblp.org}
}

@article{DBLP:journals/jair/FeldmanD14,
  author       = {Zohar Feldman and
                  Carmel Domshlak},
  title        = {Simple Regret Optimization in Online Planning for Markov Decision
                  Processes},
  journal      = {J. Artif. Intell. Res.},
  volume       = {51},
  pages        = {165--205},
  year         = {2014},
  url          = {https://doi.org/10.1613/jair.4432},
  doi          = {10.1613/JAIR.4432},
  bibsource    = {dblp computer science bibliography, https://dblp.org}
}

@inproceedings{DBLP:conf/uai/CoquelinM07,
  author       = {Pierre{-}Arnaud Coquelin and
                  R{\'{e}}mi Munos},
  editor       = {Ronald Parr and
                  Linda C. van der Gaag},
  title        = {Bandit Algorithms for Tree Search},
  booktitle    = {{UAI} 2007, Proceedings of the Twenty-Third Conference on Uncertainty
                  in Artificial Intelligence, Vancouver, BC, Canada, July 19-22, 2007},
  pages        = {67--74},
  publisher    = {{AUAI} Press},
  year         = {2007},
  url          = {https://dl.acm.org/doi/10.5555/3020488.3020497},
  doi          = {10.5555/3020488.3020497},
  bibsource    = {dblp computer science bibliography, https://dblp.org}
}

@inproceedings{DBLP:conf/cg/Coulom06,
  author       = {R{\'{e}}mi Coulom},
  editor       = {H. Jaap van den Herik and
                  Paolo Ciancarini and
                  H. H. L. M. Donkers},
  title        = {Efficient Selectivity and Backup Operators in Monte-Carlo Tree Search},
  booktitle    = {Computers and Games, 5th International Conference, {CG} 2006, Turin,
                  Italy, May 29-31, 2006. Revised Papers},
  series       = {Lecture Notes in Computer Science},
  volume       = {4630},
  pages        = {72--83},
  publisher    = {Springer},
  year         = {2006},
  url          = {https://doi.org/10.1007/978-3-540-75538-8\_7},
  doi          = {10.1007/978-3-540-75538-8\_7},
  bibsource    = {dblp computer science bibliography, https://dblp.org}
}

@article{DBLP:journals/nature/SilverHMGSDSAPL16,
  author       = {David Silver and
                  Aja Huang and
                  Chris J. Maddison and
                  Arthur Guez and
                  Laurent Sifre and
                  George van den Driessche and
                  Julian Schrittwieser and
                  Ioannis Antonoglou and
                  Vedavyas Panneershelvam and
                  Marc Lanctot and
                  Sander Dieleman and
                  Dominik Grewe and
                  John Nham and
                  Nal Kalchbrenner and
                  Ilya Sutskever and
                  Timothy P. Lillicrap and
                  Madeleine Leach and
                  Koray Kavukcuoglu and
                  Thore Graepel and
                  Demis Hassabis},
  title        = {Mastering the game of Go with deep neural networks and tree search},
  journal      = {Nat.},
  volume       = {529},
  number       = {7587},
  pages        = {484--489},
  year         = {2016},
  url          = {https://doi.org/10.1038/nature16961},
  doi          = {10.1038/NATURE16961},
  bibsource    = {dblp computer science bibliography, https://dblp.org}
}

@inproceedings{DBLP:conf/icml/WanFWM00024,
  author       = {Ziyu Wan and
                  Xidong Feng and
                  Muning Wen and
                  Stephen Marcus McAleer and
                  Ying Wen and
                  Weinan Zhang and
                  Jun Wang},
  title        = {AlphaZero-Like Tree-Search can Guide Large Language Model Decoding
                  and Training},
  booktitle    = {Forty-first International Conference on Machine Learning, {ICML} 2024,
                  Vienna, Austria, July 21-27, 2024},
  publisher    = {OpenReview.net},
  year         = {2024},
  url          = {https://openreview.net/forum?id=C4OpREezgj},
  bibsource    = {dblp computer science bibliography, https://dblp.org}
}

@article{DBLP:journals/ior/ChangFHM05,
  author       = {Hyeong Soo Chang and
                  Michael C. Fu and
                  Jiaqiao Hu and
                  Steven I. Marcus},
  title        = {An Adaptive Sampling Algorithm for Solving Markov Decision Processes},
  journal      = {Oper. Res.},
  volume       = {53},
  number       = {1},
  pages        = {126--139},
  year         = {2005},
  url          = {https://doi.org/10.1287/opre.1040.0145},
  doi          = {10.1287/OPRE.1040.0145},
  bibsource    = {dblp computer science bibliography, https://dblp.org}
}

@inproceedings{ijcai2022p1,
  title     = {Anytime Capacity Expansion in Medical Residency Match by Monte Carlo Tree Search},
  author    = {Abe, Kenshi and Komiyama, Junpei and Iwasaki, Atsushi},
  booktitle = {Proceedings of the Thirty-First International Joint Conference on
               Artificial Intelligence, {IJCAI-22}},
  publisher = {International Joint Conferences on Artificial Intelligence Organization},
  editor    = {Lud De Raedt},
  pages     = {3--9},
  year      = {2022},
  month     = {7},
  note      = {Main Track},
  doi       = {10.24963/ijcai.2022/1},
  url       = {https://doi.org/10.24963/ijcai.2022/1},
}

@inproceedings{DBLP:conf/nips/YaoYZS00N23,
  author       = {Shunyu Yao and
                  Dian Yu and
                  Jeffrey Zhao and
                  Izhak Shafran and
                  Tom Griffiths and
                  Yuan Cao and
                  Karthik Narasimhan},
  editor       = {Alice Oh and
                  Tristan Naumann and
                  Amir Globerson and
                  Kate Saenko and
                  Moritz Hardt and
                  Sergey Levine},
  title        = {Tree of Thoughts: Deliberate Problem Solving with Large Language Models},
  booktitle    = {Advances in Neural Information Processing Systems 36: Annual Conference
                  on Neural Information Processing Systems 2023, NeurIPS 2023, New Orleans,
                  LA, USA, December 10 - 16, 2023},
  year         = {2023},
  url          = {http://papers.nips.cc/paper\_files/paper/2023/hash/271db9922b8d1f4dd7aaef84ed5ac703-Abstract-Conference.html},
  bibsource    = {dblp computer science bibliography, https://dblp.org}
}

@inproceedings{DBLP:conf/alt/HuangAS017,
  author       = {Ruitong Huang and
                  Mohammad M. Ajallooeian and
                  Csaba Szepesv{\'{a}}ri and
                  Martin M{\"{u}}ller},
  editor       = {Steve Hanneke and
                  Lev Reyzin},
  title        = {Structured Best Arm Identification with Fixed Confidence},
  booktitle    = {International Conference on Algorithmic Learning Theory, {ALT} 2017,
                  15-17 October 2017, Kyoto University, Kyoto, Japan},
  series       = {Proceedings of Machine Learning Research},
  volume       = {76},
  pages        = {593--616},
  publisher    = {{PMLR}},
  year         = {2017},
  url          = {http://proceedings.mlr.press/v76/huang17a.html},
  bibsource    = {dblp computer science bibliography, https://dblp.org}
}

@ARTICLE{graphnn_mcts,
  author={Xing, Zhihao and Tu, Shikui},
  journal={IEEE Access}, 
  title={A Graph Neural Network Assisted Monte Carlo Tree Search Approach to Traveling Salesman Problem}, 
  year={2020},
  volume={8},
  number={},
  pages={108418-108428},
  doi={10.1109/ACCESS.2020.3000236}}

@inproceedings{DBLP:conf/aaai/KhalilVD22,
  author       = {Elias B. Khalil and
                  Pashootan Vaezipoor and
                  Bistra Dilkina},
  title        = {Finding Backdoors to Integer Programs: {A} Monte Carlo Tree Search
                  Framework},
  booktitle    = {Thirty-Sixth {AAAI} Conference on Artificial Intelligence, {AAAI}
                  2022, Thirty-Fourth Conference on Innovative Applications of Artificial
                  Intelligence, {IAAI} 2022, The Twelveth Symposium on Educational Advances
                  in Artificial Intelligence, {EAAI} 2022 Virtual Event, February 22
                  - March 1, 2022},
  pages        = {3786--3795},
  publisher    = {{AAAI} Press},
  year         = {2022},
  url          = {https://doi.org/10.1609/aaai.v36i4.20293},
  doi          = {10.1609/AAAI.V36I4.20293},
  bibsource    = {dblp computer science bibliography, https://dblp.org}
}

@inproceedings{DBLP:conf/alt/BubeckMS09,
  author       = {S{\'{e}}bastien Bubeck and
                  R{\'{e}}mi Munos and
                  Gilles Stoltz},
  editor       = {Ricard Gavald{\`{a}} and
                  G{\'{a}}bor Lugosi and
                  Thomas Zeugmann and
                  Sandra Zilles},
  title        = {Pure Exploration in Multi-armed Bandits Problems},
  booktitle    = {Algorithmic Learning Theory, 20th International Conference, {ALT}
                  2009, Porto, Portugal, October 3-5, 2009. Proceedings},
  series       = {Lecture Notes in Computer Science},
  volume       = {5809},
  pages        = {23--37},
  publisher    = {Springer},
  year         = {2009},
  url          = {https://doi.org/10.1007/978-3-642-04414-4\_7},
  doi          = {10.1007/978-3-642-04414-4\_7},
  bibsource    = {dblp computer science bibliography, https://dblp.org}
}

@inproceedings{DBLP:conf/iclr/YangAY21,
  author       = {Xiufeng Yang and
                  Tanuj Kr Aasawat and
                  Kazuki Yoshizoe},
  title        = {Practical Massively Parallel Monte-Carlo Tree Search Applied to Molecular
                  Design},
  booktitle    = {9th International Conference on Learning Representations, {ICLR} 2021,
                  Virtual Event, Austria, May 3-7, 2021},
  publisher    = {OpenReview.net},
  year         = {2021},
  url          = {https://openreview.net/forum?id=6k7VdojAIK},
  bibsource    = {dblp computer science bibliography, https://dblp.org}
}

@incollection{DBLP:reference/ecgg/Yoshizoe019,
  author       = {Kazuki Yoshizoe and
                  Martin M{\"{u}}ller},
  editor       = {Newton Lee},
  title        = {Computer Go},
  booktitle    = {Encyclopedia of Computer Graphics and Games},
  publisher    = {Springer},
  year         = {2019},
  url          = {https://doi.org/10.1007/978-3-319-08234-9\_20-1},
  doi          = {10.1007/978-3-319-08234-9\_20-1},
  bibsource    = {dblp computer science bibliography, https://dblp.org}
}

@inproceedings{DBLP:conf/wsc/GlynnJ04,
  author       = {Peter W. Glynn and
                  Sandeep Juneja},
  title        = {A Large Deviations Perspective on Ordinal Optimization},
  booktitle    = {Proceedings of the 36th conference on Winter simulation, Washington,
                  DC, USA, December 5-8, 2004},
  pages        = {577--585},
  publisher    = {{IEEE} Computer Society},
  year         = {2004},
  url          = {http://www.informs-sim.org/wsc04papers/070.pdf},
  bibsource    = {dblp computer science bibliography, https://dblp.org}
}

@book{bechhofer1968sequential,
  title={Sequential Identification and Ranking Procedures: With Special Reference to Koopman-Darmois Populations},
  author={Bechhofer, R.E. and Kiefer, J. and Sobel, M.},
  series={Statistical Research Monographs},
  url={https://books.google.ae/books?id=bE04AAAAIAAJ},
  year={1968},
  publisher={University of Chicago Press}
}

@inproceedings{DBLP:conf/aistats/TabataKNK23,
  author       = {Koji Tabata and
                  Junpei Komiyama and
                  Atsuyoshi Nakamura and
                  Tamiki Komatsuzaki},
  editor       = {Francisco J. R. Ruiz and
                  Jennifer G. Dy and
                  Jan{-}Willem van de Meent},
  title        = {Posterior Tracking Algorithm for Classification Bandits},
  booktitle    = {International Conference on Artificial Intelligence and Statistics,
                  25-27 April 2023, Palau de Congressos, Valencia, Spain},
  series       = {Proceedings of Machine Learning Research},
  volume       = {206},
  pages        = {10994--11022},
  publisher    = {{PMLR}},
  year         = {2023},
  url          = {https://proceedings.mlr.press/v206/tabata23a.html},
  bibsource    = {dblp computer science bibliography, https://dblp.org}
}

@inproceedings{DBLP:conf/pakdd/TsaiTL25,
  author       = {Yun{-}Da Tsai and
                  Tzu{-}Hsien Tsai and
                  Shou{-}De Lin},
  editor       = {Xintao Wu and
                  Myra Spiliopoulou and
                  Can Wang and
                  Vipin Kumar and
                  Longbing Cao and
                  Yanqiu Wu and
                  Yu Yao and
                  Zhangkai Wu},
  title        = {Differentiable Good Arm Identification},
  booktitle    = {Advances in Knowledge Discovery and Data Mining - 29th Pacific-Asia
                  Conference on Knowledge Discovery and Data Mining, {PAKDD} 2025, Sydney,
                  NSW, Australia, June 10-13, 2025, Proceedings, Part {I}},
  series       = {Lecture Notes in Computer Science},
  volume       = {15870},
  pages        = {253--264},
  publisher    = {Springer},
  year         = {2025},
  url          = {https://doi.org/10.1007/978-981-96-8170-9\_20},
  doi          = {10.1007/978-981-96-8170-9\_20},
  bibsource    = {dblp computer science bibliography, https://dblp.org}
}

@ARTICLE{teraoka2014,
author={Kazuki Teraoka and Kohei Hatano and Eiji Takimoto},
journal={IEICE TRANSACTIONS on Information},
title={Efficient Sampling Method for Monte Carlo Tree Search Problem},
year={2014},
volume={E97-D},
number={3},
pages={392-398},
doi={10.1587/transinf.E97.D.392},
ISSN={1745-1361},
month={March},}

@InProceedings{pmlr-v129-leurent20a,
  title = 	 {Monte-Carlo Graph Search: the Value of Merging Similar States},
  author =       {Leurent, Edouard and Maillard, Odalric-Ambrym},
  booktitle = 	 {Proceedings of The 12th Asian Conference on Machine Learning},
  pages = 	 {577--592},
  year = 	 {2020},
  editor = 	 {Pan, Sinno Jialin and Sugiyama, Masashi},
  volume = 	 {129},
  series = 	 {Proceedings of Machine Learning Research},
  month = 	 {18--20 Nov},
  publisher =    {PMLR},
  url = 	 {https://proceedings.mlr.press/v129/leurent20a.html}
}

@inproceedings{tolpin2012,
author = {Tolpin, David and Shimony, Solomon Eyal},
title = {MCTS based on simple regret},
year = {2012},
publisher = {AAAI Press},
booktitle = {Proceedings of the Twenty-Sixth AAAI Conference on Artificial Intelligence},
pages = {570–576},
numpages = {7},
location = {Toronto, Ontario, Canada},
series = {AAAI'12}
}

@article{Graeme2018,
author = {Best, Graeme and Cliff, Oliver and Patten, Timothy and Mettu, Ramgopal and Fitch, Robert},
year = {2018},
month = {03},
pages = {027836491875592},
title = {Dec-MCTS: Decentralized planning for multi-robot active perception},
volume = {38},
journal = {The International Journal of Robotics Research},
doi = {10.1177/0278364918755924}
}

@inproceedings{NEURIPS2020_0d85eb24,
 author = {Jonsson, Anders and Kaufmann, Emilie and Menard, Pierre and Darwiche Domingues, Omar and Leurent, Edouard and Valko, Michal},
 booktitle = {Advances in Neural Information Processing Systems},
 editor = {H. Larochelle and M. Ranzato and R. Hadsell and M.F. Balcan and H. Lin},
 pages = {1253--1263},
 publisher = {Curran Associates, Inc.},
 title = {Planning in Markov Decision Processes with Gap-Dependent Sample Complexity},
 url = {https://proceedings.neurips.cc/paper_files/paper/2020/file/0d85eb24e2add96ff1a7021f83c1abc9-Paper.pdf},
 volume = {33},
 year = {2020}
}

@ARTICLE{pepels2014,
  author={Pepels, Tom and Winands, Mark H. M. and Lanctot, Marc},
  journal={IEEE Transactions on Computational Intelligence and AI in Games}, 
  title={Real-Time Monte Carlo Tree Search in Ms Pac-Man}, 
  year={2014},
  volume={6},
  number={3},
  pages={245-257},
  doi={10.1109/TCIAIG.2013.2291577}}

@article{knuth1975alphabeta,
  author = {Knuth, Donald E. and Moore, Ronald W.},
  journal = {Artificial Intelligence},
  pages = {293--326},
  title = {An Analysis of Alpha-Beta Pruning},
  volume = 6,
  year = 1975
}
\bibliographystyle{plainnat}

\newpage
\appendix
\onecolumn

\section{Symmetric Forms of Recursive Formulas}\label{append:symmetric_omitted_rf}

Recursive Formula~\ref{rf:optimal_weight_lose} is a symmetric form of Recursive Formula~\ref{rf:optimal_weight_win}, which solves the optimization problem of (\ref{optimalweight_sub}) for a subtree rooted at $s$ in the case with $a_{s_0}(\bm{\mu})=$`lose'.
\addtocounter{algorithm}{1}
\captionsetup[algorithm]{name=Recursive Formula}
\begin{algorithm}[tb]
\caption{Calculation of $\bm{w}^s(\bm{\mu})$ (2/2)}\label{rf:optimal_weight_lose}
Case with $a_{s_0}(\bm{\mu})=$`lose'
\begin{align*}
   d_s(\bm{\mu})
   =&\begin{cases}
   d(\mu_s,\theta) & (s\in \mathcal{L}(T), \mu_s< \theta)\\
        \displaystyle\max_{c\in \mathcal{C}(s)}d_c(\bm{\mu}) & (L(s)=\text{`MIN'})\\
        \displaystyle\frac{1}{\displaystyle\sum_{c\in \mathcal{C}(s)}1/d_c(\bm{\mu})} & \left( \begin{aligned}
            &L(s)=\text{`MAX'},\\& d_c(\bm{\mu})>0 \text{ for all } c\in \mathcal{C}(s)
        \end{aligned} \right)\\
        0 &
     (\text{otherwise})
   \end{cases}\\
    w^s_{\ell}(\bm{\mu})
    =&\begin{cases}
    1 & (s\in \mathcal{L}(T))\\
    w^{c^*(s)}_\ell(\bm{\mu}) & \left( \begin{aligned}
            &L(s)=\text{`MIN'},\\& d_s(\bm{\mu}) > 0,\ 
            \ell \in \mathcal{D}(c^*(s))
        \end{aligned}\right)\\
        0 & \left( \begin{aligned}
            &L(s)=\text{`MIN'},\\& d_s(\bm{\mu}) > 0,\
            \ell \in \mathcal{D}(c),\\& c \in \mathcal{C}(s) \setminus \{ c^*(s) \}
        \end{aligned}\right)\\
        \displaystyle\frac{w^c_\ell(\bm{\mu}) / d_c(\bm{\mu})}{\displaystyle\sum_{c' \in \mathcal{C}(s)} 1 / d_{c'}(\bm{\mu})}  & \left( \begin{aligned}
            &L(s)=\text{`MAX'},\\& d_s(\bm{\mu}) > 0,\ 
            \ell \in \mathcal{D}(c),\\& c \in \mathcal{C}(s)
        \end{aligned}\right)\\
        \begin{aligned}
        \text{any }& w_{\ell}^s \text{ with } \\
        &\bm{w}^s\in \Delta_{s}
        \end{aligned}& (\text{otherwise}) 
    \end{cases}
\end{align*}
\end{algorithm}

Recursive Formula~\ref{rf:GLR_lose} is a symmetric form of Recursive Formula~\ref{rf:GLR_win}, which calculates the GLR statistic $Z_s(t)$ for any node $s$ in case with $a_{s_0}(\hat{\bm{\mu}}(t))=$`lose'.
\begin{algorithm}[tb]
    \caption{GLR calculation (2/2)}\label{rf:GLR_lose}
    Case with $a_{s_0}(\hat{\bm{\mu}}(t))=$`lose'
    \begin{align*}
       Z_s(t)
       =\begin{cases}
         N_s(t)d(\hat{\mu}_s(t),\theta) & (s\in \mathcal{L}(T), \hat{\mu}_s(t)< \theta)\\
            \displaystyle\sum_{c\in \mathcal{C}(s)}Z_c(t) & (L(s)=\text{`MIN'})\\
           \displaystyle\min_{c\in \mathcal{C}(s)}Z_c(t)  & (L(s)=\text{`MAX'})\\
            0 & (\text{otherwise})
       \end{cases}
    \end{align*}
\end{algorithm}

\section{Proof of Theorem~\ref{thm:optimal_weight} (Calculation of Optimal Weights)}\label{append:optimal_weight}

\begin{proof}
  We first prove the case with $a_{s_0}(\bm{\mu})=$`win'.

  Consider the case that node $s$ is a MAX node and $A_s(\bm{\mu})= \emptyset$, or the case that node $s$ is a MIN node and $A_s(\bm{\mu})\neq\mathcal{C}(s)$.
In these cases, $\bm{\mu}_{\mathcal{D}(s)}\in \mathrm{Alt}_{s}(\bm{\mu})$. 
Thus,
\begin{align*}
  \inf_{\bm{\lambda}\in \mathrm{Alt}_{s}(\bm{\mu})}\sum_{\ell\in\mathcal{D}(s)}w_{\ell}d(\mu_{\ell},\lambda_{\ell})=\sum_{\ell\in\mathcal{D}(s)}w_{\ell}\times 0 = 0
\end{align*} holds, which means any $\bm{w}\in \Delta_{\mathcal{D}(s)}$ can be $\bm{w}^s$ defined by (\ref{optimalweight_sub}). That solution also coincides with $(w^s_{\ell})_{\ell\in \mathcal{D}(s)}$ calculated by Recursive Formula~\ref{rf:optimal_weight_win}.
In this case,
\begin{align*}
  d_s=0=\max_{\bm{w}\in\Delta_{\mathcal{D}(s)}}\displaystyle\inf_{\bm{\lambda}\in \mathrm{Alt}_{s}(\bm{\mu})}\sum_{\ell\in\mathcal{D}(s)}w_{\ell}d(\mu_{\ell},\lambda_{\ell})
  \end{align*}
holds.
  
In the remaining cases with $a_{s_0}(\bm{\mu})=$`win', we prove by mathematical induction on the height of the tree node $s$.

  Consider a height-$0$ tree node $s$. 
  In this case, node $s$ is a leaf node itself, thus $w_s=1$. In this case, 
 \begin{align*}
    d_s(\bm{\mu})=\max_{\bm{w}\in\Delta_{\mathcal{D}(s)}}\inf_{\bm{\lambda}\in \mathrm{Alt}_{s}(\bm{\mu})}\sum_{\ell\in\mathcal{D}(s)}w_{\ell}d(\mu_{\ell},\lambda_{\ell})=
\begin{cases}
d(\mu_{s},\theta) & (\mu_i\ge 0)\\
d(\mu_{s},\mu_{s})=0& (\mu_i< 0),
\end{cases}
\end{align*}
which coincides with weights $w^s_{s}(\bm{\mu})$ and $d_s(\bm{\mu})$ calculated by Recursive Formula~\ref{rf:optimal_weight_win}.

Assume that $\bm{w}^s$ is calculated by Recursive Formula~\ref{rf:optimal_weight_win} and
\begin{align*}
  d_s=\max_{\bm{w}\in\Delta_{\mathcal{D}(s)}}\displaystyle\inf_{\bm{\lambda}\in \mathrm{Alt}_{s}(\bm{\mu})}\sum_{\ell\in\mathcal{D}(s)}w_{\ell}d(\mu_{\ell},\lambda_{\ell})
\end{align*}
  holds for the tree nodes $s$ with the height at most $k$.
  Consider a height-$(k+1)$ tree node $s$.
  Assume that node $s$ is a MAX node and $A_s(\bm{\mu})\neq\emptyset$.
  Then,
  \begin{align*}
    \inf_{\bm{\lambda}\in \mathrm{Alt}_{s}(\bm{\mu})}\sum_{\ell\in\mathcal{D}(s)}w_{\ell}d(\mu_{\ell},\lambda_{\ell})=\sum_{c\in \mathcal{C}(s),V_c(\bm{\mu})\ge \theta}W_c\inf_{\bm{\lambda}\in \mathrm{Alt}_{c}(\bm{\mu})}\sum_{\ell\in \mathcal{D}(c)}w'_{\ell}d(\mu_{\ell},\lambda_{\ell})
\end{align*}
  for any $\bm{w}\in \Delta_{\mathcal{D}(s)}$, where $W_c=\sum_{\ell\in \mathcal{D}(c)}w_{\ell}$ and $w'_{\ell}=w_{\ell}/W_c$. Let optimal $(W_c)_{c\in\mathcal{C}(s)}$ denote $(W^*_c)_{c\in\mathcal{C}(s)}$. Thus,
  \begin{align*}
    \max_{\bm{w}\in \Delta_{\mathcal{D}(s)}}\inf_{\bm{\lambda}\in \mathrm{Alt}_{s}(\bm{\mu})}\sum_{\ell\in\mathcal{D}(s)}w_{\ell}d(\mu_{\ell},\lambda_{\ell})=&\max_{\bm{W}\in \Delta_{\mathcal{C}(s)}}\sum_{c\in \mathcal{C}(s),V_c(\bm{\mu})\ge \theta}W_c\max_{\bm{w}'\in \Delta_{\mathcal{D}(c)}}\inf_{\bm{\lambda}\in \mathrm{Alt}_{c}(\bm{\mu})}\sum_{\ell\in \mathcal{D}(c)}w'_{\ell}d(\mu_{\ell},\lambda_{\ell})\\
    =&\max_{\bm{W}\in \Delta_{\mathcal{C}(s)}}\sum_{c\in \mathcal{C}(s),V_c(\bm{\mu})\ge \theta}W_cd_c\\
     =&\sum_{c\in \mathcal{C}(s),V_c(\bm{\mu})\ge \theta}W^*_cd_c
\end{align*}
where
\begin{align*}
  W^*_{c}=\begin{cases} 1 & \left(c=\displaystyle\argmax_{c'\in A_s(\bm{\mu})}d_{c'}\right)\\
  0 & (\text{otherwise}).
  \end{cases}
\end{align*}
Therefore,
\begin{align*}
  w^s_{\ell}=\begin{cases}
  w^c_{\ell} & (c=\argmax_{c'\in A_s(\bm{\mu})}d_{c'},\ell\in \mathcal{D}(c))\\
  0 &  (\text{otherwise}),
 \end{cases}
\end{align*}
which coincides with weights $w^s_{\ell}$ ($\ell\in \mathcal{D}(s)$) calculated by Recursive Formula~\ref{rf:optimal_weight_win}. In this case,
\begin{align*} d_s=\max_{c\in \mathcal{C}(s)}d_c=\sum_{c\in \mathcal{C}(s),V_c\ge \theta}W^*_cd_c=\max_{\bm{w}\in\Delta_{\mathcal{D}(s)}}\displaystyle\inf_{\bm{\lambda}\in \mathrm{Alt}_{s}(\bm{\mu})}\sum_{\ell\in\mathcal{D}(s)}w_{\ell}d(\mu_{\ell},\lambda_{\ell})
  \end{align*}
holds.

Assume that node $s$ is a MIN node and $A_s(\bm{\mu})=\mathcal{C}(s)$.
Then, 
\begin{align*}
  \inf_{\bm{\lambda}\in \mathrm{Alt}_{s}(\bm{\mu})}\sum_{\ell\in\mathcal{D}(s)}w_{\ell}d(\mu_{\ell},\lambda_{\ell})=\min_{\ell\in \mathcal{C}(s)}W_c\inf_{\bm{\lambda}\in \mathrm{Alt}_{c}(\bm{\mu})}\sum_{\ell\in \mathcal{D}(c)}w'_{\ell}d(\mu_{\ell},\lambda_{\ell})
\end{align*}
for any $\bm{w}\in \Delta_{\mathcal{D}(s)}$, where $W_c=\sum_{\ell\in \mathcal{D}(c)}w_{\ell}$ and $w'_{\ell}=w_{\ell}/W_c$. Thus,
\begin{align*}
  \max_{\bm{w}\in\Delta_{\mathcal{D}(s)}}\inf_{\bm{\lambda}\in \mathrm{Alt}_{s}(\bm{\mu})}\sum_{\ell\in\mathcal{D}(s)}w_{\ell}d(\mu_{\ell},\lambda_{\ell})=&\max_{\bm{W}\in \Delta_{\mathcal{C}(s)}}\min_{c\in \mathcal{C}(s)}W_c\max_{\bm{w}'\in \Delta_{\mathcal{D}(c)}}\inf_{\bm{\lambda}\in \mathrm{Alt}_{c}(\bm{\mu})}\sum_{\ell\in \mathcal{D}(c)}w'_{\ell}d(\mu_{\ell},\lambda_{\ell})\\
  =&\max_{\bm{W}\in \Delta_{\mathcal{C}(s)}}\min_{c\in \mathcal{C}(s)}W_cd_c
\end{align*}
holds. Let optimal $(W_c)_{c\in\mathcal{C}(s)}$ denote $(W^*_c)_{c\in\mathcal{C}(s)}$. Then,
\begin{align*}
  W^*_c=\frac{1/d_c}{\sum_{c'\in\mathcal{C}(s)}1/d_{c'}}
\end{align*}
holds, which means
\begin{align*}
w^s_{\ell}=\frac{1/d_c}{\sum_{c'\in\mathcal{C}(s)}1/d_{c'}}w^c_{\ell}
\end{align*}
for $c \in \mathcal{C}(s)$ and $\ell\in \mathcal{D}(c)$.
These weights $w^s_{\ell}$ coincide with the weights $w^s_{\ell}$ ($\ell\in \mathcal{D}(s)$) calculated by Recursive Formula~\ref{rf:optimal_weight_win}.
In this case,
\begin{align*}
  d_s=\frac{1}{\sum_{c'\in \mathcal{C}(s)}1/d_{c'}}=W^*_cd_c=\max_{\bm{w}\in\Delta_{\mathcal{D}(s)}}\displaystyle\inf_{\bm{\lambda}\in \mathrm{Alt}_{s}(\bm{\mu})}\sum_{\ell\in\mathcal{D}(s)}w_{\ell}d(\mu_{\ell},\lambda_{\ell})
  \end{align*}
holds for any $c\in \mathcal{C}(s)$.

Next, we  prove the case with $a_{s_0}(\bm{\mu})=$`lose'.

  Consider the case that node $s$ is a MIN node and $A_s(\bm{\mu})= \mathcal{C}(s)$, or the case that
node $s$ is a MAX node and $A_s(\bm{\mu})\neq\emptyset$.
In these cases, $\bm{\mu}_{\mathcal{D}(s)}\in \mathrm{Alt}_{s}(\bm{\mu})$. Thus,
\begin{align*}
  \inf_{\bm{\lambda}\in \mathrm{Alt}_{s}(\bm{\mu})}\sum_{\ell\in\mathcal{D}(s)}w_{\ell}d(\mu_{\ell},\lambda_{\ell})=\sum_{\ell\in\mathcal{D}(s)}w_{\ell}\times 0 = 0
\end{align*} holds, which means any $\bm{w}\in \Delta_{\mathcal{D}(s)}$ can be $\bm{w}^s$ defined by (\ref{optimalweight_sub}). That solution also coincides with $(w^s_{\ell})_{\ell\in \mathcal{D}(s)}$ calculated by Recursive Formula~\ref{rf:optimal_weight_lose}.
In this case,
\begin{align*}
  d_s=0=\max_{\bm{w}\in\Delta_{\mathcal{D}(s)}}\displaystyle\inf_{\bm{\lambda}\in \mathrm{Alt}_{s}(\bm{\mu})}\sum_{\ell\in\mathcal{D}(s)}w_{\ell}d(\mu_{\ell},\lambda_{\ell})
  \end{align*}
holds.
  
In the remaining cases with $a_{s_0}(\bm{\mu})=$`lose', we prove by mathematical induction on the height of the tree node $s$.

 Consider a height-$0$ tree node $s$. 
  In this case, node $s$ is a leaf node itself, thus $w_s=1$. In this case, 
 \begin{align*}
    d_s(\bm{\mu})=\max_{\bm{w}\in\Delta_{\mathcal{D}(s)}}\inf_{\bm{\lambda}\in \mathrm{Alt}_{s}(\bm{\mu})}\sum_{\ell\in\mathcal{D}(s)}w_{\ell}d(\mu_{\ell},\lambda_{\ell})=
\begin{cases}
d(\mu_{s},\mu_s)=0 & (\mu_i\ge 0)\\
d(\mu_{s},\theta)& (\mu_i< 0),
\end{cases}
\end{align*}
which coincides with weights $w^s_{s}(\bm{\mu})$ and $d_s(\bm{\mu})$ calculated by Recursive Formula~\ref{rf:optimal_weight_lose}.

Assume that $\bm{w}^s$ is calculated by Recursive Formula~\ref{rf:optimal_weight_lose} and
\begin{align*}
  d_s=\max_{\bm{w}\in\Delta_{\mathcal{D}(s)}}\displaystyle\inf_{\bm{\lambda}\in \mathrm{Alt}_{s}(\bm{\mu})}\sum_{\ell\in\mathcal{D}(s)}w_{\ell}d(\mu_{\ell},\lambda_{\ell})
\end{align*}
  holds for the tree nodes $s$ with the height at most $k$.
  Consider a height-$(k+1)$ tree node $s$.
  Assume that node $s$ is a MIN node and $A_s(\bm{\mu})\neq\mathcal{C}(s)$.
  Then,
  \begin{align*}
    \inf_{\bm{\lambda}\in \mathrm{Alt}_{s}(\bm{\mu})}\sum_{\ell\in\mathcal{D}(s)}w_{\ell}d(\mu_{\ell},\lambda_{\ell})=\sum_{c\in \mathcal{C}(s),V_c(\bm{\mu})< \theta}W_c\inf_{\bm{\lambda}\in \mathrm{Alt}_{c}(\bm{\mu})}\sum_{\ell\in \mathcal{D}(c)}w'_{\ell}d(\mu_{\ell},\lambda_{\ell})
\end{align*}
  for any $\bm{w}\in \Delta_{\mathcal{D}(s)}$, where $W_c=\sum_{\ell\in \mathcal{D}(c)}w_{\ell}$ and $w'_{\ell}=w_{\ell}/W_c$. Let optimal $(W_c)_{c\in\mathcal{C}(s)}$ denote $(W^*_c)_{c\in\mathcal{C}(s)}$. Thus,
  \begin{align*}
    \max_{\bm{w}\in \Delta_{\mathcal{D}(s)}}\inf_{\bm{\lambda}\in \mathrm{Alt}_{s}(\bm{\mu})}\sum_{\ell\in\mathcal{D}(s)}w_{\ell}d(\mu_{\ell},\lambda_{\ell})=&\max_{\bm{W}\in \Delta_{\mathcal{C}(s)}}\sum_{c\in \mathcal{C}(s),V_c(\bm{\mu})< \theta}W_c\max_{\bm{w}'\in \Delta_{\mathcal{D}(c)}}\inf_{\bm{\lambda}\in \mathrm{Alt}_{c}(\bm{\mu})}\sum_{\ell\in \mathcal{D}(c)}w'_{\ell}d(\mu_{\ell},\lambda_{\ell})\\
    =&\max_{\bm{W}\in \Delta_{\mathcal{C}(s)}}\sum_{c\in \mathcal{C}(s),V_c(\bm{\mu})< \theta}W_cd_c\\
     =&\sum_{c\in \mathcal{C}(s),V_c(\bm{\mu})< \theta}W^*_cd_c
\end{align*}
where
\begin{align*}
  W^*_{c}=\begin{cases} 1 & \left(c=\displaystyle\argmax_{c'\in \mathcal{C}(s)\setminus A_s(\bm{\mu})}d_{c'}\right)\\
  0 & (\text{otherwise}).
  \end{cases}
\end{align*}
Therefore,
\begin{align*}
  w^s_{\ell}=\begin{cases}
  w^c_{\ell} & (c=\argmax_{c'\in \mathcal{C}(s)\setminus A_s(\bm{\mu})}d_{c'},\ell\in \mathcal{D}(c))\\
  0 &  (\text{otherwise}),
 \end{cases}
\end{align*}
which coincides with weights $w^s_{\ell}$ ($\ell\in \mathcal{D}(s)$) calculated by Recursive Formula~\ref{rf:optimal_weight_lose}. In this case,
\begin{align*} d_s=\max_{c\in \mathcal{C}(s)}d_c=\sum_{c\in \mathcal{C}(s),V_c< \theta}W^*_cd_c=\max_{\bm{w}\in\Delta_{\mathcal{D}(s)}}\displaystyle\inf_{\bm{\lambda}\in \mathrm{Alt}_{s}(\bm{\mu})}\sum_{\ell\in\mathcal{D}(s)}w_{\ell}d(\mu_{\ell},\lambda_{\ell})
  \end{align*}
holds.

Assume that node $s$ is a MAX node and $A_s(\bm{\mu})=\emptyset$.
Then, 
\begin{align*}
  \inf_{\bm{\lambda}\in \mathrm{Alt}_{s}(\bm{\mu})}\sum_{\ell\in\mathcal{D}(s)}w_{\ell}d(\mu_{\ell},\lambda_{\ell})=\min_{\ell\in \mathcal{C}(s)}W_c\inf_{\bm{\lambda}\in \mathrm{Alt}_{c}(\bm{\mu})}\sum_{\ell\in \mathcal{D}(c)}w'_{\ell}d(\mu_{\ell},\lambda_{\ell})
\end{align*}
for any $\bm{w}\in \Delta_{\mathcal{D}(s)}$, where $W_c=\sum_{\ell\in \mathcal{D}(c)}w_{\ell}$ and $w'_{\ell}=w_{\ell}/W_c$. Thus,
\begin{align*}
  \max_{\bm{w}\in\Delta_{\mathcal{D}(s)}}\inf_{\bm{\lambda}\in \mathrm{Alt}_{s}(\bm{\mu})}\sum_{\ell\in\mathcal{D}(s)}w_{\ell}d(\mu_{\ell},\lambda_{\ell})=&\max_{\bm{W}\in \Delta_{\mathcal{C}(s)}}\min_{c\in \mathcal{C}(s)}W_c\max_{\bm{w}'\in \Delta_{\mathcal{D}(c)}}\inf_{\bm{\lambda}\in \mathrm{Alt}_{c}(\bm{\mu})}\sum_{\ell\in \mathcal{D}(c)}w'_{\ell}d(\mu_{\ell},\lambda_{\ell})\\
  =&\max_{\bm{W}\in \Delta_{\mathcal{C}(s)}}\min_{c\in \mathcal{C}(s)}W_cd_c
\end{align*}
holds. Let optimal $(W_c)_{c\in\mathcal{C}(s)}$ denote $(W^*_c)_{c\in\mathcal{C}(s)}$. Then,
\begin{align*}
  W^*_c=\frac{1/d_c}{\sum_{c'\in\mathcal{C}(s)}1/d_{c'}}
\end{align*}
holds, which means
\begin{align*}
w^s_{\ell}=\frac{1/d_c}{\sum_{c'\in\mathcal{C}(s)}1/d_{c'}}w^c_{\ell}
\end{align*}
for $c \in \mathcal{C}(s)$ and $\ell\in \mathcal{D}(c)$.
These weights $w^s_{\ell}$ coincide with the weights $w^s_{\ell}$ ($\ell\in \mathcal{D}(s)$) calculated by Recursive Formula~\ref{rf:optimal_weight_lose}.
In this case,
\begin{align*}
  d_s=\frac{1}{\sum_{c'\in \mathcal{C}(s)}1/d_{c'}}=W^*_cd_c=\max_{\bm{w}\in\Delta_{\mathcal{D}(s)}}\displaystyle\inf_{\bm{\lambda}\in \mathrm{Alt}_{s}(\bm{\mu})}\sum_{\ell\in\mathcal{D}(s)}w_{\ell}d(\mu_{\ell},\lambda_{\ell})
  \end{align*}
holds for any $c\in \mathcal{C}(s)$.
\end{proof}

\section{Proof of Theorem~\ref{thm:GLR} (Calculation of GLR Statistics)}\label{append:GLR}

\begin{proof}
We first prove the case with $a_{s_0}(\bm{\mu})=$`win' by mathematical induction on the height of the tree node $s$.
Consider a height-$0$ tree node $s$. Then, GLR $Z_s(t)$ can be calculated as
\begin{align*}
Z_s(t)=&\inf_{\bm{\lambda} \in \mathrm{Alt}_{s}(\hat{\bm{\mu}}(t))} \sum_{\ell\in \mathcal{D}(s)}N_{\ell}(t)d(\hat{\mu}_{\ell}(t),\lambda_{\ell})\\
=& \begin{cases}
N_s(t)d(\hat{\mu}_s(t),\theta) & (\hat{\mu}_s(t)\ge \theta)\\
N_s(t)d(\hat{\mu}_s(t),\hat{\mu}_s(t))=0 & (\hat{\mu}_s(t)< \theta),
\end{cases}
\end{align*}
which coincides with $Z_s(t)$ calculated by Recursive Formula~\ref{rf:GLR_win}.

Assume that the GLR $Z_s(t)$ defined by Eq.~(\ref{eq:GLR}) can be calculated through Recursive Formula~\ref{rf:GLR_win} for any height-$k$ tree node $s$.
Consider a height-$(k+1)$ tree node $s$.
Then,
\begin{align*}
Z_s(t)=&\inf_{\bm{\lambda} \in \mathrm{Alt}_{s}(\hat{\bm{\mu}}(t))} \sum_{\ell\in \mathcal{D}(s)}N_{\ell}(t)d(\hat{\mu}_{\ell}(t),\lambda_{\ell})\\
=& 
\begin{cases}
\displaystyle\sum_{c\in \mathcal{C}(s)}\inf_{\bm{\lambda}\in \mathrm{Alt}_{c}(\hat{\bm{\mu}}(t))}\sum_{\ell\in \mathcal{D}(c)}N_{\ell}d(\mu_{\ell},\lambda_{\ell})=\sum_{c\in \mathcal{C}(s)}Z_c(t) & (L(s)=\text{`MAX'})\\
\displaystyle\min_{c\in \mathcal{C}(s)}\inf_{\bm{\lambda}\in \mathrm{Alt}_{c}(\hat{\bm{\mu}}(t))}\sum_{\ell\in \mathcal{D}(c)}N_{\ell}d(\mu_{\ell},\lambda_{\ell})=\min_{c\in \mathcal{C}(s)}Z_c(t) & (L(s)=\text{`MIN'})
\end{cases}
\end{align*}
holds, where $Z_c(t)=\inf_{\bm{\lambda}\in \mathrm{Alt}_{c}(\hat{\bm{\mu}}(t))}\sum_{\ell\in \mathcal{D}(c)}N_{\ell}d(\mu_{\ell},\lambda_{\ell})$ holds by the induction assumption. Thus, the GLR $Z_s(t)$ can be calculated by Recursive Formula~\ref{rf:GLR_win}.

We next prove the case with $a_{s_0}(\bm{\mu})=$`lose' by mathematical induction similarly.
Consider a height-$0$ tree node $s$. Then, GLR $Z_s(t)$ can be calculated as
\begin{align*}
Z_s(t)=&\inf_{\bm{\lambda} \in \mathrm{Alt}_{s}(\hat{\bm{\mu}}(t))} \sum_{\ell\in \mathcal{D}(s)}N_{\ell}(t)d(\hat{\mu}_{\ell}(t),\lambda_{\ell})\\
=& \begin{cases}
N_s(t)d(\hat{\mu}_s(t),\hat{\mu}_s(t))=0 & (\hat{\mu}_s(t)\ge \theta)\\
N_s(t)d(\hat{\mu}_s(t),\theta)& (\hat{\mu}_s(t)< \theta),
\end{cases}
\end{align*}
which coincides with $Z_s(t)$ calculated by Recursive Formula~\ref{rf:GLR_lose}.

Assume that the GLR $Z_s(t)$ defined by Eq.~(\ref{eq:GLR}) can be calculated through Recursive Formula~\ref{rf:GLR_lose} for any height-$k$ tree node $s$.
Consider a height-$(k+1)$ tree node $s$.
Then,
\begin{align*}
Z_s(t)=&\inf_{\bm{\lambda} \in \mathrm{Alt}_{s}(\hat{\bm{\mu}}(t))} \sum_{\ell\in \mathcal{D}(s)}N_{\ell}(t)d(\hat{\mu}_{\ell}(t),\lambda_{\ell})\\
=& 
\begin{cases}
\displaystyle\min_{c\in \mathcal{C}(s)}\inf_{\bm{\lambda}\in \mathrm{Alt}_{c}(\hat{\bm{\mu}}(t))}\sum_{\ell\in \mathcal{D}(c)}N_{\ell}d(\mu_{\ell},\lambda_{\ell})=\min_{c\in \mathcal{C}(s)}Z_c(t) & (L(s)=\text{`MAX'})\\
\displaystyle\sum_{c\in \mathcal{C}(s)}\inf_{\bm{\lambda}\in \mathrm{Alt}_{c}(\hat{\bm{\mu}}(t))}\sum_{\ell\in \mathcal{D}(c)}N_{\ell}d(\mu_{\ell},\lambda_{\ell})=\sum_{c\in \mathcal{C}(s)}Z_c(t) & (L(s)=\text{`MIN'})
\end{cases}
\end{align*}
holds, where $Z_c(t)=\inf_{\bm{\lambda}\in \mathrm{Alt}_{c}(\hat{\bm{\mu}}(t))}\sum_{\ell\in \mathcal{D}(c)}N_{\ell}d(\mu_{\ell},\lambda_{\ell})$ holds by the induction assumption. Thus, the GLR $Z_s(t)$ can be calculated by Recursive Formula~\ref{rf:GLR_lose}.
\end{proof}

\section{Proofs for Almost-Sure Upper Bound of Stopping Time}
\subsection{Proof of Theorem~\ref{thm:as_upper_bound} (Almost-Sure Upper Bound of Stopping Time)}\label{append:as_upper_bound}

Lemma~\ref{lem:GLR_converge} and Propositions~\ref{prop:ineq-hold-upper-bound} and \ref{prop:Cexp-order} are used to prove Lemma~\ref{lem:stopping-time-upper-bound}, which is needed to prove Theorem~\ref{thm:as_upper_bound}.
First, we prove Propositions~\ref{prop:non-zero-weight} and \ref{prop:weight_val_form} to prove Lemma~\ref{lem:GLR_converge}.

\begin{proposition}
     For $\ell\in \mathcal{L}(\mathcal{T})$ with $w_{\ell}^{s_0}(\bm{\mu})>0$,
     \begin{align*}
         \mu_{\ell}\begin{cases}
             > \theta & (a_{s_0}(\bm{\mu})=\text{`win'})\\
            < \theta & (a_{s_0}(\bm{\mu})=\text{`lose'})
         \end{cases}
     \end{align*}
     holds.\label{prop:non-zero-weight}
\end{proposition}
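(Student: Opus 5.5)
We prove the claim for the case $a_{s_0}(\bm{\mu})=$`win'; the `lose' case follows by the same argument with `MAX'/`MIN' exchanged, `$\ge$' replaced by `$<$', and Recursive Formula~\ref{rf:optimal_weight_lose} used in place of Recursive Formula~\ref{rf:optimal_weight_win}. The plan is to trace a leaf $\ell$ with $w^{s_0}_\ell(\bm{\mu})>0$ down the root-to-leaf path $s_0=v_0,v_1,\dots,v_k=\ell$. We show by induction along the path that $d_{v_i}(\bm{\mu})>0$ for every $i$. At the leaf this gives $d_\ell(\bm{\mu})>0$. By the leaf case of Recursive Formula~\ref{rf:optimal_weight_win}, this forces $\mu_\ell\ge\theta$ and $d(\mu_\ell,\theta)>0$, hence $\mu_\ell\neq\theta$ and so $\mu_\ell>\theta$.

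\textbf{Base case.} We first show $d_{s_0}(\bm{\mu})>0$. Assumption~\ref{assumption:no_exact_theta} gives $V_{s_0}(\bm{\mu})>\theta$. A short induction on height shows that $V_s(\bm{\mu})>\theta$ implies $d_s(\bm{\mu})>0$:
\begin{itemize}
\item At a leaf, $d(\mu_s,\theta)>0$ since $\mu_s\neq\theta$.
\item At a MAX node, some child $c$ has $V_c>\theta$, and $d_s=\max_c d_c\ge d_c>0$.
\item At a MIN node, every child has $V_c>\theta$, so every $d_c>0$ and $d_s=1/\sum_c 1/d_c>0$.
\end{itemize}

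\textbf{Inductive step.} Suppose $d_{v_i}>0$ and $w^{v_i}_\ell>0$, and let $c=v_{i+1}$. Because $d_{v_i}>0$, node $v_i$ falls into one of the non-``otherwise'' cases of Recursive Formula~\ref{rf:optimal_weight_win}. If $v_i$ is a MAX node, positivity of $w^{v_i}_\ell$ forces $c=c^*(v_i)$ and $w^c_\ell=w^{v_i}_\ell>0$; moreover $d_c=d_{v_i}>0$. If $v_i$ is a MIN node, the defining condition of that case gives $d_c>0$ for all children, and $w^c_\ell=w^{v_i}_\ell\, d_c\sum_{c'}1/d_{c'}>0$. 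In either case both $d_{v_{i+1}}>0$ and $w^{v_{i+1}}_\ell>0$ propagate to the next node.

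\textbf{Main obstacle.} The delicate point is the ``otherwise'' branch of the weight formula, where $\bm{w}^s$ is arbitrary when $d_s=0$. There, leaves with $\mu_\ell<\theta$ could receive positive weight. The base case and the inductive step are designed exactly to rule out ever entering this branch along the path. Since $d_{s_0}>0$ and positivity of $d$ propagates downward along positive-weight paths, the arbitrary branch is never reached, and the weights are determined by the formula at each node. If the Recursive Formula does not pin down $w^{s_0}$ uniquely, the same argument applies to any choice made by it. Uniqueness under Assumption~\ref{assumption:unique_weight} is therefore not needed.
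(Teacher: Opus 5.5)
Your proof is correct, and it takes a different route from the paper's. The paper tracks node values along the root-to-leaf path. It asserts that $w^{s_0}_\ell(\bm{\mu})>0$ forces $V_{s_i}(\bm{\mu})=V_{s_{i+1}}(\bm{\mu})$ at MAX nodes and $V_{s_i}(\bm{\mu})\le V_{s_{i+1}}(\bm{\mu})$ at MIN nodes. It then concludes from the chain $\theta<V_{s_0}(\bm{\mu})\le V_{s_1}(\bm{\mu})\le\cdots\le\mu_\ell$. You instead carry the invariant $d_{v_i}(\bm{\mu})>0$, $w^{v_i}_\ell(\bm{\mu})>0$ down the path, and read off $\mu_\ell>\theta$ from the leaf case of Recursive Formula~\ref{rf:optimal_weight_win}. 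Your two ingredients are essentially the paper's later Propositions~\ref{prop:RelationBetweenVandd} and~\ref{prop:WeightPositiveCond}.

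Your route is in fact the sounder one. The MAX-node equality the paper uses does not hold in general, because $c^*(s)$ maximizes $d_c(\bm{\mu})$, not $V_c(\bm{\mu})$. For example, take unit-variance Gaussians, so $d(x,y)=(x-y)^2/2$. Let the root be a MAX node whose children are a leaf $A$ with $\mu_A=\theta+1$ and a MIN node $B$ over two leaves of mean $\theta+1.2$. Then $d_A=0.5>d_B=0.36$, so all weight goes to $A$, yet $\mu_A<V_B(\bm{\mu})=V_{s_0}(\bm{\mu})$. At a MAX node only $V_{s_{i+1}}\le V_{s_i}$ is guaranteed, so the paper's monotone chain breaks. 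The natural repair is exactly your observation: $d_{s_{i+1}}(\bm{\mu})=d_{s_i}(\bm{\mu})>0$, hence $V_{s_{i+1}}(\bm{\mu})>\theta$.

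One small caveat concerns your closing remark. Your argument covers every output of the recursive formula. However, Theorem~\ref{thm:optimal_weight} only says these outputs are maximizers in Eq.~(\ref{optimalweight}), not that every maximizer arises this way; convex combinations across tied choices of $c^*(s)$ are an example. To drop Assumption~\ref{assumption:unique_weight} for arbitrary maximizers, you also need one easy fact (stated for the `win' case, symmetrically for `lose'): any maximizer puts mass only on children attaining $d_s(\bm{\mu})$ at MAX nodes, and gives child masses $d_s(\bm{\mu})/d_c(\bm{\mu})$ at MIN nodes. With that in hand, your induction goes through essentially unchanged.
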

\begin{proof}
By Theorem~\ref{thm:optimal_weight}, we prove this proposition for $\bm{w}^s(\bm{\mu})$ calculated using Recursive Formulae~\ref{rf:optimal_weight_win} and \ref{rf:optimal_weight_lose}.

    Let $s_0,s_1,\dots,s_k=\ell$ be the node sequence on the path from node $s_0$ to node $\ell$. Consider the case with $a_{s_0}(\bm{\mu})=\text{`win'}$. Then, $V_{s_0}(\bm{\mu})>\theta$ by Assumption~\ref{assumption:no_exact_theta}.
In this case, Condition $w_{\ell}^{s_0}(\bm{\mu})>0$
implies that, for $i=0,1,\dots,k-1$, $V_{s_i}(\bm{\mu})=V_{s_{i+1}}(\bm{\mu})$ for node $s_i$ with $L(s_i)=\text{`MAX'}$ and $V_{s_i}(\bm{\mu})\le V_{s_{i+1}}(\bm{\mu})$ for node $s_i$ with $L(s_i)=\text{`MIN'}$. Thus,
$\theta<V_{s_0}(\bm{\mu})\le V_{s_1}(\bm{\mu})\le\cdots\le V_{s_k}(\bm{\mu})=V_{\ell}(\bm{\mu})=\mu_{\ell}$ holds.
In the case with $a_{s_0}(\bm{\mu})=\text{`lose'}$, $V_{s_0}(\bm{\mu})<\theta$ holds. Condition $w_{\ell}^{s_0}(\bm{\mu})>0$
implies that, for $i=0,1,\dots,k-1$, $V_{s_i}(\bm{\mu})=V_{s_{i+1}}(\bm{\mu})$ for node $s_i$ with $L(s_i)=\text{`MIN'}$ and $V_{s_i}(\bm{\mu})\ge V_{s_{i+1}}(\bm{\mu})$ for node $s_i$ with $L(s_i)=\text{`MAX'}$. Thus,
$\theta>V_{s_0}(\bm{\mu})\ge V_{s_1}(\bm{\mu})\ge\cdots\ge V_{s_k}(\bm{\mu})=V_{\ell}(\bm{\mu})=\mu_{\ell}$ holds.
\end{proof}

\begin{proposition}
    For $\ell\in \mathcal{L}(\mathcal{T})$ with $w_{\ell}^{s_0}(\bm{\mu})>0$,
    \begin{align}
   w_{\ell}^s(\bm{\mu})=\frac{d_{s}(\bm{\mu})}{d(\mu_{\ell},\theta)} \label{weight_val_form}
    \end{align}
 holds for $s=s_0,s_1,\dots,s_k(=\ell)$, where $s_0,s_1,\dots,s_k$ is the node sequence on the path from node $s_0$ to node $\ell$.\label{prop:weight_val_form}
\end{proposition}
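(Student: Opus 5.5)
We prove the identity by backward induction along the path $s_0,s_1,\dots,s_k=\ell$, from the leaf upward, using the explicit Recursive Formulas justified by Theorem~\ref{thm:optimal_weight}. We treat the case $a_{s_0}(\bm{\mu})=$`win'; the `lose' case is symmetric, obtained by swapping `MAX' and `MIN' and using Recursive Formula~\ref{rf:optimal_weight_lose}.

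\textbf{Preliminary step: positivity along the path.} Since $w^{s_0}_\ell(\bm{\mu})>0$, unwinding the recursion shows that at every $s_i$ with $i<k$ the formula is in a non-degenerate branch. Either $s_i$ is a MAX node with $d_{s_i}(\bm{\mu})>0$ and $s_{i+1}=c^*(s_i)$, or $s_i$ is a MIN node with $d_c(\bm{\mu})>0$ for all children $c$. The alternative is the branch with zero weight or the arbitrary ``otherwise'' branch. We fix the convention that the degenerate branch never occurs under Assumption~\ref{assumption:no_exact_theta}: $V_{s_0}(\bm{\mu})>\theta$, and inductively $d_{s_i}(\bm{\mu})>0$ along the path. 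The key point is that $w^{s_i}_\ell(\bm{\mu})>0$ for every $i$ and $d_{s_i}(\bm{\mu})>0$. By Proposition~\ref{prop:non-zero-weight}, $\mu_\ell>\theta$, so $d(\mu_\ell,\theta)>0$ and the right-hand side of (\ref{weight_val_form}) is well defined.

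\textbf{Base case $s=s_k=\ell$.} Here $w^\ell_\ell(\bm{\mu})=1$ and $d_\ell(\bm{\mu})=d(\mu_\ell,\theta)$ because $\mu_\ell\ge\theta$. Hence (\ref{weight_val_form}) holds.

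\textbf{Inductive step.} Assume (\ref{weight_val_form}) holds for $s_{i+1}$, and let $c=s_{i+1}$.
\begin{itemize}
\item If $L(s_i)=$`MAX', then $c=c^*(s_i)$, so $w^{s_i}_\ell=w^{c}_\ell$ and $d_{s_i}=\max_{c'}d_{c'}=d_{c}$. Therefore $w^{s_i}_\ell=d_{c}/d(\mu_\ell,\theta)=d_{s_i}/d(\mu_\ell,\theta)$.
\item If $L(s_i)=$`MIN', then
\[
w^{s_i}_\ell=\frac{w^{c}_\ell/d_{c}}{\sum_{c'}1/d_{c'}}=\frac{1}{d(\mu_\ell,\theta)}\cdot\frac{1}{\sum_{c'}1/d_{c'}}=\frac{d_{s_i}}{d(\mu_\ell,\theta)}.
\]
\end{itemize}
This completes the induction down to $s_0$.

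\textbf{Main obstacle.} The main difficulty is the preliminary step: showing rigorously that positivity of $w^{s_0}_\ell$ forces the non-degenerate branches at every ancestor. In particular, one must rule out the ``any $\bm{w}^s\in\Delta_s$'' branch when $d_s=0$, since that branch could assign positive weight arbitrarily. I would handle it by first showing $d_{s_0}(\bm{\mu})>0$ from Assumption~\ref{assumption:no_exact_theta}. Then I would propagate downward: a MAX node with $d>0$ passes weight only to $c^*$, which has $d_{c^*}=d_s>0$. A MIN node in the positive branch has all $d_c>0$ by its defining condition. Consequently, every node receiving positive weight has $d>0$, and the arbitrary branch is never used on the path.
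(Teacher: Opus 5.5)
Your proposal is correct and matches the paper's proof. Both use backward induction from the leaf along the path: at a MAX node, $w^{s_i}_\ell$ and $d_{s_i}$ pass unchanged through $c^*(s_i)$, and at a MIN node, $w^{s_{i+1}}_\ell/d_{s_{i+1}} = 1/d(\mu_\ell,\theta)$ is combined with $d_{s_i}=1/\sum_{c'}1/d_{c'}$.

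Your preliminary step makes explicit what the paper only asserts with the phrase ``$w^{s_0}_\ell(\bm{\mu})>0$ implies''. That step derives $d_{s_0}(\bm{\mu})>0$ from Assumption~\ref{assumption:no_exact_theta} via $d_s>0\Leftrightarrow V_s>\theta$, then propagates positivity down the path so the arbitrary branch is never used. It is a consequence to be proved, not a ``convention'' to be fixed, so present it that way.
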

\begin{proof}
By Theorem~\ref{thm:optimal_weight}, Eq.~(\ref{weight_val_form}) can be proved for $\bm{w}^s(\bm{\mu})$ and $d_s(\bm{\mu})$ calculated using Recursive Formula~\ref{rf:optimal_weight_win} and \ref{rf:optimal_weight_lose}.

We prove Eq.~(\ref{weight_val_form}) by mathematical induction. 

Consider the case with $s=s_k$.
By proposition~\ref{prop:non-zero-weight} and Recursive Formula~\ref{rf:optimal_weight_win} and \ref{rf:optimal_weight_lose},
\begin{align*}
d_s(\bm{\mu})=d(\mu_{\ell},\theta).
\end{align*}
Thus, in this case, 
\begin{align*}
  w^{s}_{\ell}(\bm{\mu})=1=\frac{d_s(\bm{\mu})} {d(\mu_{\ell},\theta)}
\end{align*}
holds.

Assume that Eq.~(\ref{weight_val_form}) holds for $s=s_i$ ($1\le i\le k$).
Consider the case with $a_{s_0}(\bm{\mu})=\text{`win'}$.
If $L(s_{i-1})=\text{`MAX'}$, $w_{\ell}^{s_0}(\bm{\mu})>0$ implies 
$w^{s_{i-1}}_{\ell}(\bm{\mu})=w^{s_i}_{\ell}(\bm{\mu})$ and $d_{s_{i-1}}(\bm{\mu})=d_{s_i}(\bm{\mu})$, thus
\begin{align*}
w^{s_{i-1}}_{\ell}(\bm{\mu})=w^{s_i}_{\ell}(\bm{\mu})=
\frac{d_{s_i}(\bm{\mu})}{d(\mu_{\ell},\theta)}=\frac{d_{s_{i-1}}(\bm{\mu})}{d(\mu_{\ell},\theta)}
\end{align*}
holds. If $L(s_{i-1})=\text{`MIN'}$, $w_{\ell}^{s_0}(\bm{\mu})>0$ implies 
\begin{align*}
    w^{s_{i-1}}_{\ell}(\bm{\mu})=\frac{w^{s_i}_{\ell}(\bm{\mu})/d_{s_i}(\bm{\mu})}{\sum_{c\in\mathcal{C}(s_i)}1/d_c(\bm{\mu})} \text{ and } d_{s_{i-1}}(\bm{\mu})=\frac{1}{\sum_{c\in \mathcal{C}(s_i)}1/d_{c}(\bm{\mu})},
\end{align*}
thus
\begin{align*}
w^{s_{i-1}}_{\ell}(\bm{\mu})=\frac{w^{s_i}_{\ell}(\bm{\mu})/d_{s_i}(\bm{\mu})}{\sum_{c\in\mathcal{C}(s_i)}1/d_c(\bm{\mu})}=\frac{w^{s_i}_{\ell}(\bm{\mu})d_{s_{i-1}}(\bm{\mu})}{d_{s_i}(\bm{\mu})}=\frac{d_{s_i}(\bm{\mu})}{d(\mu_{\ell},\theta)}\times\frac{d_{s_{i-1}}(\bm{\mu})}{d_{s_i}(\bm{\mu})}=\frac{d_{s_{i-1}}(\bm{\mu})}{d(\mu_{\ell},\theta)}
\end{align*}
holds. We can prove Eq.~(\ref{weight_val_form}) similarly
for the case with  $a_{s_0}(\bm{\mu})=\text{`lose'}$.
\end{proof}

\begin{lemma} \label{lem:GLR_converge}
    On the event $\mathcal{E}=
  \left\{\frac{N_\ell(t)}{t} \xrightarrow{t \to \infty} w^{s_0}_\ell(\bm{\mu}) \text{ for all } \ell \in \mathcal{L}(\mathcal{T}) \right\}$,
\begin{align}
\frac{Z_s(t)}{t}\xrightarrow{t \to \infty}\begin{cases}
    d_{s_0}(\bm{\mu}) & (w^{s_0}_{\ell}(\bm{\mu})>0 \text{ for some } \ell\in \mathcal{D}(s))\\
    0 & (w^{s_0}_{\ell}(\bm{\mu})=0 \text{ for all } \ell\in \mathcal{D}(s))
\end{cases}\label{GLRconverge}
\end{align}
holds for any node $s\in S$.
\end{lemma}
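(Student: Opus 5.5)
The plan is to prove (\ref{GLRconverge}) by induction on the height of $s$, using the recursive description of $Z_s(t)$ from Theorem~\ref{thm:GLR} (Recursive Formulas~\ref{rf:GLR_win} and \ref{rf:GLR_lose}). I will treat the case $a_{s_0}(\bm{\mu})=$`win'; the `lose' case is symmetric with `MAX' and `MIN' exchanged.

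\textbf{Preliminary step.} First I establish two facts on $\mathcal{E}$.
\begin{itemize}
\item For every leaf with $w^{s_0}_\ell(\bm{\mu})>0$ we have $N_\ell(t)\to\infty$, so by the strong law of large numbers $\hat{\mu}_\ell(t)\to\mu_\ell$ almost surely.
\item I intend to intersect $\mathcal{E}$ with the probability-one event on which the sample mean of every leaf with $N_\ell(t)\to\infty$ converges. Then each $\hat{\mu}_\ell(t)$ either converges to $\mu_\ell$ or is eventually frozen at a fixed value. In both cases $d(\hat{\mu}_\ell(t),\theta)$ stays bounded.
\end{itemize}
The delicate point is that the recursion uses the sign pattern of $\hat{\bm{\mu}}(t)$, and in particular $a_{s_0}(\hat{\bm{\mu}}(t))$. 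I need this pattern to agree eventually with that of $\bm{\mu}$ along the relevant nodes. For the algorithm under study, forced exploration makes every $N_\ell\to\infty$, so all empirical means converge. Assumption~\ref{assumption:no_exact_theta} then gives $a_{s_0}(\hat{\bm{\mu}}(t))=a_{s_0}(\bm{\mu})$ eventually. The same holds at every node $s$ with $V_s(\bm{\mu})\neq\theta$, and Proposition~\ref{prop:non-zero-weight} covers the leaves carrying positive weight.

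\textbf{Leaf case.} Let $s=\ell$ be a leaf.
\begin{itemize}
\item If $w^{s_0}_\ell(\bm{\mu})>0$, then Proposition~\ref{prop:non-zero-weight} gives $\mu_\ell>\theta$. Hence eventually $\hat{\mu}_\ell(t)\ge\theta$ and $Z_\ell(t)=N_\ell(t)d(\hat{\mu}_\ell(t),\theta)$. This yields
\[
Z_\ell(t)/t\to w^{s_0}_\ell(\bm{\mu})\,d(\mu_\ell,\theta)=d_{s_0}(\bm{\mu}),
\]
where the last equality is Proposition~\ref{prop:weight_val_form} applied at $s=s_0$.
\item If $w^{s_0}_\ell(\bm{\mu})=0$, then $0\le Z_\ell(t)\le N_\ell(t)d(\hat{\mu}_\ell(t),\theta)$. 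Since $N_\ell(t)/t\to0$ and the divergence factor is bounded, $Z_\ell(t)/t\to0$.
\end{itemize}

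\textbf{Inductive step.} Assume (\ref{GLRconverge}) holds for all children of an internal node $s$.
\begin{itemize}
\item If no leaf below $s$ carries positive weight, then $Z_s$ is either a sum or a minimum of finitely many nonnegative $Z_c$, each with $Z_c/t\to0$. Hence $Z_s/t\to0$.
\item Suppose some leaf below $s$ has positive weight, and $L(s)=$`MAX'. By Recursive Formula~\ref{rf:optimal_weight_win}, exactly one child, $c^*(s)$, carries positive weight. Then $Z_s=\sum_c Z_c$, and dividing by $t$ gives $d_{s_0}(\bm{\mu})+0+\dots+0$.
\item Suppose some leaf below $s$ has positive weight, and $L(s)=$`MIN'. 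Then $d_s(\bm{\mu})>0$, so every child $c$ has $d_c>0$ and receives weight proportional to $1/d_c>0$. Every child therefore contains a positively weighted leaf. By the induction hypothesis each $Z_c/t\to d_{s_0}(\bm{\mu})$, and hence $Z_s=\min_c Z_c$ satisfies $Z_s/t\to d_{s_0}(\bm{\mu})$.
\end{itemize}
One more check is needed here: the recursion does not fall into its ``otherwise $0$'' branch. This holds because the positively weighted leaves and their ancestors have values strictly above $\theta$, so their empirical counterparts eventually are too.

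The main obstacle is the sign-pattern issue from the preliminary step. I expect to handle it by working on $\mathcal{E}$ intersected with the almost-sure convergence of all sample means, together with Assumption~\ref{assumption:no_exact_theta}. If necessary, I will also note that the weights of zero-weight subtrees only enter through bounded nonnegative terms, so a mismatch of signs there does not affect the limit.
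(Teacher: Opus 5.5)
Your induction is the paper's own argument. The leaf case goes through Proposition~\ref{prop:weight_val_form}. At a MAX node in the `win' case only $c^*(s)$ contributes $d_{s_0}(\bm{\mu})$ to the sum. At a MIN node every child contains a positively weighted leaf, so the minimum converges to $d_{s_0}(\bm{\mu})$. You are more careful than the paper in two places: you bound $d(\hat{\mu}_\ell(t),\theta)$ for zero-weight leaves, and you notice that the recursion applies only once $a_{s_0}(\hat{\bm{\mu}}(t))=a_{s_0}(\bm{\mu})$.

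The one step that fails at the lemma's level of generality is how you obtain that last fact. The lemma feeds Theorem~\ref{thm:as_upper_bound}, which allows any sampling rule with $N_\ell(t)/t\to w^{s_0}_\ell(\bm{\mu})$. On $\mathcal{E}$ a zero-weight leaf may be pulled only finitely often, with its mean frozen on the wrong side of $\theta$. So ``forced exploration makes every $N_\ell\to\infty$'' is not available. Your fallback remark does not cover this either: if $a_{s_0}(\hat{\bm{\mu}}(t))$ were wrong, $Z_s(t)$ would be computed by Recursive Formula~\ref{rf:GLR_lose} throughout, not merely perturbed by bounded terms.

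The repair uses only the positively weighted leaves. In the `win' case, Proposition~\ref{prop:WeightPositiveCond} and Recursive Formula~\ref{rf:optimal_weight_win} show that these leaves span a subtree that contains $s_0$. This subtree contains $c^*(s)$ at each of its MAX nodes. It contains all children at each of its MIN nodes, because there $d_s(\bm{\mu})>0$ forces every $d_c(\bm{\mu})>0$. Each of its leaves has $N_\ell(t)\to\infty$, hence $\hat{\mu}_\ell(t)\to\mu_\ell>\theta$ by Proposition~\ref{prop:non-zero-weight}. Propagating max/min upward through this subtree then gives $V_{s_0}(\hat{\bm{\mu}}(t))>\theta$ for all large $t$, whatever the other leaves do. The `lose' case is symmetric.

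Beyond this, you need no sign pattern at internal nodes. Recursive Formula~\ref{rf:GLR_win} depends on $\hat{\bm{\mu}}(t)$ only through $a_{s_0}(\hat{\bm{\mu}}(t))$ and the signs of $\hat{\mu}_\ell(t)-\theta$ at the leaves.
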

\begin{proof}
We prove the lemma by mathematical induction.

First, we prove (\ref{GLRconverge}) in the case with $s\in \mathcal{L}(\mathcal{T})$. For a leaf node $s$ with $w^{s_0}_{s}(\bm{\mu})>0$, $w^{s_0}_s(\bm{\mu}) = d_{s_0}(\bm{\mu}) / d(\mu_s, \theta)$ by Proposition~\ref{prop:weight_val_form}. 
By the assumption that $\frac{N_\ell(t)}{t} \xrightarrow{t \to \infty} w^{s_0}_s(\bm{\mu})>0$, the leaf node $s$ is selected infinitely often which leads to $\hat{\mu}_s(t)\xrightarrow{t \to \infty}\mu_s$ by law of large numbers. Thus,
\begin{align*}
\frac{Z_s(t)}{t} = \frac{N_s(t)d(\hat{\mu}_s(t), \theta)}{t} \xrightarrow{t \to \infty} w^{s_0}_s(\bm{\mu})d(\mu_s, \theta)=\frac{d_{s_0}(\bm{\mu})}{d(\mu_s,\theta)}d(\mu_s, \theta)=d_{s_0}(\bm{\mu})
\end{align*}
 converges to $w^{s_0}_s d(\mu_s, \theta) = d_{s_0}(\bm{\mu})$.
As for a leaf node $s$ with $w^{s_0}_{s}(\bm{\mu})=0$,
$N_s(t)/t\xrightarrow{t \to \infty}w^{s_0}_{s}(\bm{\mu})= 0$
by the assumption. Thus, 
\begin{align*}
\frac{Z_s(t)}{t} = \frac{N_s(t)d(\hat{\mu}_s(t), \theta)}{t} \text{ or } \frac{0}{t}\xrightarrow{t \to \infty} 0\times d(\mu_s, \theta) \text{ or } 0=0.
\end{align*}

Next, we prove (\ref{GLRconverge}) for node $s\in S$, assuming that (\ref{GLRconverge}) holds for all the child nodes $c\in \mathcal{C}(s)$.

Consider the case with ($L(s)=\text{`MAX'}$ and $a_{s_0}(\bm{\mu}) = \text{`win'}$) or ($L(s)=\text{`MIN'}$ and $a_{s_0}(\bm{\mu}) = \text{`lose'}$).

For node $s$ with $w^{s_0}_{\ell}(\bm{\mu})>0 \text{ for some } \ell\in \mathcal{D}(s)$
  \begin{align*}
      \frac{Z_s(t)}{t}=\frac{\sum_{c\in\mathcal{C}(s)}Z_c(t)}{t} =\frac{Z_{c^*(s)}(t)}{t}+\sum_{c\in \mathcal{C}(s)\setminus \{c^*(s)\}}\frac{Z_c(t)}{t}\xrightarrow{t \to \infty} d_{s_0}(\bm{\mu})+\sum_{c\in \mathcal{C}(s)\setminus \{c^*(s)\}}0=d_{s_0}(\bm{\mu})
  \end{align*}  
  by inductive assumption because $c\in \mathcal{C}(s)$ satisfies $w^{s_0}_{\ell}(\bm{\mu})>0 \text{ for some } \ell\in \mathcal{D}(c)$ if and only if $c=c^*(s)$.

For node $s$ with $w^{s_0}_{\ell}(\bm{\mu})=0 \text{ for all } \ell\in \mathcal{D}(s)$,
\begin{align*}
      \frac{Z_s(t)}{t}=\sum_{c\in\mathcal{C}(s)}\frac{Z_c(t)}{t} \xrightarrow{t \to \infty} \sum_{c\in \mathcal{C}(s)}0=0
  \end{align*}  
by inductive assumption.

Consider the case with ($L(s)=\text{`MIN'}$ and $a_{s_0}(\bm{\mu}) = \text{`win'}$) or ($L(s)=\text{`MAX'}$ and $a_{s_0}(\bm{\mu}) = \text{`lose'}$). 

For node $s$ with $w^{s_0}_{\ell}(\bm{\mu})>0 \text{ for some } \ell\in \mathcal{D}(s)$,
\begin{align*}
    \frac{Z_s(t)}{t}=\min_{c\in \mathcal{C}(s)}\frac{Z_c(t)}{t} \xrightarrow{t \to \infty} \min_{c\in \mathcal{C}(s)}d_{s_0}(\bm{\mu})=d_{s_0}(\bm{\mu})
\end{align*}
by inductive assumption.

For node $s$ with $w^{s_0}_{\ell}(\bm{\mu})=0 \text{ for all } \ell\in \mathcal{D}(s)$,
\begin{align*}
    \frac{Z_s(t)}{t}=\min_{c\in \mathcal{C}(s)}\frac{Z_c(t)}{t} \xrightarrow{t \to \infty} \min_{c\in \mathcal{C}(s)}0=0
\end{align*}
by inductive assumption.

Therefore, (\ref{GLRconverge}) holds for all $s\in S$.
\end{proof}

\begin{proposition}
For any three constants $c_1,c_2,c_3>0$ with $2c_2c_3\ge c_1$,
\[
x=\frac{1}{c_1}\left[\ln c_2+\ln\ln \frac{c_3}{c_1}+\ln\ln\ln\left(\frac{2c_2c_3}{c_1}\right)^2\right]
\]
satisfies $c_1x\ge \ln (c_2\ln c_3x)$.\label{prop:ineq-hold-upper-bound}
\end{proposition}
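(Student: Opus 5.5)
The plan is to verify the inequality by direct substitution: exponentiate and reduce it to an elementary inequality among $c_2$, $a:=c_3/c_1$ and $u:=2c_2c_3/c_1=2c_2a$. Write $A:=c_1x=\ln c_2+\ln\ln a+\ln\ln\ln(u^2)$. Then $e^{A}=c_2\,\ln a\,\ln\ln(u^2)$, and $c_3x=aA$. Since $\ln$ is increasing, the claim $c_1x\ge\ln(c_2\ln c_3x)$ is equivalent to $e^{A}\ge c_2\ln(aA)$. Dividing by $c_2>0$, the target becomes
\begin{align*}
\ln a\cdot\ln\ln(u^2)\;\ge\;\ln a+\ln A ,
\end{align*}
that is, $\ln a\,\bigl(\ln\ln(u^2)-1\bigr)\ge\ln A$.

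To prove the reduced inequality, I first bound $A$ from above by something expressed through $u$. Using $\ln y\le y-1$ in the crude forms $\ln\ln a\le\ln a$ and $\ln\ln\ln(u^2)\le\ln\ln(u^2)$, together with $c_2a=u/2$, I would aim for
\begin{align*}
A\le\ln c_2+\ln a+\ln\ln(u^2)\le 2\ln u+\ln\ln(u^2)\le 3\ln(u^2).
\end{align*}
Hence $\ln A\le\ln 3+\ln\ln(u^2)$. After that, it remains to compare $\ln a\,(\ln\ln(u^2)-1)$ with $\ln 3+\ln\ln(u^2)$. This holds once $\ln a$ is at least a fixed constant and $\ln\ln(u^2)$ is moderately large. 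In the intended application (the proof of Lemma~\ref{lem:stopping-time-upper-bound}), $c_2$ and $c_3$ scale with $1/\delta$ and $\delta\to0$, so both quantities are large there.

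The main obstacle is the domain and sign bookkeeping. The definition of $x$ only makes sense when $\ln a>0$, $\ln(u^2)>1$ and $\ln\ln(u^2)>0$; also $A>0$ is needed for $\ln A$ to be defined. The stated hypothesis $2c_2c_3\ge c_1$ gives only $u\ge1$, which by itself does not cover these requirements. I would therefore first make explicit the regime in which $x$ is well defined and positive, implicitly $a>e$ and $u^2>e^e$, and treat that as the standing interpretation of the proposition. Within that regime I would redo the chain above more carefully, splitting into the case $c_2\ge1$ and the case $c_2<1$ (where $\ln c_2<0$ only helps the upper bound on $A$). If the crude constants do not close the gap for small $a$, I would fall back to a sharper bound of the form $\ln A\le\ln\ln a+\ln\ln(u^2)+O(1)$ and absorb the constant using $\ln\ln(u^2)-1\ge1$.
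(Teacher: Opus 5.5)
The gap is not domain bookkeeping: the proposition is false, so no choice of regime for your crude bounds and no sharper estimate of $\ln A$ can finish the argument.

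Your reduction to $\ln a\,\bigl(\ln\ln(u^2)-1\bigr)\ge\ln A$ (with $a=c_3/c_1$, $u=2c_2a$, $A=c_1x$) is correct, and it is exactly what exposes the failure. For fixed $a>1$ and $c_2\to\infty$ one has $A=\ln u\,(1+o(1))$, hence
\[
\ln a\,\bigl(\ln\ln(u^2)-1\bigr)-\ln A=(\ln a-1)\ln\ln u-(1-\ln 2)\ln a+o(1).
\]
This is eventually negative for every $a\in(1,e]$. For $a$ slightly above $e$ it stays negative until $\ln\ln u$ is of order $1/(\ln a-1)$.

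Concretely, $c_1=1$, $c_3=e$, $c_2=e^{10}$ gives $c_1x\approx 11.148$ but $\ln(c_2\ln c_3x)\approx 11.227$. The choice $c_1=1$, $c_3=3$, $c_2=e^{10}$ lies inside your standing regime ($a>e$, $u^2>e^e$, $x>0$, $c_3x>1$), and it gives $11.2448<11.2580$.

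So your chain proves only a conditional variant; it closes, e.g., when $\ln a\ge 2$ and $\ln\ln(u^2)\ge 2+\ln 3$. The appeal to the application does not supply the missing condition. In Lemma~\ref{lem:stopping-time-upper-bound} the ratio $c_3/c_1$ is $3e$ divided by the rate, which does not depend on $\delta$. Only $c_2=e^{\mathcal{C}_{\text{exp}}(\cdot)/3}$ grows, not $c_3$ as you assumed.

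The paper's own proof has its own error. After substituting $x$, it rewrites $\ln\ln\bigl(\frac{c_3}{c_1}A\bigr)$ as $\ln\ln\frac{c_3}{c_1}$ plus a triple logarithm. That would require $\ln\bigl(\frac{c_3}{c_1}A\bigr)$ to factor as $\ln\frac{c_3}{c_1}$ times something, whereas it is the sum $\ln\frac{c_3}{c_1}+\ln A$. The chain of inequalities that follows is valid but compares the wrong quantities.

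What the downstream lemma actually needs is just some $x=\frac{1}{c_1}\ln c_2+o(\ln c_2)$ satisfying the inequality, and your exponentiate-and-compare method verifies a corrected choice immediately. Take $B=\frac{2c_3}{c_1}\ln c_2$ and $x=\frac{1}{c_1}(\ln c_2+\ln\ln B)$. Then $c_1x=\ln(c_2\ln B)$ and $B-c_3x=\frac{c_3}{c_1}(\ln c_2-\ln\ln B)$. Hence $c_1x\ge\ln(c_2\ln c_3x)$ whenever $\ln c_2\ge\ln\ln B$, which holds for all large $c_2$.
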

\begin{proof}
By substituting for $x$, we obtain
\begin{align*}
    c_1x=&\ln c_2+\ln\ln \frac{c_3}{c_1}+\ln\ln\ln\left(\frac{2c_2c_3}{c_1}\right)^2
\end{align*}
and 
\begin{align*}
    \ln (c_2\ln c_3x)=&\ln c_2+\ln\ln\left(\frac{c_3}{c_1}\left[\ln c_2+\ln\ln \frac{c_3}{c_1}+\ln\ln\ln\left(\frac{2c_2c_3}{c_1}\right)^2\right]\right)\\
    =&\ln c_2+\ln\ln \frac{c_3}{c_1}+\ln\ln\ln\left(c_2\ln\left(\frac{c_3}{c_1}\ln\left(\frac{2c_2c_3}{c_1}\right)^2\right)\right).
\end{align*}
Since
\begin{align*}
   c_2\ln\left(\frac{c_3}{c_1}\ln\left(\frac{2c_2c_3}{c_1}\right)^2\right)\le& c_2\ln\left(\frac{2c_3}{c_1}\left(\frac{2c_2c_3}{c_1}-1\right)\right)\\
   \le &c_2\left(\frac{2c_3}{c_1}\left(\frac{2c_2c_3}{c_1}-1\right)-1\right)\\
   <&\left(\frac{2c_2c_3}{c_1}\right)^2,
\end{align*}
$c_1x\ge \ln (c_2\ln c_3x)$ holds.
\end{proof}

\begin{proposition}
For the function $\mathcal{C}_{\text{exp}}(x)$ defined by Eq.~(\ref{Cexp}),
\begin{align*}
 \mathcal{C}_{\text{exp}}(x)=x+o(x)   
\end{align*}
holds.\label{prop:Cexp-order}
\end{proposition}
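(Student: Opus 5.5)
The plan is to read off the asymptotics of $\mathcal{C}_{\text{exp}}$ from those of $h^{-1}$ and $\tilde h$. It suffices to consider $x\to\infty$.

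\emph{Step 1: asymptotics of $h^{-1}$.} Since $h(y)=y-\ln y$ is increasing on $(1,\infty)$ and $h(y)/y\to1$, we have $h^{-1}(u)\to\infty$ as $u\to\infty$. Writing $y=h^{-1}(u)$ gives $y=u+\ln y$. Because $y=O(u)$, this yields $\ln y=O(\ln u)$, and hence
\[
h^{-1}(u)=u+\ln u+o(\ln u)=u+O(\ln u).
\]
I would make this rigorous with two-sided bounds on the branch $y\ge1$: first $y\ge u$, and then $y\le u+\ln(2u)$ for large $u$. The upper bound follows by checking that $h(u+\ln 2u)\ge u$.

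\emph{Step 2: asymptotics of $\tilde h$.} For large arguments only the first branch of $\tilde h$ matters, namely $\tilde h(v)=e^{1/h^{-1}(v)}h^{-1}(v)$. Since $h^{-1}(v)\to\infty$, we have $e^{1/h^{-1}(v)}=1+O(1/h^{-1}(v))$. Therefore
\[
\tilde h(v)=h^{-1}(v)+O(1)=v+O(\ln v).
\]

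\emph{Step 3: composition.} Set $v=\tfrac12\bigl(h^{-1}(1+x)+\ln(\pi^2/3)\bigr)$. By Step 1, $v=\tfrac12 x+O(\ln x)$. Step 2 then gives
\[
\mathcal{C}_{\text{exp}}(x)=2\tilde h(v)=2v+O(\ln v)=x+O(\ln x),
\]
and $O(\ln x)$ is $o(x)$.

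The main obstacle is Step 1: writing down clean explicit bounds for $h^{-1}$ and keeping the correct branch ($y\ge1$) throughout. The remaining steps are routine Taylor and monotonicity arguments.
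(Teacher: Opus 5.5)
Your proposal is correct and follows essentially the paper's route: bound $h^{-1}(u)$ by $u$ plus a logarithmic term, use $e^{1/y}y=y+O(1)$ to get $\tilde h(v)=h^{-1}(v)+O(1)$ on the first branch, and compose with $v=\tfrac12 x+O(\ln x)$. There are two differences. You prove $h^{-1}(u)\le u+\ln(2u)$ directly instead of citing Proposition~8 of \citet{kaufmann2021}. Your lower bound $h^{-1}(u)\ge u$, together with $e^{1/y}\ge 1$, also yields the genuine two-sided estimate $\mathcal{C}_{\text{exp}}(x)=x+O(\ln x)$, whereas the paper's argument only shows the upper bound $\mathcal{C}_{\text{exp}}(x)\le x+o(x)$, which is all Lemma~\ref{lem:stopping-time-upper-bound} uses.
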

\begin{proof}
Let $X=\frac{h^{-1}(x+1)+\ln(\pi^2/3)}{2}$.
Since inequality 
\begin{align}
h^{-1}(x)\le x+\ln\left(x+\sqrt{2(x-1)}\right)\label{prop8KK}
\end{align}
holds for $x\ge 1$ by Proposition 8 in \citep{kaufmann2021},
\begin{align*}
X\le& \frac{x+1+\ln\left(1+x+\sqrt{2x}\right)+\ln(\pi^2/3)}{2}
\end{align*}
holds. Thus,
\begin{align}
    h^{-1}(X)\le& X+\ln\left(X+\sqrt{2(X-1)}\right)\nonumber\\
    \le&\frac{x+1+\ln\left(1+x+\sqrt{2x}\right)+\ln(\pi^2/3)}{2}\nonumber\\
&+\ln\left(\frac{x+1+\ln\left(1+x+\sqrt{2x}\right)+\ln(\pi^2/3)}{2}+\sqrt{x-1+\ln\left(1+x+\sqrt{2x}\right)+\ln(\pi^2/3)}\right)\nonumber\\
    =& \frac{x}{2}+o(x) \label{h-1funcbound}
\end{align}
holds.
For $X\ge h(1/\ln(3/2))$,
\begin{align}
\mathcal{C}_{\text{exp}}(x)=&2\tilde{h}(X)\nonumber\\
=& 2e^{1/h^{-1}(X)}h^{-1}(X)\nonumber\\
\le & 2\left(1+e\frac{1}{h^{-1}(X)}\right)h^{-1}(X)\nonumber\\
=& 2h^{-1}(X)+2e\label{Cexpbound}
\end{align}
holds for $h^{-1}(X)\ge 1$.
Therefore, $\mathcal{C}_{\text{exp}}(x)=x+o(x)$ holds by Ineqs.~(\ref{h-1funcbound}) and (\ref{Cexpbound}).
\end{proof}

\begin{lemma}\label{lem:stopping-time-upper-bound}
Function $\beta(t,\delta)$ defined by Eq.~(\ref{eq:betadef}) satisfies
\begin{align*}
    \inf \left\{ t \in \mathbb{N} : at \ge \beta(t,\delta) \right\}\le\frac{1}{a}\ln \frac{1}{\delta}+o\left(\ln \frac{1}{\delta}\right)
\end{align*}
for $a>0$.
\end{lemma}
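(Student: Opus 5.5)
The plan is to replace the data-dependent threshold by a deterministic upper bound depending only on $t$ and $\delta$, and then exhibit an explicit $t$ at which $at$ exceeds that bound, using Proposition~\ref{prop:ineq-hold-upper-bound}. Write $L=|\mathcal{L}(\mathcal{T})|$.

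\textbf{Step 1 (deterministic threshold).} Since $N_\ell(t)\le t$ for every leaf, we have
\[
\beta(t,\delta)\le 3L\ln(1+\ln t)+C_\delta, \qquad C_\delta:=L\,\mathcal{C}_{\text{exp}}\!\left(\frac{\ln(1/\delta)}{L}\right).
\]
By Proposition~\ref{prop:Cexp-order}, $C_\delta=\ln(1/\delta)+o(\ln(1/\delta))$. Hence it suffices to bound the first $t\in\mathbb{N}$ satisfying $at\ge 3L\ln(1+\ln t)+C_\delta$.

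\textbf{Step 2 (put the condition into the form of Proposition~\ref{prop:ineq-hold-upper-bound}).} For $t\ge e$ we have $1+\ln t\le 2\ln t$. It is therefore enough to have
\[
\frac{a}{3L}\,t\;\ge\;\ln\!\left(2e^{C_\delta/(3L)}\ln(c_3 t)\right)
\]
for some constant $c_3\ge 1$, because $\ln t\le\ln(c_3t)$. Take
\[
c_1=\frac{a}{3L},\qquad c_2=2e^{C_\delta/(3L)},\qquad c_3=\max\!\left(1,\frac{e^2 a}{3L}\right).
\]
The choice of $c_3$ guarantees $\ln\ln(c_3/c_1)$ is well defined and nonnegative. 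As $\delta\to0$, $c_2\to\infty$, so the hypothesis $2c_2c_3\ge c_1$ holds for all small $\delta$.

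\textbf{Step 3 (evaluate the explicit $x$).} Proposition~\ref{prop:ineq-hold-upper-bound} yields
\[
x_\delta=\frac{3L}{a}\Bigl[\ln 2+\frac{C_\delta}{3L}+\ln\ln\frac{c_3}{c_1}+\ln\ln\ln\Bigl(\frac{2c_2c_3}{c_1}\Bigr)^{2}\Bigr]
\]
satisfying the required inequality. Here $\ln(2c_2c_3/c_1)^2=\Theta(C_\delta)$, so the triple-log term is $O(\ln\ln C_\delta)$. Consequently
\[
x_\delta=\frac{C_\delta}{a}+O(\ln\ln\ln(1/\delta))=\frac{1}{a}\ln\frac1\delta+o\!\left(\ln\frac1\delta\right).
\]

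\textbf{Step 4 (integrality).} The stopping index must be an integer, so we use $\lceil x_\delta\rceil$. The function $f(t)=c_1t-\ln(c_2\ln c_3t)$ has derivative $c_1-1/(t\ln c_3t)$, which is positive once $t$ is large. Since $x_\delta\to\infty$, for small $\delta$ the inequality $f\ge0$ persists from $x_\delta$ to $\lceil x_\delta\rceil$, and we may also assume $x_\delta\ge e$. Therefore the infimum in the lemma is at most $x_\delta+1$, which gives the claim.

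The main obstacle is Step 3: one must check that all the auxiliary log terms, including the dependence of $c_2$ on $\delta$ inside the iterated logarithms, are genuinely $o(\ln(1/\delta))$. This is also where the side condition of Proposition~\ref{prop:ineq-hold-upper-bound} must be verified for small $\delta$.
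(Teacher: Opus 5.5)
Your proposal is correct and follows essentially the same route as the paper. Both arguments bound $\beta(t,\delta)$ by a deterministic function of $t$ and $\delta$, apply Proposition~\ref{prop:ineq-hold-upper-bound} with $c_1=a/(3|\mathcal{L}(\mathcal{T})|)$ and $c_2\propto e^{\mathcal{C}_{\text{exp}}(\ln(1/\delta)/|\mathcal{L}(\mathcal{T})|)/3}$, and finish with Proposition~\ref{prop:Cexp-order}. The only difference at that stage is that the paper uses Jensen's inequality to get $\ln(1+\ln(t/|\mathcal{L}(\mathcal{T})|))=\ln\ln(et/|\mathcal{L}(\mathcal{T})|)$ and takes $c_3=e/|\mathcal{L}(\mathcal{T})|$, where you use the cruder $N_\ell(t)\le t$ and $1+\ln t\le 2\ln t$.

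Your choice forcing $c_3/c_1\ge e^2$, together with your integrality step, is a genuine improvement over the paper's version. The paper's ratio is $c_3/c_1=3e/a$, which causes two problems:
\begin{itemize}
\item for $a\ge 3e$, $\ln\ln(c_3/c_1)$ is undefined;
\item for $1<c_3/c_1\le e$, the inequality asserted in Proposition~\ref{prop:ineq-hold-upper-bound} actually fails for large $c_2$ (e.g.\ $c_1=1$, $c_3=e$, $c_2=e^{100}$ gives $c_1x\approx 101.67<101.73\approx\ln(c_2\ln c_3x)$).
\end{itemize}
For a fixed ratio $c_3/c_1\ge e^2$, the inequality does hold once $c_2$ is large, which is exactly the regime your argument needs.
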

\begin{proof}
We have
       \begin{align*}
        &\inf \left\{ t \in \mathbb{N} : at \ge \beta(t,\delta) \right\}\\
        =& \inf \left\{ t \in \mathbb{N} : at \ge 3\sum_{\ell\in \mathcal{L}(\mathcal{T})}\ln(1 + \ln N_{\ell}(t)) + |\mathcal{L}(\mathcal{T})|\mathcal{C}_{\text{exp}}\left(\frac{\ln (1 / \delta)}{|\mathcal{L}(\mathcal{T})|}\right)\right\}\tag{by definition of $\beta(t,\delta)$}\\
        \le & \inf \left\{ t \in \mathbb{N} : at \ge 3 |\mathcal{L}(\mathcal{T})|\ln\left(1 + \ln \frac{t}{|\mathcal{L}(\mathcal{T})|}\right) + |\mathcal{L}(\mathcal{T})|\mathcal{C}_{\text{exp}}\left(\frac{\ln (1 / \delta)}{|\mathcal{L}(\mathcal{T})|}\right)\right\}\tag{by Jensen's inequality for $f(x)=\ln(1+\ln x)$ ($x\ge 1$)}\\
         = & \inf \left\{ t \in \mathbb{N} : \frac{at}{3|\mathcal{L}(\mathcal{T})|} \ge \ln \left(e^{\frac{1}{3}\mathcal{C}_{\text{exp}}\left(\frac{\ln (1 / \delta)}{|\mathcal{L}(\mathcal{T})|}\right)}\ln \frac{et}{|\mathcal{L}(\mathcal{T})|}\right) \right\}\\
         \le& \frac{3|\mathcal{L}(\mathcal{T})|}{a}\left(\frac{1}{3}\mathcal{C}_{\text{exp}}\left(\frac{\ln (1 / \delta)}{|\mathcal{L}(\mathcal{T})|}\right)+\ln\ln\frac{3e}{a}+\ln\ln\ln\left(e^{\frac{1}{3}\mathcal{C}_{\text{exp}}\left(\frac{\ln (1 / \delta)}{|\mathcal{L}(\mathcal{T})|}\right)}\frac{6e}{a}\right)^2\right)\tag{by Proposition~\ref{prop:ineq-hold-upper-bound}}\\
         =& \frac{3|\mathcal{L}(\mathcal{T})|}{a}\left(\frac{1}{3}\mathcal{C}_{\text{exp}}\left(\frac{\ln (1 / \delta)}{|\mathcal{L}(\mathcal{T})|}\right)+\ln\ln\frac{3e}{a}+\ln\ln\left(\frac{2}{3}\mathcal{C}_{\text{exp}}\left(\frac{\ln (1 / \delta)}{|\mathcal{L}(\mathcal{T})|}\right)+2\ln\frac{6e}{a}\right)\right)\\
         =& \frac{1}{a}\log\frac{1}{\delta}+o\left(\frac{\ln (1 / \delta)}{|\mathcal{L}(\mathcal{T})|}\right)+\frac{3|\mathcal{L}(\mathcal{T})|}{a}\left(\ln\ln\frac{3e}{a}+\ln\ln\left(\frac{2\ln (1 / \delta)}{3|\mathcal{L}(\mathcal{T})|}+o\left(\frac{\ln (1 / \delta)}{|\mathcal{L}(\mathcal{T})|}\right)+2\ln\frac{6e}{a}\right)\right)\tag{by Proposition~\ref{prop:Cexp-order}}\\
         =& \frac{1}{a}\ln\frac{1}{\delta}+o\left(\ln \frac{1}{\delta}\right).
    \end{align*}
\end{proof}

Finally, we provide the proof of Theorem~\ref{thm:as_upper_bound}.
\begin{proof}[Proof of Theorem~\ref{thm:as_upper_bound}]
    For all $\epsilon > 0$, there exists $t_1$ such that $(1 + \epsilon)Z_{s_0}(t)/t \geq d_{s_0}(\bm{\mu})$  for all $t \geq t_1$ by Lemma~\ref{lem:GLR_converge}.
Thus,
    \begin{align*}
        \tau_\delta 
        = &\inf \left\{ t \in \mathbb{N} : Z_{s_0}(t) \ge \beta(t,\delta) \right\}\\
        \leq &t_1 \vee \inf \left\{ t \in \mathbb{N} : t (1 + \epsilon)^{-1} d_{s_0}(\bm{\mu}) \ge \beta(t,\delta) \right\}\\
         =& t_1\vee \frac{(1+\epsilon)\ln (1 / \delta)}{d_{s_0}(\bm{\mu})}+o\left(\ln (1 / \delta)\right),\tag{by Lemma~\ref{lem:stopping-time-upper-bound}}         
    \end{align*}
where $a\vee b=\max\{a,b\}$ for any $a,b\in \mathbb{N}$.
    Therefore, $\mathbb{P}(\tau_\delta < +\infty) = 1$.
    Since $t_1$ does not depend on $\delta$, we obtain the asymptotically upper bound 
    \begin{align*}
        \limsup_{\delta \to +0} \frac{\tau_\delta}{\log(1 / \delta)} \leq \frac{1 + \epsilon}{ d_{s_0}(\bm{\mu})}.
    \end{align*}
    Setting $\epsilon \to +0$, the proof is complete.
\end{proof}

\subsection{Continuity of the Optimal Proportion}

Under Assumption~\ref{assumption:no_exact_theta} and \ref{assumption:unique_weight}, we show its continuity at $\bm{\mu}$ for the Thresholding MCTS problem.
\begin{lemma}[Continuity of the optimal proportion $\bm{w}^{s_0}(\cdot)$ at $\bm{\mu}$] \label{lem:weight_continuity}
    For any $\epsilon > 0$, there exists $0 < \xi$ such that
    \[ 
    \max_{\ell \in \mathcal{L}(\mathcal{T})} |w^{s_0}_\ell(\bm{\mu}') - w^{s_0}_\ell(\bm{\mu})| < \epsilon
    \text{\ \ for all \ \ } \bm{\mu}' \in \prod_{\ell\in \mathcal{L}(\mathcal{T})}[\mu_{\ell}-\xi, \mu_{\ell}+\xi].\]
\end{lemma}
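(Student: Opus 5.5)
Because the optimal proportion is produced by Recursive Formula~\ref{rf:optimal_weight_win} (or \ref{rf:optimal_weight_lose}), I will prove continuity by induction on node height, using that the formula's discrete decisions are locally constant near $\bm{\mu}$. First, choose $\xi$ small enough that three things hold for every $\bm{\mu}'$ in the box:
\begin{itemize}
\item by Assumption~\ref{assumption:no_exact_theta}, $a_{s_0}(\bm{\mu}')=a_{s_0}(\bm{\mu})$, since $V_{s_0}$ is continuous (it is a composition of max and min) and $V_{s_0}(\bm{\mu})\neq\theta$;
\item for every leaf with $\mu_\ell\neq\theta$, the sign of $\mu'_\ell-\theta$ agrees with that of $\mu_\ell-\theta$;
\item for every node $s$ with $d_s(\bm{\mu})>0$, also $d_s(\bm{\mu}')>0$.
\end{itemize}
The same recursive formula (win or lose) therefore applies to both $\bm{\mu}$ and $\bm{\mu}'$.

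Next, I show by induction on height that $d_s(\cdot)$ is continuous at $\bm{\mu}$ for every node $s$.
\begin{itemize}
\item At a leaf, $d_s=d(\mu_s,\theta)\mathbb{1}[\mu_s\ge\theta]$ in the win case; this is continuous because $d(\cdot,\theta)$ is continuous and vanishes at $\theta$. A leaf with $\mu_\ell=\theta$ is therefore harmless.
\item At an internal node, $d_s$ is either a max of continuous functions or $1/\sum_c 1/d_c$. The latter map is continuous on $[0,\infty)^{|\mathcal{C}(s)|}$ with the convention that it equals $0$ as soon as some $d_c=0$. This convention reproduces the ``otherwise $0$'' case, and $0\le 1/\sum_c 1/d_c\le\min_c d_c\to0$.
\end{itemize}

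Then I handle the weights. I only need continuity of $w^{s_0}_\ell$, and I propagate it top-down along nodes that carry mass.
\begin{itemize}
\item Consider a MAX node in the win case with $d_s(\bm{\mu})>0$. Assumption~\ref{assumption:unique_weight} forces $c^*(s)$ to be the unique maximizer of $d_c(\bm{\mu})$ whenever $s$ receives positive mass in $\bm{w}^{s_0}(\bm{\mu})$; otherwise two different child choices would give distinct optimal vectors. By continuity of the $d_c$, shrinking $\xi$ keeps $c^*(s)$ the same maximizer at $\bm{\mu}'$, so the zero/nonzero pattern is preserved.
\item At a MIN node with $d_s>0$, the factor $(1/d_c)/\sum_{c'}1/d_{c'}$ is continuous because all $d_c(\bm{\mu})>0$.
\item Multiplying these continuous factors along root-to-leaf paths gives continuity of $w^{s_0}_\ell$ for leaves reached through nodes with positive $d$. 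All other leaves have weight $0$ at both $\bm{\mu}$ and $\bm{\mu}'$, since their ancestor is a non-selected MAX child.
\end{itemize}

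The main obstacle is the ``otherwise'' (arbitrary weight) branch. It occurs at a node $s$ with $d_s(\bm{\mu})=0$ that still receives positive mass from its parent. Uniqueness (Assumption~\ref{assumption:unique_weight}) should exclude this whenever $d_{s_0}(\bm{\mu})>0$: any such subtree with positive mass would admit many optimal weight vectors. I also need $d_{s_0}(\bm{\mu})>0$ itself. This follows from Assumption~\ref{assumption:no_exact_theta}: by Proposition~\ref{prop:non-zero-weight}-type reasoning, the path from the root along the optimal children reaches leaves strictly on the correct side of $\theta$, so every relevant $d$ is positive.

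I would verify this carefully first, since it is what makes all mass-carrying nodes fall in the well-behaved branches. If $d_s(\bm{\mu})=0$ at a massless node, the weights there are irrelevant: the parent factor is already $0$ at $\bm{\mu}$, and it stays near $0$ at $\bm{\mu}'$ by continuity. Taking $\xi$ as the minimum over the finitely many nodes and leaves finishes the proof.
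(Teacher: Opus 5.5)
Your argument is correct, and its skeleton matches the paper's:
\begin{itemize}
\item the answer is preserved near $\bm{\mu}$ by continuity of $V_{s_0}$ and Assumption~\ref{assumption:no_exact_theta};
\item every $d_s$ is continuous;
\item the support is locally constant because the selected child at each mass-carrying MAX node stays a strict maximizer.
\end{itemize}
Where you differ is in how the pieces are assembled.

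The paper works quantitatively and then uses a closed form. Proposition~\ref{prop:d-value-change} shows that perturbing each leaf $d_\ell$ by less than $\epsilon$ perturbs each $d_s$ by less than $\epsilon$, via Proposition~\ref{prop:ineq_d}. Lemma~\ref{lem:same-nonzero-range} then freezes the support using explicit margins $\Gamma^{*}_{\bm{\mu}}/2$ and $\Gamma^{\mathrm{abs}}_{\bm{\mu}}$, relying on the path characterization in Proposition~\ref{prop:WeightPositiveCond}. Finally, Proposition~\ref{prop:weight_val_form} gives $w^{s_0}_\ell=d_{s_0}/d(\mu_\ell,\theta)$, so continuity of the weights reduces to continuity of two scalar functions.

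You instead get continuity of $d_s$ softly. The map $x\mapsto 1/\sum_c 1/x_c$, set to $0$ on the boundary, is continuous on $[0,\infty)^{n}$ because it lies between $0$ and $\min_c x_c$. You then write $w^{s_0}_\ell$ as a product of per-node factors along the root-to-leaf path. In the win case each factor is either frozen at $0$ or $1$ (MAX nodes) or a ratio of $d$'s that are all positive at $\bm{\mu}$ (MIN nodes); the roles swap in the lose case.

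This route bypasses the closed form. It gives no explicit $\xi$, but the later use in Appendix~\ref{append:sample_complexity} only needs existence. Your version also states explicitly that Assumption~\ref{assumption:unique_weight} forces $c^*(s)$ to be a strict maximizer at every mass-carrying MAX node. The paper needs exactly this for $\Gamma^{*}_{\bm{\mu}}>0$ but leaves it tacit.

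Two small corrections:
\begin{itemize}
\item Excluding the ``otherwise'' branch on mass-carrying nodes is not a uniqueness argument. It is a one-line top-down induction (win case): $d_{s_0}(\bm{\mu})>0$ by Assumption~\ref{assumption:no_exact_theta} and Proposition~\ref{prop:RelationBetweenVandd}. A MAX node with $d_s>0$ passes mass only to $c^*(s)$, which has $d_{c^*(s)}=d_s>0$. A MIN node with $d_s>0$ has $d_c>0$ for every child.
\item The weight on unselected MAX children is exactly $0$ at $\bm{\mu}'$, not merely close to $0$, because $c^*(s)$ is unchanged throughout the box.
\end{itemize}
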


The function $w^{s_0}_{\ell}(\cdot)$ is continuous not in the whole mean-vector domain but in a part of the domain that includes $\bm{\mu}$. The following propositions, a corollary, and a lemma are used to make clear such a part of the domain.

The first proposition states the continuity of the function $V_s(\cdot)$ for $s\in S$. 
\begin{proposition}
    $|\mu'_{\ell}-\mu_{\ell}|< \epsilon$ for $\ell\in\mathcal{L}(\mathcal{T})\Rightarrow |V_{s}(\bm{\mu}')-V_{s}(\bm{\mu})|<\epsilon$  for $s\in S$.    \label{prop:tree-value-change}
\end{proposition}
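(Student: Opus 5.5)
We argue by structural induction on the height of $s$, using only the fact that $\max$ and $\min$ of finitely many reals are $1$-Lipschitz with respect to the sup-norm. Fix $\bm{\mu}'$ with $|\mu'_\ell-\mu_\ell|<\epsilon$ for every $\ell\in\mathcal{L}(\mathcal{T})$.

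For the base case (height $0$), $s$ is a leaf, so $V_s(\bm{\mu}')=\mu'_s$ and $V_s(\bm{\mu})=\mu_s$. The claim is then exactly the hypothesis $|\mu'_s-\mu_s|<\epsilon$.

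For the inductive step, let $s$ be internal and assume $|V_c(\bm{\mu}')-V_c(\bm{\mu})|<\epsilon$ for every $c\in\mathcal{C}(s)$. Suppose first that $L(s)=\text{`MAX'}$. Choose $c_1$ attaining $V_s(\bm{\mu}')=V_{c_1}(\bm{\mu}')$ and $c_2$ attaining $V_s(\bm{\mu})=V_{c_2}(\bm{\mu})$. Then
\[
V_s(\bm{\mu}')-V_s(\bm{\mu})\le V_{c_1}(\bm{\mu}')-V_{c_1}(\bm{\mu})<\epsilon
\quad\text{and}\quad
V_s(\bm{\mu})-V_s(\bm{\mu}')\le V_{c_2}(\bm{\mu})-V_{c_2}(\bm{\mu}')<\epsilon .
\]
Here the first inequality uses $V_s(\bm{\mu})\ge V_{c_1}(\bm{\mu})$, and the second uses $V_s(\bm{\mu}')\ge V_{c_2}(\bm{\mu}')$. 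Together they give $|V_s(\bm{\mu}')-V_s(\bm{\mu})|<\epsilon$.

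Because $\mathcal{C}(s)$ is finite, the maximizers exist and the inequalities stay strict. The `MIN' case follows symmetrically, either by the mirrored argument or by writing $\min_c x_c=-\max_c(-x_c)$.

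This is routine, and no step is a real obstacle. The only point needing care is to pick the maximizer separately for each of the two inequalities, since $\bm{\mu}$ and $\bm{\mu}'$ may be maximized at different children. Induction over height then yields the statement for every $s\in S$.
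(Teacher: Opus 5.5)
Your proof is correct and follows the same route as the paper: induction from the leaves upward, with the inductive step resting on the fact that a max or min over the finitely many children preserves the pointwise bounds $V_c(\bm{\mu})-\epsilon<V_c(\bm{\mu}')<V_c(\bm{\mu})+\epsilon$. The paper asserts this preservation directly for both max and min, whereas you verify it by choosing a separate maximizer for each direction, which is a more explicit justification of the same step.
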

\begin{proof}
    This can be proved by mathematical induction.
    For $s\in \mathcal{L}(\mathcal{T})$, 
    $|V_{s}(\bm{\mu}')-V_{s}(\bm{\mu})|<\epsilon$ holds by the assumption.
    
    Assume that $|V_{c}(\bm{\mu}')-V_{c}(\bm{\mu})|<\epsilon$
    for all $c\in \mathcal{C}(s)$ for node $s\in S\setminus \mathcal{L}(\mathcal{T})$.
    Since $V_c(\bm{\mu})-\epsilon<V_c(\bm{\mu}')<V_c(\bm{\mu})+\epsilon$,
    \begin{align*}
        \max_{c\in\mathcal{C}(s)}V_c(\bm{\mu})-\epsilon<&\max_{c\in\mathcal{C}(s)}V_c(\bm{\mu}')<\max_{c\in\mathcal{C}(s)}V_c(\bm{\mu})+\epsilon \text{ and}\\
        \min_{c\in\mathcal{C}(s)}V_c(\bm{\mu})-\epsilon<&\min_{c\in\mathcal{C}(s)}V_c(\bm{\mu}')<\min_{c\in\mathcal{C}(s)}V_c(\bm{\mu})+\epsilon   
    \end{align*}
    hold. For node $s$, $V_s(\bm{\mu})=\max_{c\in\mathcal{C}(s)}V_c(\bm{\mu})$ if $L(s)=\text{`MAX'}$ and 
    $V_s(\bm{\mu})=\min_{c\in\mathcal{C}(s)}V_c(\bm{\mu})$ 
    if $L(s)=\text{`MIN'}$, thus in both cases,
    $V_s(\bm{\mu})-\epsilon<V_s(\bm{\mu}')<V_s(\bm{\mu})+\epsilon$
    holds. Therefore, $|V_s(\bm{\mu}')-V_s(\bm{\mu})|<\epsilon$ holds.
\end{proof}
We have the following corollary of this proposition.
\begin{corollary}\label{cor:same-answer-range}
    $|\mu'_{\ell}-\mu_{\ell}|<|V_{s_0}(\bm{\mu})-\theta| \text{ for } \ell\in \mathcal{L}(\mathcal{T})\Rightarrow a_{s_0}(\bm{\mu}')=a_{s_0}(\bm{\mu})$   
\end{corollary}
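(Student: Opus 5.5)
The claim follows directly from Proposition~\ref{prop:tree-value-change}, applied with $\epsilon = |V_{s_0}(\bm{\mu})-\theta|$. By Assumption~\ref{assumption:no_exact_theta}, this $\epsilon$ is strictly positive, so the hypothesis $|\mu'_{\ell}-\mu_{\ell}|<\epsilon$ for all $\ell\in\mathcal{L}(\mathcal{T})$ is non-vacuous. If $V_{s_0}(\bm{\mu})=\theta$, the hypothesis is empty and the implication holds trivially.

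Applying Proposition~\ref{prop:tree-value-change} at the root $s=s_0$ gives $|V_{s_0}(\bm{\mu}')-V_{s_0}(\bm{\mu})|<|V_{s_0}(\bm{\mu})-\theta|$. We then split into two cases.

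\emph{Case $V_{s_0}(\bm{\mu})>\theta$.} The bound gives
\[
V_{s_0}(\bm{\mu}')>V_{s_0}(\bm{\mu})-(V_{s_0}(\bm{\mu})-\theta)=\theta,
\]
so $V_{s_0}(\bm{\mu}')\ge\theta$.

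\emph{Case $V_{s_0}(\bm{\mu})<\theta$.} Symmetrically,
\[
V_{s_0}(\bm{\mu}')<V_{s_0}(\bm{\mu})+(\theta-V_{s_0}(\bm{\mu}))=\theta.
\]

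In both cases $V_{s_0}(\bm{\mu}')$ lies on the same side of $\theta$ as $V_{s_0}(\bm{\mu})$. The problem setting states that $a_s(\bm{\lambda})=\text{`win'}$ is equivalent to $V_s(\bm{\lambda})\ge\theta$. Applying this at $s_0$ to both $\bm{\mu}$ and $\bm{\mu}'$ gives $a_{s_0}(\bm{\mu}')=a_{s_0}(\bm{\mu})$.

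The only point needing care is that a strict inequality is required on the `lose' side. This is why the strictness of the bound in Proposition~\ref{prop:tree-value-change} matters.
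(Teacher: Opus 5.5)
Your proposal is correct and matches the paper's proof: apply Proposition~\ref{prop:tree-value-change} at the root with $\epsilon=|V_{s_0}(\bm{\mu})-\theta|$, then split on which side of $\theta$ the value $V_{s_0}(\bm{\mu})$ lies. You also handle the vacuous case $V_{s_0}(\bm{\mu})=\theta$ explicitly, and you correctly note that the strict inequality is what the `lose' side needs; both are accurate refinements of the paper's argument.
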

\begin{proof}
    By proposition~\ref{prop:tree-value-change}, $|\mu'_{\ell}-\mu_{\ell}|<|V_{s_0}(\bm{\mu})-\theta| \text{ for } \ell\in \mathcal{L}(\mathcal{T})$ implies $|V_{s_0}(\bm{\mu}')-V_{s_0}(\bm{\mu})|<|V_{s_0}(\bm{\mu})-\theta|$.
    Thus,
    \begin{align*}
        V_{s_0}(\bm{\mu})-|V_{s_0}(\bm{\mu})-\theta|<V_{s_0}(\bm{\mu}')<V_{s_0}(\bm{\mu})+|V_{s_0}(\bm{\mu})-\theta|
    \end{align*}
    Thus,
    \begin{align*}
        V_{s_0}(\bm{\mu})\ge \theta \Rightarrow & V_{s_0}(\bm{\mu}')>V_{s_0}(\bm{\mu})-(V_{s_0}(\bm{\mu})-\theta)=\theta \text{ and}\\
        V_{s_0}(\bm{\mu})< \theta \Rightarrow & V_{s_0}(\bm{\mu}')<V_{s_0}(\bm{\mu})+(\theta-V_{s_0}(\bm{\mu}))=\theta
    \end{align*}
    hold, which means $a_{s_0}(\bm{\mu}')=a_{s_0}(\bm{\mu})$.
\end{proof}

The next proposition is used to prove Proposition~\ref{prop:d-value-change}.
\begin{proposition}\label{prop:ineq_d}
For any positive real numbers $a_1,a_2,\cdots,a_n$ with $n\ge 1$, the following inequalities hold.
\begin{itemize}
    \item[(1)] For $0<\epsilon<\min_{i=1,2,\dots,n} a_i$,
    \[
    \frac{1}{\sum_{i=1}^n\frac{1}{a_i}}-\epsilon<\frac{1}{\sum_{i=1}^n\frac{1}{a_i-\epsilon}}.
    \]
    \item[(2)] For $\epsilon>0$,
    \[
        \frac{1}{\sum_{i=1}^n\frac{1}{a_i+\epsilon}}<\frac{1}{\sum_{i=1}^n\frac{1}{a_i}}+\epsilon.
    \]
\end{itemize}
\end{proposition}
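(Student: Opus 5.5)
The plan is to view $H(x_1,\dots,x_n)=\bigl(\sum_{i=1}^n 1/x_i\bigr)^{-1}$ as a function on $(0,\infty)^n$. The key fact is that $H$ is increasing in each coordinate and has "slope at most one" along the diagonal direction $(1,\dots,1)$. Concretely, I will establish the single inequality
\[
H(a_1+\epsilon,\dots,a_n+\epsilon) < H(a_1,\dots,a_n)+\epsilon
\]
for all $a_i>0$ and $\epsilon>0$. This is exactly part (2). Part (1) then follows by substituting $a_i' = a_i-\epsilon>0$, which is allowed because $\epsilon<\min_i a_i$. Indeed, (2) applied to $(a_1',\dots,a_n')$ gives $H(a_1,\dots,a_n) < H(a_1-\epsilon,\dots,a_n-\epsilon)+\epsilon$, and rearranging gives (1).

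To prove (2), I will use a derivative argument. Set $g(t)=H(a_1+t,\dots,a_n+t)$ for $t\ge 0$. Then
\[
g'(t)=\frac{\sum_i (a_i+t)^{-2}}{\bigl(\sum_i (a_i+t)^{-1}\bigr)^2}.
\]
Write $b_i=(a_i+t)^{-1}>0$. Then $g'(t)=\sum_i b_i^2/(\sum_i b_i)^2$. Since the $b_i$ are positive, $(\sum_i b_i)^2 = \sum_i b_i^2 + 2\sum_{i<j} b_ib_j \ge \sum_i b_i^2$. This gives $g'(t)\le 1$, with strict inequality when $n\ge 2$. Integrating over $[0,\epsilon]$ yields $g(\epsilon)-g(0)\le\epsilon$, with strict inequality for $n\ge2$.

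The only delicate point is strictness in the case $n=1$. There $H(a_1)=a_1$, so (2) reads $a_1+\epsilon < a_1+\epsilon$, which is false as an equality. Likewise (1) becomes an equality for $n=1$. So the stated strict inequalities hold only for $n\ge 2$, and for $n=1$ I will record the non-strict version, which is presumably all that Proposition~\ref{prop:d-value-change} needs. Nodes with a single child contribute $d_s=d_c$ with no change.

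If the strict form is needed uniformly, I will state it as $\le$, which holds for all $n\ge1$, and note that it is strict when $n\ge 2$.
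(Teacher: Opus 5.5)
Your proof is correct and is essentially the paper's argument. The paper sets $f(x)=H(a)+x-H(a+x)$ on $(-\min_i a_i,\infty)$, computes $f'(x)=1-\sum_i(a_i+x)^{-2}/\bigl(\sum_i(a_i+x)^{-1}\bigr)^2$, and uses $f(0)=0$ to get both parts at once, whereas you prove (2) from $g'\le 1$ and deduce (1) by the shift $a_i\mapsto a_i-\epsilon$.

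Your $n=1$ caveat is also right and catches a flaw the paper misses. The paper asserts $f'>0$, but $f'\equiv 0$ when $n=1$, so both inequalities are equalities there and the strict form needs $n\ge 2$. Proposition~\ref{prop:d-value-change} uses only the non-strict version, so nothing downstream breaks.
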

\begin{proof}
    Define $f(x)$ as
    \[
    f(x)=\frac{1}{\sum_{i=1}^n\frac{1}{a_i}}+x-\frac{1}{\sum_{i=1}^n\frac{1}{a_i+x}}.
    \]
    Then, $f(x)$ is continuous on $(-\min_{i=1,2,\dots,n} a_i,\infty)$.  
    Since
    \[
    \frac{d}{dx}f(x)=1-\frac{\sum_{i=1}^n\frac{1}{(a_i+x)^2}}{\left(\sum_{i=1}^n\frac{1}{a_i+x}\right)^2}>0
    \]
    on $(-\min_{i=1,2,\dots,n} a_i,\infty)$ and $f(0)=0$,
     (1) $f(x)<0$ on $(\min_{i=1,2,\dots,n} a_i,0)$ and (2) $f(x)>0$ on $(0,\infty)$ hold.
\end{proof}

The following proposition is on the continuity of function $d_s(\cdot)$ for $s\in S$.
\begin{proposition}
    $|d_{\ell}(\bm{\mu}') - d_{\ell}(\bm{\mu})|<\epsilon\text{ for } \ell \in \mathcal{L}(\mathcal{T})
    \Rightarrow
    |d_{s}(\bm{\mu}') - d_{s}(\bm{\mu})|<\epsilon \text{ for } s\in S$.
    \label{prop:d-value-change}
\end{proposition}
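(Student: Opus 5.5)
We prove the statement by induction on the height of $s$, using only the recursive description of $d_s(\cdot)$ in Recursive Formula~\ref{rf:optimal_weight_win} (and its symmetric counterpart, Recursive Formula~\ref{rf:optimal_weight_lose}). Implicitly, both $\bm{\mu}$ and $\bm{\mu}'$ are evaluated with the same branch of the formula, i.e.\ $a_{s_0}(\bm{\mu}')=a_{s_0}(\bm{\mu})$. This is the situation in which the proposition will be applied, via Corollary~\ref{cor:same-answer-range}. We also interpret $d_\ell(\cdot)$ for a leaf as the leaf case of the formula: $d(\mu_\ell,\theta)$ on the `good' side of $\theta$, and $0$ otherwise. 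With this convention the base case $s\in\mathcal{L}(\mathcal{T})$ is exactly the hypothesis.

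For the inductive step, assume $|d_c(\bm{\mu}')-d_c(\bm{\mu})|<\epsilon$ for all $c\in\mathcal{C}(s)$. Take the case $a_{s_0}(\bm{\mu})=$`win'; the `lose' case is identical after swapping `MAX' and `MIN'. If $L(s)=$`MAX', then $d_s=\max_{c}d_c$. Since the maximum is $1$-Lipschitz in the sup norm,
\[
\max_c d_c(\bm{\mu})-\epsilon<\max_c d_c(\bm{\mu}')<\max_c d_c(\bm{\mu})+\epsilon,
\]
exactly as in the proof of Proposition~\ref{prop:tree-value-change}.

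If $L(s)=$`MIN', define $H(\bm{a})=1/\sum_c 1/a_c$ when all $a_c>0$, and $H(\bm{a})=0$ otherwise. Then $d_s=H((d_c)_c)$. Note that $H$ is monotone nondecreasing in each coordinate, including across the boundary where some coordinate becomes $0$, because $H\ge 0$.

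For the upper bound, use monotonicity of $H$ and then Proposition~\ref{prop:ineq_d}(2):
\[
d_s(\bm{\mu}')\le H\bigl((d_c(\bm{\mu})+\epsilon)_c\bigr)<H\bigl((d_c(\bm{\mu}))_c\bigr)+\epsilon.
\]
When some $d_c(\bm{\mu})=0$, this needs a separate check: then $d_c(\bm{\mu}')<\epsilon$, and $H\le\min_c a_c$ gives $d_s(\bm{\mu}')<\epsilon=d_s(\bm{\mu})+\epsilon$.

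For the lower bound, we may assume $d_s(\bm{\mu})>\epsilon$, since otherwise the bound is trivial as $d_s(\bm{\mu}')\ge0$. Then all $d_c(\bm{\mu})\ge d_s(\bm{\mu})>\epsilon$ because $H\le\min_c a_c$. So $0<\epsilon<\min_c d_c(\bm{\mu})$, and Proposition~\ref{prop:ineq_d}(1) together with monotonicity gives
\[
d_s(\bm{\mu}')\ge H\bigl((d_c(\bm{\mu})-\epsilon)_c\bigr)>d_s(\bm{\mu})-\epsilon.
\]

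The main obstacle is the boundary bookkeeping in the `MIN' (resp.\ `MAX' in the `lose' case) branch. Three things need care: the ``otherwise $0$'' clause makes $H$ piecewise, the strict inequalities must survive when some children have $d_c=0$ for exactly one of $\bm{\mu},\bm{\mu}'$, and the choice of case in the formula must be the same for $\bm{\mu}$ and $\bm{\mu}'$. The three facts above handle this: monotonicity of $H$ extended by $0$, the bound $H\le\min$, and the reduction to $d_s(\bm{\mu})>\epsilon$. Finally, because the `win'/`lose' branch is fixed, the leaf cases (`$\mu_s\ge\theta$' versus `otherwise') are already absorbed into the definition of $d_\ell$ in the hypothesis, so no further case analysis is needed.
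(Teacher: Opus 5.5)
Your route is essentially the paper's. Both use induction on height, the $1$-Lipschitz property of $\max$ for the `MAX' branch, and Proposition~\ref{prop:ineq_d} for the harmonic combination. Where the paper splits into four cases according to which children have $d_c=0$ under $\bm{\mu}$ and $\bm{\mu}'$, you use monotonicity of the zero-extended $H$ together with $H\le\min_c a_c$, which is tidier. You are also right to state $a_{s_0}(\bm{\mu}')=a_{s_0}(\bm{\mu})$ explicitly. The paper's proof tacitly evaluates both vectors with the same branch of the recursive formula, and without that hypothesis the claim can fail.

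There is one small slip in the lower bound. You reduce to $d_s(\bm{\mu})>\epsilon$ on the grounds that ``otherwise the bound is trivial as $d_s(\bm{\mu}')\ge 0$.'' That only gives $d_s(\bm{\mu}')\ge d_s(\bm{\mu})-\epsilon$, which is not strict when $d_s(\bm{\mu})=\epsilon$ exactly. The proposition requires $|d_s(\bm{\mu}')-d_s(\bm{\mu})|<\epsilon$.

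The fix is one line with your own tools. If $d_s(\bm{\mu})=\epsilon>0$, then $H\le\min$ gives $d_c(\bm{\mu})\ge\epsilon$ for every $c$. Hence $d_c(\bm{\mu}')>d_c(\bm{\mu})-\epsilon\ge 0$ for every $c$, so $d_s(\bm{\mu}')>0=d_s(\bm{\mu})-\epsilon$.

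The paper avoids this issue because its ``some $d_c(\bm{\mu})\le\epsilon$'' subcase sits inside the case where all $d_c(\bm{\mu}')>0$, so $0<d_s(\bm{\mu}')$ is strict there. With that patch your proof is complete.
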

\begin{proof}
    This can be proved by mathematical induction.
    For $s \in \mathcal{L}(\mathcal{T})$, 
    $|d_{s}(\bm{\mu}')-d_{s}(\bm{\mu})|<\epsilon$ holds by the assumption.
    
    Assume that $|d_{c}(\bm{\mu}') - d_{c}(\bm{\mu})| < \epsilon$ for all $c \in \mathcal{C}(s)$ for node $s \in S \setminus \mathcal{L}(\mathcal{T})$,
    and we obtain $d_c(\bm{\mu}) - \epsilon < d_c(\bm{\mu}') < d_c(\bm{\mu}) + \epsilon$.
    
    First, we consider $a_{s_0}(\bm{\mu}') = \text{`win'}$.
    
    In the case with  $L(s) = \text{`MAX'}$,
    \begin{align*}
        \max_{c\in\mathcal{C}(s)}d_c(\bm{\mu})-\epsilon <
        \max_{c\in\mathcal{C}(s)}d_c(\bm{\mu}') < \max_{c\in\mathcal{C}(s)}d_c(\bm{\mu})+\epsilon 
    \end{align*}
    holds. Since $d_s(\bm{\mu}) = \max_{c \in \mathcal{C}(s)} d_c(\bm{\mu})$,
    \begin{align*}
        d_s(\bm{\mu})-\epsilon <
        d_s(\bm{\mu}') < d_s(\bm{\mu})+\epsilon 
    \end{align*}
    holds.
    
    Consider the case with $L(s) = \text{`MIN'}$.
    
    If $d_c(\bm{\mu})=d_{c'}(\bm{\mu}')=0$ for some $c,c' \in \mathcal{C}(s)$, then $d_s(\bm{\mu})=d_s(\bm{\mu}')=0$.
    Thus, $|d_{s}(\bm{\mu}') - d_{s}(\bm{\mu})|<\epsilon$ holds.
    
    If $d_c(\bm{\mu})=0$ for some $c\in \mathcal{C}(s)$ and $d_c(\bm{\mu}')>0$ for all $c\in \mathcal{C}(s)$, then $d_s(\bm{\mu})=0$ and 
    \[
    d_s(\bm{\mu}') = \frac{1}{\sum_{c \in \mathcal{C}(s)} \frac{1}{d_c(\bm{\mu}')}}<\frac{1}{\frac{1}{\epsilon}}=\epsilon.
    \]
    Thus, $|d_{s}(\bm{\mu}') - d_{s}(\bm{\mu})|<\epsilon$ holds.

  If $d_c(\bm{\mu})>0$ for all $c\in \mathcal{C}(s)$ and $d_c(\bm{\mu}')=0$ for some $c\in \mathcal{C}(s)$, then $d_c(\bm{\mu})<\epsilon$ by the induction assumption.
    In this case, $d_s(\bm{\mu}')=0$ and 
  \[
    d_s(\bm{\mu}) = \frac{1}{\sum_{c \in \mathcal{C}(s)} \frac{1}{d_c(\bm{\mu})}}<\frac{1}{\frac{1}{\epsilon}}=\epsilon.
    \]
    Thus, $|d_{s}(\bm{\mu}') - d_{s}(\bm{\mu})|<\epsilon$ holds.

   If $d_c(\bm{\mu}),d_c(\bm{\mu}')>0$ for all $c\in \mathcal{C}(s)$, then
    \begin{align*}
       d_s(\bm{\mu}') =
        \frac{1}{\sum_{c\in\mathcal{C}(s)} \frac{1}{d_c(\bm{\mu}')}}
        < \frac{1}{\sum_{c\in\mathcal{C}(s)} \frac{1}{d_c(\bm{\mu}) + \epsilon}}
        \leq \frac{1}{\sum_{c\in\mathcal{C}(s)} \frac{1}{d_c(\bm{\mu})}} + \epsilon
        = d_s(\bm{\mu})+\epsilon
    \end{align*}
    holds, where the last inequality holds by Proposition~\ref{prop:ineq_d}. When $d_c(\bm{\mu})>\epsilon$ for all $c\in \mathcal{C}(s)$,
   \begin{align*}
    d_s(\bm{\mu})-\epsilon=
        \frac{1}{\sum_{c\in\mathcal{C}(s)} \frac{1}{d_c(\bm{\mu})}} - \epsilon
        \leq \frac{1}{\sum_{c\in\mathcal{C}(s)} \frac{1}{d_c(\bm{\mu}) - \epsilon}}
        < \frac{1}{\sum_{c\in\mathcal{C}(s)} \frac{1}{d_c(\bm{\mu}')}}
        =d_s(\bm{\mu}')
    \end{align*}    
    holds, where the first inequality holds by Proposition~\ref{prop:ineq_d}. when $d_c(\bm{\mu})\le \epsilon$ for some $c\in \mathcal{C}(s)$,
  \begin{align*}
    d_s(\bm{\mu})-\epsilon=
        \frac{1}{\sum_{c\in\mathcal{C}(s)} \frac{1}{d_c(\bm{\mu})}} - \epsilon
        \leq \frac{1}{\frac{1}{\epsilon}}-\epsilon=0
        <d_s(\bm{\mu}')
    \end{align*}  
    holds.
    Therefore, $|d_{s}(\bm{\mu}') - d_{s}(\bm{\mu})|<\epsilon$ holds.
    
    Similarly, we can show   $|d_{s}(\bm{\mu}') - d_{s}(\bm{\mu})|<\epsilon$ even if $a_{s_0}(\bm{\mu}') = \text{`lose'}$.
\end{proof}

The following proposition states a relation between the functions $d_s(\cdot)$ and $V_s(\cdot)$, and is used in the proof of Proposition~\ref{prop:WeightPositiveCond}.
\begin{proposition}\label{prop:RelationBetweenVandd}
   For $s\in S$,
  \[
  d_s(\bm{\mu})>0 \Leftrightarrow V_s(\bm{\mu})\begin{cases}
      >\theta & (a_{s_0}(\bm{\mu})=\text{`win'})\\
      <\theta & (a_{s_0}(\bm{\mu})=\text{`lose'}).
  \end{cases}
  \] 
\end{proposition}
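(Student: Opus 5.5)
We prove the statement by induction on the height of $s$, using Recursive Formula~\ref{rf:optimal_weight_win} when $a_{s_0}(\bm{\mu})=$`win' and Recursive Formula~\ref{rf:optimal_weight_lose} when $a_{s_0}(\bm{\mu})=$`lose'. The two cases are symmetric under swapping `MAX'/`MIN' and reversing the inequalities, so we write out only the `win' case.

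\textbf{Base case.} Let $s$ be a leaf. By Recursive Formula~\ref{rf:optimal_weight_win}, $d_s(\bm{\mu})=d(\mu_s,\theta)$ if $\mu_s\ge\theta$, and $d_s(\bm{\mu})=0$ otherwise. For distributions in a one-parameter exponential family, $d(x,y)>0$ if and only if $x\neq y$. Hence $d_s(\bm{\mu})>0$ holds exactly when $\mu_s\ge\theta$ and $\mu_s\neq\theta$, that is, when $V_s(\bm{\mu})=\mu_s>\theta$. This is the only place where we need a property of $d$, namely that it vanishes only on the diagonal.

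\textbf{Inductive step.} Let $s$ be an internal node and assume the equivalence holds for every child $c\in\mathcal{C}(s)$.

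If $L(s)=$`MAX', then $d_s(\bm{\mu})=\max_{c}d_c(\bm{\mu})$. Since every $d_c(\bm{\mu})\ge 0$, we have $d_s(\bm{\mu})>0$ if and only if some child satisfies $d_c(\bm{\mu})>0$. By the induction hypothesis this holds if and only if some child satisfies $V_c(\bm{\mu})>\theta$, which is equivalent to $\max_c V_c(\bm{\mu})=V_s(\bm{\mu})>\theta$.

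If $L(s)=$`MIN', the formula gives $d_s(\bm{\mu})=1/\sum_c 1/d_c(\bm{\mu})>0$ when all $d_c(\bm{\mu})>0$, and $d_s(\bm{\mu})=0$ otherwise. So $d_s(\bm{\mu})>0$ if and only if $d_c(\bm{\mu})>0$ for every child. By the induction hypothesis this holds if and only if $V_c(\bm{\mu})>\theta$ for every child, which is equivalent to $\min_c V_c(\bm{\mu})=V_s(\bm{\mu})>\theta$. This uses that the tree is finite, so the minimum is attained.

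\textbf{Nonnegativity.} The `MAX' step relies on $d_c(\bm{\mu})\ge 0$ for all nodes. This follows from a short separate induction: leaves have $d_s(\bm{\mu})\ge 0$ because KL divergence is nonnegative, and both a maximum and the harmonic-type combination of nonnegative numbers are nonnegative (the latter being defined as $0$ when some term vanishes).

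\textbf{Main obstacle.} The only point requiring care is the strict inequality at the leaves. The `win' formula assigns $d(\mu_s,\theta)$ even when $\mu_s=\theta$, and in that case it equals $0$, which is consistent with $V_s(\bm{\mu})\not>\theta$. The remaining steps are routine propagation through $\max$ and $\min$. The `lose' case follows by the mirrored argument with $<\theta$ and the roles of `MAX' and `MIN' exchanged.
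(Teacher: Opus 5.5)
Your proof is correct and follows the paper's argument: induction over the tree, with the leaf case settled by $d(\mu_s,\theta)>0 \Leftrightarrow \mu_s\neq\theta$, and the inductive step propagating positivity through $\max$ (one positive child suffices) and through the harmonic-type combination (all children must be positive), exactly as in the four label/answer cases the paper checks. Your separate nonnegativity induction is harmless but not needed, since $\max_c x_c>0$ holds if and only if some $x_c>0$ for arbitrary reals.
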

\begin{proof}
For $s\in \mathcal{L}(\mathcal{T})$,
\[
\begin{cases}
    d_s(\bm{\mu})=d(\mu_s,\theta)>0 \Leftrightarrow V_s(\bm{\mu})=\mu_s>\theta& (a_{s_0}(\bm{\mu})=\text{`win'})\\
    d_s(\bm{\mu})=d(\mu_s,\theta)>0 \Leftrightarrow V_s(\bm{\mu})=\mu_s<\theta& (a_{s_0}(\bm{\mu})=\text{`lose'})
\end{cases}
\]
hold. Thus, proposition statement holds in this case.

Assume that the proposition statement holds for $s\in \mathcal{C}(s')$.

In the case with $L(s')=\text{`MAX'}$, 
\[
\begin{cases}
 \displaystyle d_{s'}(\bm{\mu})=\max_{s\in\mathcal{C}(s')} d_s(\bm{\mu})>0 \Leftrightarrow V_{s'}(\bm{\mu})=\max_{s\in \mathcal{C}(s')}V_s(\bm{\mu})>\theta &  (a_{s_0}(\bm{\mu})=\text{`win'})\\
 \displaystyle d_{s'}(\bm{\mu})>0 \Leftrightarrow d_s(\bm{\mu})>0 \text{ for } s\in \mathcal{C}(s') \Leftrightarrow V_{s'}(\bm{\mu})=\max_{s\in \mathcal{C}(s')}V_s(\bm{\mu})<\theta & (a_{s_0}(\bm{\mu})=\text{`lose'}).
\end{cases}
\]
In the case with $L(s')=\text{`MIN'}$, 
\[
\begin{cases}
\displaystyle d_{s'}(\bm{\mu})>0 \Leftrightarrow d_s(\bm{\mu})>0 \text{ for } s\in \mathcal{C}(s') \Leftrightarrow V_{s'}(\bm{\mu})=\min_{s\in \mathcal{C}(s')}V_s(\bm{\mu})>\theta & (a_{s_0}(\bm{\mu})=\text{`win'})\\
 \displaystyle d_{s'}(\bm{\mu})=\max_{s\in\mathcal{C}(s')} d_s(\bm{\mu})>0 \Leftrightarrow V_{s'}(\bm{\mu})=\min_{s\in \mathcal{C}(s')}V_s(\bm{\mu})<\theta &  (a_{s_0}(\bm{\mu})=\text{`lose'}).
\end{cases}
\]
Thus, the proposition statement holds for $s'$.
Therefore, the proposition statement holds for $s\in S$ by induction.
\end{proof}

The next proposition makes clear the necessary and sufficient condition of $w^{s_0}_{\ell}(\bm{\mu})>0$ for leaf nodes $\ell\in \mathcal{L}(\mathcal{T})$, and is used in the proof of Lemma~\ref{lem:same-nonzero-range}.
\begin{proposition}\label{prop:WeightPositiveCond}
Let $s_0,s_1,\cdots,s_k=\ell$ denote the path from node $s_0$ to $\ell$ in $\mathcal{T}$. Then, 
\[
w^{s_0}_\ell(\bm{\mu}) > 0\Leftrightarrow \begin{aligned}[t]
&d_{s_i}(\bm{\mu})>0 \text{ for } i=0,1,\dots,k \text{ and }\\ &s_{i+1}=c^*(s_i) \text{ for } i=0,1,\dots,k-1 \text{ with } L(s_i)=\begin{cases}
  \text{`MAX'} & (a_{s_0}(\bm{\mu})=\text{`win'})\\
  \text{`MIN'} &(a_{s_0}(\bm{\mu})=\text{`lose'}).
\end{cases}
\end{aligned}
\]
\end{proposition}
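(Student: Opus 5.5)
The plan is to unfold Recursive Formula~\ref{rf:optimal_weight_win} (and its mirror, Recursive Formula~\ref{rf:optimal_weight_lose}) along the path $s_0,s_1,\dots,s_k=\ell$. By Theorem~\ref{thm:optimal_weight} the formulas give the optimal weights, and the argument is the same for `win' and `lose' after swapping the labels. So we treat only $a_{s_0}(\bm{\mu})=$`win' and call a node ``selecting'' if it is a MAX node and ``splitting'' if it is a MIN node. The key identity is a product formula,
\[
w^{s_0}_\ell(\bm{\mu})=\prod_{i=0}^{k-1}\rho_i\cdot w^{s_k}_\ell(\bm{\mu}),\qquad w^{s_k}_\ell(\bm{\mu})=1 .
\]
Here $\rho_i$ is the factor contributed by $s_i$ when $d_{s_i}(\bm{\mu})>0$:
\begin{itemize}
\item for a selecting node, $\rho_i=1$ if $s_{i+1}=c^*(s_i)$ and $\rho_i=0$ otherwise;
\item for a splitting node, $\rho_i=\dfrac{1/d_{s_{i+1}}(\bm{\mu})}{\sum_{c\in\mathcal{C}(s_i)}1/d_c(\bm{\mu})}$.
\end{itemize}
This identity follows immediately by induction on $i$ from the formula for $w^{s_i}_\ell$ in terms of $w^{s_{i+1}}_\ell$.

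\textbf{($\Leftarrow$).} Assume $d_{s_i}(\bm{\mu})>0$ for all $i$ and $s_{i+1}=c^*(s_i)$ at every selecting node. Then every $\rho_i$ is defined by the nondegenerate branch of the formula.
\begin{itemize}
\item At a selecting node, $\rho_i=1$.
\item At a splitting node, $d_{s_i}>0$ means we are in the case ``$d_c(\bm{\mu})>0$ for all $c$'', so the denominator is finite and positive. Also $d_{s_{i+1}}>0$, so $\rho_i>0$.
\end{itemize}
Hence the product is positive.

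\textbf{($\Rightarrow$).} Assume $w^{s_0}_\ell(\bm{\mu})>0$. We show by induction down the path that $d_{s_i}(\bm{\mu})>0$ and that every selecting $s_i$ has $s_{i+1}=c^*(s_i)$.
\begin{itemize}
\item The root satisfies $d_{s_0}(\bm{\mu})>0$. This follows from Assumption~\ref{assumption:no_exact_theta}: $V_{s_0}(\bm{\mu})>\theta$ in the `win' case, and Proposition~\ref{prop:RelationBetweenVandd} then gives $d_{s_0}>0$.
\item Suppose $d_{s_i}>0$ and $s_i$ is selecting. If $s_{i+1}\neq c^*(s_i)$, the formula forces $w^{s_i}_{\ell'}=0$ for all $\ell'\in\mathcal{D}(s_{i+1})$. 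Since the weights of ancestors are just rescalings of $w^{s_i}$ restricted to this subtree, this forces $w^{s_0}_\ell=0$, a contradiction. So $s_{i+1}=c^*(s_i)$, and $d_{s_{i+1}}=d_{s_i}>0$.
\item Suppose $d_{s_i}>0$ and $s_i$ is splitting. The formula applies only when all children have positive $d$, so $d_{s_{i+1}}>0$.
\end{itemize}

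The one step that needs care is the ``otherwise'' clause, where the weights are arbitrary. The forward induction shows this clause is never reached along the path, since $d_{s_i}>0$ propagates from the root. The paper's convention is to fix one particular optimal vector $\bm{w}^{s_0}$ produced by the formula, and we work with that fixed vector.

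The other subtlety is uniqueness of $c^*(s_i)$. Either we work with the fixed choice of $c^*$ used to define $\bm{w}^{s_0}$, or we invoke Assumption~\ref{assumption:unique_weight}. With that, the claim that ``weights outside $\mathcal{D}(c^*)$ are zero'' becomes unambiguous.
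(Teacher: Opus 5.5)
Your proposal is correct and follows essentially the paper's own argument. Both unfold Recursive Formula~\ref{rf:optimal_weight_win} along the root-to-leaf path and get ($\Leftarrow$) from positivity of every per-node factor. Both get ($\Rightarrow$) from $d_{s_0}(\bm{\mu})>0$ together with the fact that a selecting node whose path-child is not $c^*$ forces $w^{s_0}_\ell(\bm{\mu})=0$; the only difference is organizational (a forward induction from the root instead of the paper's minimal index $i_0$ with $d_{s_{i_0}}(\bm{\mu})=0$).
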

\begin{proof}
We show the case with $a_{s_0}(\bm{\mu})=\text{`win'}$. Similar proof works for the case with $a_{s_0}(\bm{\mu})=\text{`lose'}$.\\
($\Rightarrow$) By Assumption~\ref{assumption:no_exact_theta}, $V_{s_0}(\bm{\mu})>\theta$ in this case, which implies $d_{s_0}(\bm{\mu})>0$ by Proposition~\ref{prop:RelationBetweenVandd}.
Assume $w^{s_0}_\ell(\bm{\mu}) > 0$. We prove $d_{s_i}(\bm{\mu})>0$ for all $i=0,1,\dots,k$ by contradiction. Assume $d_{s_i}(\bm{\mu})=0$ for some $i\in \{0,1,\dots,k\}$. Let $i_0$ be the smallest such index, that is, $d_{s_i}(\bm{\mu})>0$ for $i=0,1,\dots,i_0-1$ and $d_{s_{i_0}}(\bm{\mu})=0$. Since $d_{s_0}(\bm{\mu})>0$, $i_0>0$. $L(s_{i_0-1})\neq\text{`MIN'}$ because $d_{s_{i_0-1}}(\bm{\mu})>0$ and $d_{s_{i_0}}(\bm{\mu})=0$.
Thus, $L(s_{i_0-1})=\text{`MAX'}$ and $s_{i_0}\neq c^*(s_{i_0-1})$. Therefore, $w^{s_{i_0-1}}_{\ell}(\bm{\mu})=0$.
The fact that $d_{s_i}(\bm{\mu})>0$ for $i=0,1,\dots,i_0-1$ implies $L(s_i)=\text{`MAX'}$ or ($L(s_i)=\text{`MIN'}$ and $d_c(\bm{\mu})>0$ for $c\in \mathcal{C}(s_i)$) for $i=0,1,\dots,i_0-1$. Thus, $w^{s_i}_{\ell}(\bm{\mu})=w^{s_{i+1}}_{\ell}(\bm{\mu})$ or $w^{s_i}_{\ell}(\bm{\mu})=\frac{w^{s_{i+1}}_{\ell}(\bm{\mu})/d_{s_{i+1}}(\bm{\mu})}{\sum_{c\in \mathcal{C}(s_i)}1/d_c(\bm{\mu})}$ for $i=0,1,\dots,i_0-1$.
Hence, $w^{s_0}_{\ell}=0$, which contradicts the assumption. Therefore, $d_{s_i}(\bm{\mu})>0$ for all $i=0,1,\dots,k$.
In this case, $s_{i+1}=c^*(s_i)$ for $i=0,1,\dots,k-1$ with $L(s_i)=\text{`MAX'}$ because, if not, $w^{s_0}_{\ell}(\bm{\mu})=0$ by a similar reason, which also contradicts the assumption.\\
($\Leftarrow$) Assume that 
$d_{s_i}(\bm{\mu})>0 \text{ for } i=0,1,\dots,k \text{ and } s_{i+1}=c^*(s_i) \text{ for } i=0,1,\dots,k-1 \text{ with } L(s_i)=\text{`MAX'}$.
Then, $L(s_i)=\text{`MAX'}$ or ($L(s_i)=\text{`MIN'}$ and $d_c(\bm{\mu})>0$ for $c\in \mathcal{C}(s_i)$) for $i=0,1,\dots,k-1$. Therefore, $w^{s_i}_{\ell}(\bm{\mu})=w^{s_{i+1}}_{\ell}(\bm{\mu})$ or $w^{s_i}_{\ell}(\bm{\mu})=\frac{w^{s_{i+1}}_{\ell}(\bm{\mu})/d_{s_{i+1}}(\bm{\mu})}{\sum_{c\in \mathcal{C}(s_i)}1/d_c(\bm{\mu})}$ for $i=0,1,\dots,k-1$. Since $w^{s_k}_{\ell}(\bm{\mu})>0$, $w^{s_0}_{\ell}(\bm{\mu})>0$.
\end{proof}

The following lemma shows the part of the domain that guarantees $\bm{w}^{s_0}(\cdot)$ has the same non-zero components.
\begin{lemma}\label{lem:same-nonzero-range}
Let 
\begin{align*}
\mathcal{L}(\mathcal{T})_{\bm{\mu}}=&\{\ell\in\mathcal{L}(\mathcal{T})\mid w^{s_0}_{\ell}(\bm{\mu})>0\}\\
\Gamma_{\bm{\mu}}^{\mathrm{abs}}=& \min_{s\in S:\mathcal{D}(s)\cap\mathcal{L}(\mathcal{T})_{\bm{\mu}}\neq\emptyset} d_s(\bm{\mu})\\
\Gamma_{\bm{\mu}}^{*} = &\begin{cases}\displaystyle\min_{s \in S: \mathcal{D}(s)\cap\mathcal{L}(\mathcal{T})_{\bm{\mu}}\neq \emptyset,L(s)=\text{`MAX'}}\min_{c \in \mathcal{C}(s) \setminus \{c^*(s)\}} \left( d_{c^*(s)}(\bm{\mu}) - d_{c}(\bm{\mu}) \right) & (a_{s_0}(\bm{\mu})=\text{`win'})\\
\displaystyle\min_{s \in S: \mathcal{D}(s)\cap\mathcal{L}(\mathcal{T})_{\bm{\mu}}\neq \emptyset,L(s)=\text{`MIN'}} \min_{c \in \mathcal{C}(s) \setminus \{c^*(s)\}}\left( d_{c^*(s)}(\bm{\mu}) - d_{c}(\bm{\mu}) \right) & (a_{s_0}(\bm{\mu})=\text{`lose'}).
\end{cases}
\end{align*}
Then,
    \[
    |\mu'_{\ell}-\mu_{\ell}|<|V_{s_0}(\bm{\mu})-\theta|, |d_{\ell}(\bm{\mu}') - d_{\ell}(\bm{\mu})| < \min \{\Gamma_{\bm{\mu}}^{*}/2,\Gamma_{\bm{\mu}}^{\mathrm{abs}}\} \text{ for } \ell\in \mathcal{L}(\mathcal{T})
    \Rightarrow
    (w^{s_0}_\ell(\bm{\mu}') > 0 \Leftrightarrow w^{s_0}_\ell(\bm{\mu}) > 0) \text{ for } \ell\in \mathcal{L}(\mathcal{T}). \]
\end{lemma}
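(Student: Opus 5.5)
The plan is to reduce everything to the characterization in Proposition~\ref{prop:WeightPositiveCond}, applied once at $\bm{\mu}$ and once at $\bm{\mu}'$. First, the hypothesis $|\mu'_\ell-\mu_\ell|<|V_{s_0}(\bm{\mu})-\theta|$ together with Corollary~\ref{cor:same-answer-range} gives $a_{s_0}(\bm{\mu}')=a_{s_0}(\bm{\mu})$. Hence the same recursive formula (Recursive Formula~\ref{rf:optimal_weight_win} or \ref{rf:optimal_weight_lose}) governs both $\bm{w}^{s_0}(\bm{\mu})$ and $\bm{w}^{s_0}(\bm{\mu}')$, and the same label (`MAX' for `win', `MIN' for `lose') plays the role of the ``selecting'' node in Proposition~\ref{prop:WeightPositiveCond}. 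I treat the `win' case; the `lose' case is symmetric. Set $\eta=\min\{\Gamma^*_{\bm{\mu}}/2,\Gamma^{\mathrm{abs}}_{\bm{\mu}}\}$. By Proposition~\ref{prop:d-value-change}, $|d_s(\bm{\mu}')-d_s(\bm{\mu})|<\eta$ holds for every node $s\in S$.

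For ($\Leftarrow$), fix $\ell$ with $w^{s_0}_\ell(\bm{\mu})>0$ and its root-to-leaf path $s_0,\dots,s_k$. Every $s_i$ satisfies $\mathcal{D}(s_i)\cap\mathcal{L}(\mathcal{T})_{\bm{\mu}}\ne\emptyset$, so $d_{s_i}(\bm{\mu})\ge\Gamma^{\mathrm{abs}}_{\bm{\mu}}\ge\eta$. It follows that $d_{s_i}(\bm{\mu}')>d_{s_i}(\bm{\mu})-\eta\ge0$. At each MAX node $s_i$ on the path, $s_{i+1}=c^*(s_i)$ under $\bm{\mu}$, and every other child $c$ has $d_{c^*}(\bm{\mu})-d_c(\bm{\mu})\ge\Gamma^*_{\bm{\mu}}$. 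Since both values move by less than $\Gamma^*_{\bm{\mu}}/2$, we get $d_{s_{i+1}}(\bm{\mu}')>d_c(\bm{\mu}')$. Thus $s_{i+1}$ is the unique argmax under $\bm{\mu}'$, so $c^*(s_i)$ computed at $\bm{\mu}'$ equals $s_{i+1}$. Proposition~\ref{prop:WeightPositiveCond} applied at $\bm{\mu}'$ then gives $w^{s_0}_\ell(\bm{\mu}')>0$.

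For ($\Rightarrow$), suppose $w^{s_0}_\ell(\bm{\mu}')>0$ but $w^{s_0}_\ell(\bm{\mu})=0$. Walk down the path from $s_0$, which lies in the support of $\bm{\mu}$ because $d_{s_0}(\bm{\mu})>0$ and some leaf has positive weight. Let $i$ be the first index with $\mathcal{D}(s_{i+1})\cap\mathcal{L}(\mathcal{T})_{\bm{\mu}}=\emptyset$; such an $i$ exists because $\ell$ itself is not in the support. Then $s_i$ is in the support, and I split on its label. If $L(s_i)=$`MIN', all children have $d_c(\bm{\mu})>0$, so by the recursion all children of $s_i$ carry positive weight, contradicting the choice of $i$. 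Hence $L(s_i)=$`MAX' and $s_{i+1}\ne c^*(s_i)$ under $\bm{\mu}$. The gap argument of the previous paragraph, now using $d_{c^*}(\bm{\mu})-d_{s_{i+1}}(\bm{\mu})\ge\Gamma^*_{\bm{\mu}}$, gives $d_{c^*}(\bm{\mu}')>d_{s_{i+1}}(\bm{\mu}')$. So $s_{i+1}$ is not the $\bm{\mu}'$-argmax, and Proposition~\ref{prop:WeightPositiveCond} at $\bm{\mu}'$ yields $w^{s_0}_\ell(\bm{\mu}')=0$, a contradiction.

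The main obstacle is the tie-breaking of $c^*$ at $\bm{\mu}'$: $c^*(s)$ is defined as any argmax. The gap condition makes the argmax strict at every relevant MAX node, so $c^*$ is unambiguous there. Note that $\Gamma^*_{\bm{\mu}}>0$ relies on the uniqueness in Assumption~\ref{assumption:unique_weight}, which I will state explicitly; if $\Gamma^*_{\bm{\mu}}=0$ the hypothesis on $d_\ell$ is vacuous-false and the lemma holds trivially. A second point to verify is that $\Gamma^{\mathrm{abs}}_{\bm{\mu}}>0$, which follows from the ($\Rightarrow$) direction of Proposition~\ref{prop:WeightPositiveCond}.
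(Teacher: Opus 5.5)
Your proposal is correct and follows essentially the same route as the paper: transfer the answer via Corollary~\ref{cor:same-answer-range}, and propagate the leaf-level bound to every node with Proposition~\ref{prop:d-value-change}. Then use the $\Gamma^{\mathrm{abs}}_{\bm{\mu}}$ and $\Gamma^{*}_{\bm{\mu}}/2$ margins along root-to-leaf paths together with Proposition~\ref{prop:WeightPositiveCond}, handling the zero-weight direction through the first node on the path whose subtree leaves the support. Your added justifications fill in steps the paper leaves implicit: why that branching node must be a `MAX' node, and why strict gaps make the tie-breaking of $c^*$ at $\bm{\mu}'$ irrelevant. Your branching index is also the intended one; the paper's $i_0$ is written with $\neq\emptyset$ where $=\emptyset$ is meant.
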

\begin{proof}
Assume that $ |\mu'_{\ell}-\mu_{\ell}|<|V_{s_0}(\bm{\mu})-\theta|, |d_{\ell}(\bm{\mu}') - d_{\ell}(\bm{\mu})| < \min \{\Gamma_{\bm{\mu}}^{*}/2,\Gamma_{\bm{\mu}}^{\mathrm{abs}}\} \text{ for } \ell\in \mathcal{L}(\mathcal{T})$.
Since $|\mu'_{\ell}-\mu_{\ell}|<|V_{s_0}(\bm{\mu})-\theta|$ for $\ell\in \mathcal{L}(\mathcal{T})$,  $a_{s_0}(\bm{\mu}')=a_{s_0}(\bm{\mu})$ holds by Corollary~\ref{cor:same-answer-range}.
In the following, we prove the corollary inequality in the case with $a_{s_0}(\bm{\mu})=\text{`win'}$.
We can prove the inequality in the case with $a_{s_0}(\bm{\mu})=\text{`lose'}$ similarly. 
    By proposition~\ref{prop:d-value-change}, $|d_{\ell}(\bm{\mu}') - d_{\ell}(\bm{\mu})| < \min \{\Gamma_{\bm{\mu}}^{*}/2,\Gamma_{\bm{\mu}}^{\mathrm{abs}}\}$ implies $|d_{s}(\bm{\mu}') - d_{s}(\bm{\mu})|<\min \{\Gamma_{\bm{\mu}}^{*}/2,\Gamma_{\bm{\mu}}^{\mathrm{abs}}\}$ for $s \in S$.
For $\ell\in \mathcal{L}(\mathcal{T})$ with $w^{s_0}_{\ell}(\bm{\mu})>0$, let $s_0,s_1,\cdots,s_k=\ell$ be the node sequence on the path from the root node $s_0$ to the leaf node $\ell$. Then, $d_{s_i}(\bm{\mu}')>d_{s_i}(\bm{\mu})-\Gamma_{\bm{\mu}}^{\mathrm{abs}}\ge 0$ for $i=0,1,\dots,k$ and $c^*(s_i)=c^*(s_i)$ for $i=0,1,\dots,k-1$  with $L(s_i)=\text{“MAX'}$ because $d_{c^*(s_i)}(\bm{\mu}')>d_{c^*(s_i)}(\bm{\mu})-\Gamma_{\bm{\mu}}^{*}/2\ge d_{c}(\bm{\mu})+\Gamma_{\bm{\mu}}^{*}/2>d_{c}(\bm{\mu}')$ for $c\in \mathcal{C}(s)\setminus \{c^*(s_i)\}$.
Thus, $w^{s_0}_{\ell}(\bm{\mu}')>0$ by Proposition~\ref{prop:WeightPositiveCond}.
For $\ell\in \mathcal{L}(\mathcal{T})$ with $w^{s_0}_{\ell}(\bm{\mu})=0$, let $i_0=\min\{i\in\{0,1,\dots,k\}\mid \mathcal{D}(s_i)\cap\mathcal{L}(\mathcal{T})_{\bm{\mu}}\neq \emptyset\}$. Then, $L(s_{i_0-1})=\text{`MAX'}$ and $s_{i_0}\neq c^*(s_{i_0-1})$. Thus, $d_{c^*(s_{i_0-1})}(\bm{\mu}')>d_{c^*(s_{i_0-1})}(\bm{\mu})-\Gamma_{\bm{\mu}}^{*}/2\ge d_{s_{i_0}}(\bm{\mu})+\Gamma_{\bm{\mu}}^{*}/2>d_{s_{i_0}}(\bm{\mu}')$ holds, which implies $w^{s_0}_{\ell}(\bm{\mu}')=0$. Therefore,  
    $(w^{s_0}_\ell(\bm{\mu}') > 0 \Leftrightarrow w^{s_0}_\ell(\bm{\mu}) > 0)$ holds.
\end{proof}

\begin{proof}[Proof of Lemma~\ref{lem:weight_continuity}]
We prove the lemma in the case with  $a_{s_0}(\bm{\mu})=\text{`win'}$. We can prove similarly in the case with $a_{s_0}(\bm{\mu})=\text{`lose'}$.

Since Function
\[
d_{\ell}(\bm{\mu})=\begin{cases}
    d(\mu_{\ell},\theta) & (\mu_{\ell}\ge\theta)|\\
    0 & (\mu_{\ell}<0)
\end{cases}
\]
is continuous, there exists $\xi_{\bm{\mu}}>0$ such that 
\[
\bm{\mu}'\in \prod_{\ell\in \mathcal{L}(\mathcal{T})}[\mu-\xi_{\bm{\mu}},\mu+\xi_{\bm{\mu}}] \Rightarrow |\mu'_{\ell}-\mu_{\ell}|<|V_{s_0}(\bm{\mu})-\theta|, |d_{\ell}(\bm{\mu}') - d_{\ell}(\bm{\mu})| < \min \{\Gamma_{\bm{\mu}}^{*}/2,\Gamma_{\bm{\mu}}^{\mathrm{abs}}\} \text{ for } \ell\in \mathcal{L}(\mathcal{T}).
\]
Then, $ a_{s_0}(\bm{\mu}') = a_{s_0}(\bm{\mu})$ by Corollary~\ref{cor:same-answer-range} and $(w^{s_0}_\ell(\bm{\mu}') > 0 \Leftrightarrow w^{s_0}_\ell(\bm{\mu}) > 0) $ for all $\ell \in \mathcal{L}(\mathcal{T})$ by Lemma~\ref{lem:same-nonzero-range}.
By Proposition~\ref{prop:weight_val_form},
\[
w_{\ell}^{s_0}(\bm{\mu})=\frac{d_{s_0}(\bm{\mu})}{d(\mu_{\ell},\theta)},\ \ \ \ w_{\ell}^{s_0}(\bm{\mu}')=\frac{d_{s_0}(\bm{\mu}')}{d(\mu'_{\ell},\theta)}  
\]
for $\ell\in \mathcal{L}(\mathcal{T})$ with $w^{s_0}_{\ell}(\bm{\mu})>0$.
Since $d(\mu_{\ell},\theta)=d_{\ell}(\cdot)$ is continuous as above, and $d_{s_0}(\cdot)$  is continuous 
by Proposition~\ref{prop:d-value-change}, $w^{s_0}_{\ell}(\cdot)$ is also continuous. Thus, for any $\epsilon>0$, there exists $0<\xi<\xi_{\bm{\mu}}$ such that 
\[
\bm{\mu}'\in \prod_{\ell\in \mathcal{L}(\mathcal{T})}[\mu-\xi,\mu+\xi] \Rightarrow 
|w^{s_0}_\ell(\bm{\mu}') - w^{s_0}_\ell(\bm{\mu})| < \epsilon
\text{ for } \ell\in \mathcal{L}(\mathcal{T}).
\]
\end{proof}

\subsection{Proof of Lemma~\ref{lem:sampling_proportion_convergence}}\label{appendix:RatioConvergence}

The following lemma is the RD-Tracking-TMCTS version of  Lemma~17 of \citep{pmlr-v49-garivier16a}, which is used to prove Lemma~\ref{lem:sampling_proportion_convergence}.

\begin{lemma}\label{lem:DrawProportionConvergence}
    For $\epsilon>0$, assume that there exists a positive integer $t_0$ such that 
\begin{align}
|w^{s_0}_{\ell}(\hat{\bm{\mu}}(t))-w^{s_0}_{\ell}(\bm{\mu})|\le \epsilon \text{ for all } t\ge t_0 \text{ and } \ell\in \mathcal{L}(\mathcal{T}).\label{ineq:LemAssumption}
\end{align}
Then, the sampling rule of RD-Tracking-TMCTS ensures that there exists a positive integer $t_1\ge t_0$ such that 
\begin{align}
\max_{\ell\in \mathcal{L}(\mathcal{T})}\left|\frac{N_{\ell}(t)}{t}- w^{s_0}_{\ell}(\bm{\mu})\right|\le 2(|\mathcal{L}(\mathcal{T})|-1)\epsilon \text{\ \ for\ \ } t\ge t_1.\label{ineq:UpperBoundDiffFromOpt}
\end{align}
\end{lemma}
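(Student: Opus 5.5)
The plan is to mimic the proof of Lemma~17 of \citet{pmlr-v49-garivier16a}: first prove a one-sided upper bound for every leaf, then get the lower bound from the simplex constraint. Write $w_\ell = w^{s_0}_\ell(\bm{\mu})$ and $L=|\mathcal{L}(\mathcal{T})|$. The core claim is that there is $t_1'\ge t_0$ such that
\[
N_\ell(t)\le t(w_\ell+2\epsilon)\quad\text{for all } t\ge t_1' \text{ and all } \ell.
\]
Granting this, each leaf has $N_\ell(t)/t - w_\ell\le 2\epsilon$. Since $\sum_\ell N_\ell(t)/t=1=\sum_\ell w_\ell$, we also get $N_\ell(t)/t-w_\ell = -\sum_{\ell'\ne\ell}(N_{\ell'}(t)/t-w_{\ell'})\ge -2(L-1)\epsilon$. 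Together these give \eqref{ineq:UpperBoundDiffFromOpt}, since $2\epsilon\le 2(L-1)\epsilon$ when $L\ge2$; the case $L=1$ is trivial.

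To prove the upper bound, I split on how leaf $\ell$ is selected at a round $t+1>t_0$.

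\emph{Forced exploration.} This happens only when some $N_{\ell}(t)<\sqrt{t+1}-L/2$, and then the selected leaf is the argmin. So a leaf chosen this way has $N_\ell(t)\le\sqrt{t+1}$, which is at most $t(w_\ell+\epsilon)$ for $t$ large.

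\emph{Ratio step.} Here $\ell$ maximizes $w^{s_0}_{\ell'}(\hat{\bm{\mu}}(t))/N_{\ell'}(t)$ over all leaves. Because $\sum_{\ell'}N_{\ell'}(t)=t$ and $\sum_{\ell'}w^{s_0}_{\ell'}(\hat{\bm{\mu}}(t))=1$, there is some $\ell'$ with $w^{s_0}_{\ell'}(\hat{\bm{\mu}}(t))/N_{\ell'}(t)\ge 1/t$. The maximizer therefore satisfies
\[
N_\ell(t)\le t\,w^{s_0}_\ell(\hat{\bm{\mu}}(t))\le t(w_\ell+\epsilon)
\]
by \eqref{ineq:LemAssumption}. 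This is the step where the ratio rule is actually easier than the difference rule of D-Tracking: the averaging argument gives the bound directly.

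In both cases, whenever $\ell$ is drawn at round $t+1\ge t_2$ (for some $t_2\ge t_0$ large enough for the forced case), we have $N_\ell(t)\le t(w_\ell+\epsilon)$. Now take $t\ge t_2$. Either $\ell$ is never drawn in $(t_2,t]$, in which case $N_\ell(t)\le t_2$. Or, letting $u$ be the last round $\le t$ at which $\ell$ was drawn,
\[
N_\ell(t)=N_\ell(u-1)+1\le (u-1)(w_\ell+\epsilon)+1\le t(w_\ell+\epsilon)+1 .
\]
Hence $N_\ell(t)\le \max\{t_2,\ t(w_\ell+\epsilon)+1\}$, which is at most $t(w_\ell+2\epsilon)$ once $t\ge \max\{t_2/(2\epsilon),1/\epsilon\}$. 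This fixes $t_1$.

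The main obstacle is only bookkeeping. I need to match the index conventions of Eq.~\eqref{eq:sampling_rule_ratio} (the decision at round $t$ uses $N(t-1)$), handle leaves with $N_\ell=0$ (they cannot occur after the initialization phase of Algorithm~\ref{alg:proposed}), and check that ties in the argmax do not matter, since the bound holds for any maximizer. I also need to confirm that the forced-exploration bound $\sqrt{t}$ is eventually dominated by $t(w_\ell+\epsilon)$ even when $w_\ell=0$; this holds because $\epsilon t$ grows faster than $\sqrt{t}$.
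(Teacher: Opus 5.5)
Your proof is correct and follows the paper's argument. The paper likewise shows that a draw of $\ell$ at round $t+1$ forces $E_{\ell,t}=N_\ell(t)-t\,w^{s_0}_\ell(\bm{\mu})\le t\epsilon$: in the forced-exploration case via $\sqrt{t+1}-|\mathcal{L}(\mathcal{T})|/2\le t\epsilon$, and in the ratio case via the same simplex averaging, $t\,w^{s_0}_\ell(\hat{\bm{\mu}}(t))/N_\ell(t)\ge 1$. It then gets the lower side from $\sum_\ell E_{\ell,t}=0$, i.e.\ $\max_\ell|E_{\ell,t}|\le(|\mathcal{L}(\mathcal{T})|-1)\max_\ell E_{\ell,t}$. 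The only cosmetic difference is that the paper propagates the one-sided bound by induction, $E_{\ell,t}\le\max\{E_{\ell,t'_0},\,t\epsilon+1\}$, where you use the equivalent last-draw argument.
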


\begin{proof}
For all $\ell\in \mathcal{L}(\mathcal{T})$, let $E_{\ell,t}=N_{\ell}(t)-tw^{s_0}_{\ell}(\bm{\mu})$. Then
\begin{align}
\max_{\ell\in \mathcal{L}(\mathcal{T})}|E_{\ell,t}|\le (|\mathcal{L}(\mathcal{T})|-1)\max_{\ell\in \mathcal{L}(\mathcal{T})} E_{\ell,t}\label{ineq:E_ltAbsVal}
\end{align}
holds because $\sum_{\ell\in \mathcal{L}(\mathcal{T})}E_{\ell,t}=0$.

Let $t'_0\ge \max\{t_0,1/\epsilon\}$ be a positive integer such that 
\[
\sqrt{t+1}-|\mathcal{L}(\mathcal{T})|/2\le t\epsilon \text{ for all } t\ge t'_0.
\]

We can prove that 
\begin{align}
(I(t+1)=\ell)\subseteq (E_{\ell,t}\le t\epsilon) \text{ for } t\ge t'_0  \label{event:E_ltInclusion} 
\end{align}
as follows.
By the sampling rule of RD-Tracking-TMCTS, we have
\begin{align*}
    (I(t+1)=\ell)\subseteq \left(N_{\ell}(t)< \sqrt{t+1}-|\mathcal{L}(\mathcal{T})|/2 \text{ or } \ell=\argmax_{\ell'\in \mathcal{L}(\mathcal{T})}\frac{w^{s_0}_{\ell'}(\hat{\bm{\mu}}(t))}{N_{\ell'}(t)}\right).
\end{align*}
In the case with $N_{\ell}(t)< \sqrt{t+1}-|\mathcal{L}(\mathcal{T})|/2$,
\[
E_{\ell,t}<\sqrt{t+1}-|\mathcal{L}(\mathcal{T})|/2-tw^{s_0}_{\ell}(\bm{\mu})\le \sqrt{t+1}-|\mathcal{L}(\mathcal{T})|/2 \le t\epsilon
\]
for $t\ge t'_0$.
In the case with $\ell=\argmax_{\ell'\in \mathcal{L}(\mathcal{T})}\frac{w^{s_0}_{\ell'}(\hat{\bm{\mu}}(t))}{N_{\ell'}(t)}$,
\[
\frac{w^{s_0}_{\ell}(\hat{\bm{\mu}}(t))}{\frac{N_{\ell}(t)}{t}}\ge 1
\]
holds because $\bm{w}^{s_0}(\hat{\bm{\mu}}(t)),\left(\frac{N_{\ell'}(t)}{t}\right)_{\ell'\in \mathcal{L}(\mathcal{T})}\in \Delta$.
Thus, we have
\begin{align*}
    N_{\ell}(t)-tw^{s_0}_{\ell}(\hat{\bm{\mu}}(t))\le& 0\\
    N_{\ell}(t)-tw^{s_0}_{\ell}(\bm{\mu})+t(w^{s_0}_{\ell}(\bm{\mu})-w^{s_0}_{\ell}(\hat{\bm{\mu}}(t)))\le& 0\\
    E_{\ell,t}\le& -t(w^{s_0}_{\ell}(\bm{\mu})-w^{s_0}_{\ell}(\hat{\bm{\mu}}(t)))\le t\epsilon
\end{align*}
for $t\ge t'_0\ge t_0$.
Therefore, 
\[
\left(N_{\ell}(t)< \sqrt{t+1}-|\mathcal{L}(\mathcal{T})|/2 \text{ or } \ell=\argmax_{\ell'\in \mathcal{L}(\mathcal{T})}\frac{w^{s_0}_{\ell'}(\hat{\bm{\mu}}(t))}{N_{\ell'}(t)}\right)\subseteq (E_{\ell,t}\le t\epsilon)
\]
holds for $t\ge t'_0$.
Thus, (\ref{event:E_ltInclusion}) holds.

Since $E_{\ell,t+1}=E_{\ell,t}+\mathds{1}\{I(t+1)=\ell\}-\mu_{\ell}$, (\ref{event:E_ltInclusion}) implies
\[
E_{\ell,t+1}\le E_{\ell,t}+\mathds{1}\{E_{\ell,t}\le t\epsilon\}-\mu_{\ell} \text{ for } t\ge t'_0.
\]
Then, we can prove that
\begin{align}
E_{\ell,t}\le \max\{E_{\ell,t'_0}, t\epsilon+1\}\label{ineq:E_lt}
\end{align}
for $ t\ge t'_0$
by induction on $t$ as follows.
For $t=t'_0$, Ineq.~(\ref{ineq:E_lt}) holds trivially.
Assume that Ineq.~(\ref{ineq:E_lt}) holds for some $t\ge t'_0$. 
Then, 
\begin{align*}  E_{\ell,t+1}\le\begin{cases}
    t\epsilon+1-\mu_{\ell}\le t\epsilon+1 & (E_{\ell,t}\le t\epsilon)\\
    \max\{E_{\ell,t'_0}, t\epsilon+1\}-\mu_{\ell}\le \max\{E_{\ell,t'_0}, t\epsilon+1\} & (E_{\ell,t}> t\epsilon).
    \end{cases}
\end{align*}
Thus, $E_{\ell,t+1}\le \max\{E_{\ell,t'_0}, (t+1)\epsilon+1\}$ holds, which means that (\ref{ineq:E_lt}) holds for $t+1$.
Therefore, (\ref{ineq:E_lt}) holds for all $t\ge t'_0$.

For $t\ge t'_0$, by Ineq.~(\ref{ineq:E_ltAbsVal}) and Ineq.~(\ref{ineq:E_lt}),
\[
\max_{\ell\in \mathcal{L}(\mathcal{T})}\left|\frac{E_{\ell,t}}{t}\right|\le (|\mathcal{L}(\mathcal{T})|-1)\max\left\{\frac{E_{\ell,t'_0}}{t}, \epsilon+\frac{1}{t}\right\}\le (|\mathcal{L}(\mathcal{T})|-1)\max\left\{\frac{t'_0}{t},2\epsilon\right\}
\]
holds, where the last inequality uses $E_{\ell,t'_0}\le t'_0$ and $1/t\le \epsilon$.
Therefore, there exists $t_1\ge t'_0$ such that
\[
\max_{\ell\in \mathcal{L}(\mathcal{T})}\left|\frac{E_{\ell,t}}{t}\right|\le 2(|\mathcal{L}(\mathcal{T})|-1)\epsilon \text{ for } t\ge t_1,
\]
which means this lemma's statement holds.

\end{proof}

\begin{remark}\label{remark:UpperBoundDiffFromOpt}
If Assumption (\ref{ineq:LemAssumption}) is restricted as 
    \begin{align}
|w^{s_0}_{\ell}(\hat{\bm{\mu}}(t))-w^{s_0}_{\ell}(\bm{\mu})|\le \epsilon \text{ for all } t_0\le t\le T \text{ and } \ell\in \mathcal{L}(\mathcal{T}),
\end{align}
the existence $t_1\ge t_0$ that satisfies 
\begin{align}
\max_{\ell\in \mathcal{L}(\mathcal{T})}\left|\frac{N_{\ell}(t)}{t}- w^{s_0}_{\ell}(\bm{\mu})\right|\le 2(|\mathcal{L}(\mathcal{T})|-1)\epsilon \text{\ \ for\ \ } t_1\le t \le T
\end{align}
is still guaranteed.
\end{remark}

\begin{proof}[Proof of Lemma~\ref{lem:sampling_proportion_convergence}]
Let $\epsilon>0$ be an arbitrary positive real number.
The forced exploration ($I(t)\gets\min_{\ell\in \mathcal{L}(\mathcal{T})}N_{\ell}(t-1)$ if $N_{\ell}(t-1) < \sqrt{t} - |\mathcal{L}(\mathcal{T})|/2$ for some $\ell\in \mathcal{L}(\mathcal{T})$) of RD-Tracking-TMCTS sampling rule ensures 
$N_{\ell}(t)\rightarrow \infty$ ($t\rightarrow \infty$), thus $\hat{\bm{\mu}}(t)\rightarrow \bm{\mu}$ ($t\rightarrow \infty$) by law of large numbers.
Therefore, $\bm{w}^{s_0}(\hat{\bm{\mu}}(t))\rightarrow \bm{w}^{s_0}(\bm{\mu})$ ($t\rightarrow\infty$) holds by the continuity of the function $\bm{w}^{s_0}(\cdot)$ in the neighborhood of $\bm{\mu}$. Then, there exists a positive integer $t_0$ such that 
\[
|w^{s_0}_{\ell}(\hat{\bm{\mu}}(t))-w^{s_0}_{\ell}(\bm{\mu})|\le \epsilon \text{ for all } t\ge t_0 \text{ and } \ell\in \mathcal{L}(\mathcal{T}).
\]
Thus, by Lemma~\ref{lem:DrawProportionConvergence},
\[
\frac{N_{\ell}(t)}{t}\rightarrow w^{s_0}_{\ell}(\bm{\mu}) \ \ (t\rightarrow \infty)\ \ \text{ for all } \ell\in \mathcal{L}(\mathcal{T}).
\]
\end{proof}

\section{Proof for Sample Complexity of Algorithm~\ref{alg:proposed} (Proof of Theorem~\ref{thm:sample_complexity})}
\label{append:sample_complexity}

The proof is conducted along the same lines as the proof of Theorem 14 in \citet{pmlr-v49-garivier16a}.

For $\epsilon>0$, define 
\[ \mathcal{I}_\epsilon = [\mu_\ell - \xi(\epsilon), \mu_\ell + \xi(\epsilon)]^{|\mathcal{L}(\mathcal{T})|} \]
using $\xi(\epsilon)>0$ that satisfies
\[ \forall \bm{\lambda} \in \mathcal{I}_\epsilon, \max_{\ell \in \mathcal{L}(\mathcal{T})} |w^{s_0}_\ell(\bm{\lambda}) - w^{s_0}_\ell(\bm{\mu})| < \epsilon. \]
Such $\xi(\epsilon)$ exists by Lemma~\ref{lem:weight_continuity}.

Define Event $\mathcal{E}_T(\epsilon)$ as
\[ \mathcal{E}_T(\epsilon) = \bigcap_{t = T^{1/4}}^T (\hat{\bm{\mu}}(t) \in \mathcal{I}_\epsilon) \]
for any $T \in \mathbb{N}$.

The following Corollary~\ref{lem:kaufmann19} and Lemma~\ref{lem:concentration_result} are used to prove the theorem.

Corollary~\ref{lem:kaufmann19} is the RD-Tracking-TMCTS version of Lemma~19 \citep{pmlr-v49-garivier16a}, which holds for the forced exploration of D-Tracking.

\begin{corollary}[Corollary of Lemma~19 \citep{pmlr-v49-garivier16a}]\label{lem:kaufmann19}
         Let $\mathcal{E}_T^c(\epsilon)$ denote the complementary event  of event $\mathcal{E}_T(\epsilon)$. Then, inequality
         \[ \mathbb{P}(\mathcal{E}_T^c(\epsilon)) < BT\exp(-CT^{1/8})\]
         holds for any sampling rule that ensures $N_{\ell}(t)\ge \sqrt{t}-|\mathcal{L}(\mathcal{T})|$ for $t\ge T^{1/4}$,
         where
         \begin{align*}
             B=&\sum_{\ell\in \mathcal{L}(\mathcal{T})}\left(\frac{\displaystyle e^{|\mathcal{L}(\mathcal{T})|d(\mu_{\ell}-\xi(\epsilon),\mu_{\ell})}}{\displaystyle 1-e^{|\mathcal{L}(\mathcal{T})|d(\mu_{\ell}-\xi(\epsilon),\mu_{\ell})}}+\frac{\displaystyle e^{|\mathcal{L}(\mathcal{T})|d(\mu_{\ell}+\xi(\epsilon),\mu_{\ell})}}{\displaystyle 1-e^{|\mathcal{L}(\mathcal{T})|d(\mu_{\ell}+\xi(\epsilon),\mu_{\ell})}}\right) \text{ and}\\
             C=&\min_{\ell\in\mathcal{L}(\mathcal{T})}\min\{d(\mu_{\ell}-\xi(\epsilon),\mu_{\ell}),d(\mu_{\ell}+\xi(\epsilon),\mu_{\ell})\}.
         \end{align*}
    \end{corollary}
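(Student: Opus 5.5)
Since the complement event $\mathcal{E}_T^c(\epsilon)$ is a union of per-round deviation events, I will reduce the bound to per-arm, per-round Chernoff bounds, exactly as in Lemma~19 of \citet{pmlr-v49-garivier16a}; the only change is that the forced-exploration guarantee for RD-Tracking-TMCTS takes the place of the one for D-Tracking.

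\textbf{Step 1: union bound.} Writing $\hat{\bm{\mu}}(t)\notin\mathcal{I}_\epsilon$ as the event that some coordinate leaves its interval,
\[
\mathbb{P}(\mathcal{E}_T^c(\epsilon))\le\sum_{t=T^{1/4}}^{T}\sum_{\ell\in\mathcal{L}(\mathcal{T})}\Bigl[\mathbb{P}\bigl(\hat{\mu}_\ell(t)\le\mu_\ell-\xi(\epsilon)\bigr)+\mathbb{P}\bigl(\hat{\mu}_\ell(t)\ge\mu_\ell+\xi(\epsilon)\bigr)\Bigr].
\]
Here $\xi(\epsilon)$ is the radius supplied by Lemma~\ref{lem:weight_continuity}.

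\textbf{Step 2: handle the random sample count.} For a fixed $t\ge T^{1/4}$, the hypothesis on the sampling rule gives $N_\ell(t)\ge\sqrt{t}-|\mathcal{L}(\mathcal{T})|$. The forced-exploration branch of Eq.~(\ref{eq:sampling_rule_ratio}) is identical to that of D-Tracking, so Lemma~7 of \citet{pmlr-v49-garivier16a} applies unchanged. Let $\hat{\mu}_{\ell,s}$ be the empirical mean of the first $s$ i.i.d.\ draws of arm $\ell$. Then
\[
\mathbb{P}\bigl(\hat{\mu}_\ell(t)\le\mu_\ell-\xi\bigr)\le\sum_{s=\lceil\sqrt{t}-|\mathcal{L}(\mathcal{T})|\rceil}^{t}\mathbb{P}\bigl(\hat{\mu}_{\ell,s}\le\mu_\ell-\xi\bigr)\le\sum_{s\ge\sqrt{t}-|\mathcal{L}(\mathcal{T})|}e^{-s\,d(\mu_\ell-\xi,\mu_\ell)}.
\]
The second inequality is the Chernoff bound for one-parameter exponential families. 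This step is where adaptivity of the sampling rule must be handled: the decomposition over the deterministic index $s$ avoids any martingale argument.

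\textbf{Step 3: sum the geometric tails.} Summing the geometric series gives
\[
\frac{e^{-(\sqrt{t}-|\mathcal{L}(\mathcal{T})|)d}}{1-e^{-d}}=\frac{e^{|\mathcal{L}(\mathcal{T})|d}}{1-e^{-d}}\,e^{-\sqrt{t}\,d}.
\]
I expect the constant $B$ in the statement to be intended as this expression; its displayed form has a sign typo in the denominator exponent. The upper tail is treated symmetrically.

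\textbf{Step 4: bound uniformly in $t$.} For $t\ge T^{1/4}$ we have $\sqrt{t}\ge T^{1/8}$, so each term is at most $(\cdot)\,e^{-CT^{1/8}}$, with $C$ the minimal divergence over arms and both sides. Summing over at most $T$ rounds gives $BT\exp(-CT^{1/8})$.

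The main obstacle is the bookkeeping in Step~2. One must check that the forced-exploration guarantee holds for the ratio rule, which only requires noting that the first case of the rule is unchanged. One must also keep the constants consistent with the stated $B$.
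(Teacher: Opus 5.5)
Your proposal is correct and is the same argument as Lemma~19 of \citet{pmlr-v49-garivier16a}, which the paper invokes without reproducing: a union bound over $t\in[T^{1/4},T]$, leaves and both tails; reduction to the mean $\hat{\mu}_{\ell,s}$ of the first $s$ draws via $N_\ell(t)\ge\sqrt{t}-|\mathcal{L}(\mathcal{T})|$; the exponential-family Chernoff bound; the geometric tail; and $\sqrt{t}\ge T^{1/8}$. You are also right that the denominators in $B$ should be $1-e^{-d(\mu_\ell\pm\xi(\epsilon),\mu_\ell)}$: as printed they are negative, so $B<0$ and the bound could not hold. Because your sum over $s$ stops at a finite index, it lies strictly below the infinite geometric tail, which gives the strict inequality in the statement rather than only ``at most''.
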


The following lemma can be obtained by the definition of $\mathcal{I}_{\epsilon}$ and Lemma~\ref{lem:DrawProportionConvergence} (and Remark~\ref{remark:UpperBoundDiffFromOpt}).
\begin{lemma}[RD-Tracking-TMCTS version of Lemma~20 of \citep{pmlr-v49-garivier16a}]\label{lem:concentration_result}
    For RD-Tracking-TMCTS, there exists $T_\epsilon$ such that, 
    for any $T \geq T_\epsilon$, $\mathcal{E}_T(\epsilon)$ implies the following.
    \[ \forall t \in [\sqrt{T}, T] \cap \mathbb{N}, \forall \ell \in \mathcal{L}(\mathcal{T}), \left| \frac{N_\ell(t)}{t} - w^{s_0}_\ell(\bm{\mu}) \right| \leq 2(|\mathcal{L}(\mathcal{T})|-1)\epsilon. \]
\end{lemma}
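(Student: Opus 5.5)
We derive this from Lemma~\ref{lem:DrawProportionConvergence} in its finite-horizon form (Remark~\ref{remark:UpperBoundDiffFromOpt}), applied on the window $[T^{1/4},T]$. The one thing to supply is a bound on the burn-in time $t_1$ that is uniform in the realization, so that $t_1\le\sqrt{T}$ once $T$ is large.

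\textbf{Step 1 (tracking hypothesis on the window).} Fix $T$ and assume $\mathcal{E}_T(\epsilon)$. Then $\hat{\bm{\mu}}(t)\in\mathcal{I}_\epsilon$ for all $t\in[T^{1/4},T]$. By the choice of $\xi(\epsilon)$ via Lemma~\ref{lem:weight_continuity}, this gives $|w^{s_0}_\ell(\hat{\bm{\mu}}(t))-w^{s_0}_\ell(\bm{\mu})|<\epsilon$ for all $\ell$ and all $t$ in the window. This is exactly the restricted assumption of Remark~\ref{remark:UpperBoundDiffFromOpt} with $t_0=\lceil T^{1/4}\rceil$.

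\textbf{Step 2 (rerun the argument with explicit $t_1$).} We repeat the proof of Lemma~\ref{lem:DrawProportionConvergence} from $t'_0=\max\{t_0,\lceil 1/\epsilon\rceil, t_\epsilon\}$. Here $t_\epsilon$ is a deterministic integer with $\sqrt{t+1}-|\mathcal{L}(\mathcal{T})|/2\le t\epsilon$ for all $t\ge t_\epsilon$. The inclusion (\ref{event:E_ltInclusion}) and the induction (\ref{ineq:E_lt}) go through verbatim for $t'_0\le t\le T$, since they only use the tracking hypothesis at times in the window. Using $E_{\ell,t'_0}\le t'_0$, this yields
\[
\max_\ell\left|\frac{N_\ell(t)}{t}-w^{s_0}_\ell(\bm{\mu})\right|\le(|\mathcal{L}(\mathcal{T})|-1)\max\left\{\frac{t'_0}{t},2\epsilon\right\}
\]
for $t'_0\le t\le T$. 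One small correction to the lemma's proof: the recursion should read $E_{\ell,t+1}=E_{\ell,t}+\mathds{1}\{I(t+1)=\ell\}-w^{s_0}_\ell(\bm{\mu})$, not with $\mu_\ell$. The argument is otherwise unchanged.

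\textbf{Step 3 (choose $T_\epsilon$).} The right-hand side is at most $2(|\mathcal{L}(\mathcal{T})|-1)\epsilon$ as soon as $t\ge t'_0/(2\epsilon)$. Since $t'_0\le T^{1/4}+1/\epsilon+t_\epsilon+1$, it suffices that
\[
\frac{T^{1/4}+1/\epsilon+t_\epsilon+1}{2\epsilon}\le\sqrt{T}.
\]
This holds for all $T\ge T_\epsilon$ for a suitable deterministic $T_\epsilon$, because $T^{1/4}/(2\epsilon)=o(\sqrt T)$. For the same reason we can also take $T_\epsilon$ large enough that $\sqrt{T}\ge t'_0$. Then every $t\in[\sqrt T,T]$ satisfies both $t\ge t'_0$ and $t\ge t'_0/(2\epsilon)$, so the bound of Step 2 gives the claim.

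\textbf{Main obstacle.} The delicate point is that Lemma~\ref{lem:DrawProportionConvergence} as stated only asserts the existence of some $t_1$, which could depend on the sample path. The plan above avoids this by tracking $t_1$ explicitly. Its only random ingredient is $E_{\ell,t'_0}$, and this is bounded deterministically by $t'_0$, so $T_\epsilon$ can be chosen independently of the realization.
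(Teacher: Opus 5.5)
Your proposal is correct and follows the paper's route. The paper only asserts that the lemma follows from the definition of $\mathcal{I}_\epsilon$, Lemma~\ref{lem:DrawProportionConvergence} and Remark~\ref{remark:UpperBoundDiffFromOpt}; your realization-independent bound on the burn-in time ($t \ge t'_0/(2\epsilon)$ with $t'_0 = O(T^{1/4})$) is exactly the detail needed to get it below $\sqrt{T}$, and your correction of the recursion to $E_{\ell,t+1}=E_{\ell,t}+\mathds{1}\{I(t+1)=\ell\}-w^{s_0}_\ell(\bm{\mu})$ is right.
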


\begin{proof}[Proof of Theorem~\ref{thm:sample_complexity}]
Let 
\begin{align*}
  g(\bm{\mu}', \bm{w})
  &= \inf_{\bm{\lambda} \in \mathrm{Alt}(\hat{\bm{\mu}}(t))}\sum_{\ell\in \mathcal{L}(\mathcal{T})}w_{\ell}d(\mu'_{\ell},\lambda_{\ell}).
\end{align*}
Then, using function $g$, the GLR statistic $Z_{s_0}(t)$ can be represented as
\begin{align*}
    Z_{s_0}(t) 
    =& \inf_{\bm{\lambda} \in \mathrm{Alt}(\hat{\bm{\mu}}(t))} \sum_{\ell\in \mathcal{L}(\mathcal{T})}N_{\ell}(t)d(\hat{\mu}_{\ell}(t),\lambda_{\ell})\\
    =& t \inf_{\bm{\lambda} \in \mathrm{Alt}(\hat{\bm{\mu}}(t))} \sum_{\ell\in \mathcal{L}(\mathcal{T})}\frac{N_{\ell}(t)}{t}d(\hat{\mu}_{\ell}(t),\lambda_{\ell})\\
    =& t g\left(\hat{\mu}(t),\left(\frac{N_{\ell}(t)}{t}\right)_{\ell\in \mathcal{L}(\mathcal{T})}\right).
\end{align*}
Define $d_{s_0}^{\epsilon}(\bm{\mu})$ as
\[ d_{s_0}^{\epsilon}(\bm{\mu}) := \inf_{\substack{\bm{\mu}'\in \mathcal{I}_{\epsilon}\\ \bm{w}': || \bm{w}' - \bm{w}^{s_0}(\bm{\mu})|| < 2(|\mathcal{L}(T)| - 1)\epsilon}} g(\bm{\mu}', \bm{w}') \]
Then, there exists $T_{\epsilon}$ such that, on the event $\mathcal{E}_T(\epsilon)$ and for $T\ge T_{\epsilon}$,
\[ Z_{s_0}(t) \geq t d_{s_0}^\epsilon(\bm{\mu}). \]
holds for every $\sqrt{T}\le t \le T $ by Lemma~\ref{lem:concentration_result}.

Thus, for $T\ge T_{\epsilon}$ and on $\mathcal{E}_T(\epsilon)$,
    \begin{align*}
        \min(\tau_\delta, T)
        &\leq 
        \sqrt{T} + \sum_{t = \sqrt{T}}^T \mathds{1}\{\tau_\delta > t\}\\
        &\leq
        \sqrt{T} + \sum_{t = \sqrt{T}}^T \mathds{1}\{Z_{s_0}(t) < \beta(t,\delta)\}\\
        &\leq
        \sqrt{T} + \sum_{t = \sqrt{T}}^T \mathds{1}\{td_{s_0}^\epsilon(\bm{\mu}) < \beta(T,\delta)\}\\
        &\leq
        \sqrt{T} + \frac{\beta(T,\delta)}{d_{s_0}^\epsilon(\bm{\mu})}
    \end{align*}
    holds.
    Let $T_0(\delta)$ be
    \[ T_0(\delta) := \inf \left\{ T \in \mathbb{N} : \sqrt{T} + \frac{\beta(T,\delta)}{d_{s_0}^\epsilon(\bm{\mu})} \leq T \right\}.\]
    Then, for every $T \geq \max (T_0(\delta), T_\epsilon)$, 
    we have $\mathcal{E}_T(\epsilon) \subseteq (\tau_\delta \leq T)$, which implies
    \[ \mathbb{P}_{\bm{\mu}}(\tau_\delta > T) \leq \mathbb{P}(\mathcal{E}_T^c) \leq BT\exp(-CT^{1/8}) \]
    using Corollary~\ref{lem:kaufmann19}. 
    Therefore,
    \begin{align*}
     \mathbb{E}[\tau_\delta] =\mbox{}& \sum_{T=0}^{\infty}\mathbb{P}_{\bm{\mu}}(\tau_\delta > T)\\
     \le\mbox{} & \max (T_0(\delta), T_\epsilon)+\sum_{T=\max (T_0(\delta), T_\epsilon)}^{\infty}\mathbb{P}_{\bm{\mu}}(\tau_\delta > T)\\
     \le\mbox{} & T_0(\delta) + T_\epsilon + \sum_{T= 1}^\infty BT\exp(-CT^{1/8}). 
     \end{align*}
     Note that the sum of the last two terms, $T_\epsilon + \sum_{T= 1}^\infty BT\exp(-CT^{1/8})$, is a constant depending on $\epsilon$.
    We now analyze an upper bound on $T_0(\delta)$.
    Let $\eta > 0$ and define constant $C(\eta)$ as
    \[ C(\eta) = \inf \left\{ T \in \mathbb{N} : T - \sqrt{T} \geq T / (1 + \eta) \right\}. \]
    Then,
    \begin{align*}
        T_0(\delta) 
        \leq&
        C(\eta) + \inf \left\{ T \in \mathbb{N} : \frac{\beta(T,\delta)}{d_{s_0}^\epsilon(\bm{\mu})} \leq \frac{T}{(1 + \eta)} \right\}\\
        =&  C(\eta) +\frac{(1+\eta)\ln (1 / \delta)}{d_{s_0}^{\epsilon}(\bm{\mu})}+o\left(\ln (1 / \delta)\right)\tag{by Lemma~\ref{lem:stopping-time-upper-bound}}
    \end{align*}
    holds.
    Thus, we have
    \begin{align*}
         \mathbb{E}[\tau_\delta] \le  \frac{(1+\eta)\ln (1 / \delta)}{d_{s_0}^{\epsilon}(\bm{\mu})}+o\left(\ln (1 / \delta)\right),
    \end{align*}
    which yields, for every $\eta > 0$ and $\epsilon > 0$,
    \[ \limsup_{\delta \to +0} \frac{\mathbb{E}_{\bm{\mu}}[\tau_\delta]}{\log(1 / \delta)} \leq \frac{1+\eta}{ d_{s_0}^{\epsilon}(\bm{\mu})}. \]
    Since 
    \[ \lim_{\epsilon \to +0} d_{s_0}^{\epsilon}(\bm{\mu}) = d_{s_0}(\bm{\mu}) \]
    holds by continuity of $g$ and definition of $\bm{w}^{s_0}$,
    we have 
    \[ \limsup_{\delta \to +0} \frac{\mathbb{E}_{\bm{\mu}}[\tau_\delta]}{\log(1 / \delta)} \leq \frac{1}{d_{s_0}(\bm{\mu})} \]
    by letting $\eta$ and $\epsilon$ go to zero.
\end{proof}

\section{Time Complexity Analysis}\label{append:time_complexity}

\subsection{Heap Data Structure}

Let $\{(x_i, i)\}_{i=1, \ldots, N}$ be a set of $N$ elements, with $x_i \in \mathbb{R}$.
We consider a heap data structure, $\mathrm{IHeap}$, that supports the following operations on this set.
\begin{description}
    \item[heapify($\{(x_i, i)\}_{i=1, \ldots, N}$):]
    Initialize the heap data structure with the given set $\{(i, x_i)\}_{i=1, \ldots, N}$ in $O(N \log N)$ time.
    \item[peek:]
    Returns $(x_{i^*}, i^*)$ with $i^* = \argmin_i x_i$ in $O(1)$ time.
    \item[rewrite($v, i$):]
    Replace the value associated with index $i$, namely $(x_i, i)$, with $(v, i)$ and maintain the heap in $O(\log N)$ time.
\end{description}
This data structure can be implemented with a heap ordered by the values $x_i$. 
By maintaining an auxiliary array $\texttt{pos}[i]$ storing the heap position of each element, a $\mathrm{rewrite}(v, i)$ operation can be performed in $O(\log N)$ time via a constant-time update followed by at most $\log N$ heap adjustments.

Similarly, we consider another heap data structure $\mathrm{DHeap}$.
It supports the same operations as $\mathrm{IHeap}$, except that the $\mathrm{peek}$ operation returns the $\argmax$ element instead of the $\argmin$.

\subsection{Auxiliary Statistics}

Although $d_s(\hat{\bm{\mu}}(t))$ and $Z_s(t)$ are defined as the solution to a subproblem associated with the subtree rooted at $s$, they depend on $a_{s_0}(\hat{\bm{\mu}}(t))$ and are therefore not completely independent of the subtree. As a result, its value may change due to observations made outside the subtree.
To avoid this, we introduce\footnote{The core idea here is to assign a positive (resp. negative) value for `win' (resp. `lose') node so that we can effectively retain information on each node and back-propagate it to its parent node. In this way, it does not require a major update in the data structure even if the estimated answer switches between `win' and `lose'. Note that the way we use positive/negative signs is different from the well-known ``negamax'' method in minimax games \cite{knuth1975alphabeta} that flips the sign every time it ascends the node. Our sign represents the sign of the overall conclusion of win or lose, rather than exploiting the minimax property. Therefore, the data structure here is able to deal with general class of trees where max and min can be arbitrarily interleaved.} $\tilde{d}_s(\hat{\bm{\mu}}(t))$, and $\tilde{Z}_s(t)$ as stated in Recursive Formula~\ref{rf:tilde_stat}.
These variables no longer depend on $a_{s_0}(\hat{\bm{\mu}}(t))$ and we have $\tilde{d}_s(\hat{\bm{\mu}}(t)) = \tilde{d}_s(\hat{\bm{\mu}}(t-1))$ if $I(t) \notin \mathcal{D}(s)$.
This enables efficient value updates. (See Appendix~\ref{append:time_complexity:implement}.)
Additionally, we encode the information of $a_s(\hat{\bm{\mu}}(t))$ as the sign as follows.
This eliminates the need to explicitly compute $V_s(\hat{\bm{\mu}}(t))$ and $a_s(\hat{\bm{\mu}}(t))$.
\begin{align*}
    \begin{cases}
        \tilde{d}_s(\hat{\bm{\mu}}(t)) \geq 0 \text{ and } \tilde{Z}_s(t) \geq 0
        & (a_s(\hat{\bm{\mu}}(t)) = \text{`win'})\\
        \tilde{d}_s(\hat{\bm{\mu}}(t)) < 0 \text{ and } \tilde{Z}_s(t) < 0
        & (a_s(\hat{\bm{\mu}}(t)) = \text{`lose'}).
    \end{cases}
\end{align*}
These statistics correspond to the values obtained by evaluating Recursive Formula~\ref{rf:optimal_weight_win}, \ref{rf:GLR_win}, \ref{rf:optimal_weight_lose}, and \ref{rf:GLR_lose} with $a_s(\hat{\bm{\mu}}(t))$ in place of $a_{s_0}(\hat{\bm{\mu}}(t))$.
\footnote{For recursive computations at a child node $c$, we also use $a_s(\hat{\bm{\mu}}(t))$ rather than $a_c(\hat{\bm{\mu}}(t))$.}
Therefore, in the case with $a_{s_0}(\hat{\bm{\mu}}(t)) = a_s(\hat{\bm{\mu}}(t))$, these statistics match the original value.
\begin{align*}
    \begin{cases}
        d_s(\hat{\bm{\mu}}(t)) = \tilde{d}_s(\hat{\bm{\mu}}(t)) \text{ and } Z_s(t) = \tilde{Z}_s(t)
        & (a_{s_0}(\hat{\bm{\mu}}(t)) = a_s(\hat{\bm{\mu}}(t)) = \text{`win'})\\
        d_s(\hat{\bm{\mu}}(t)) = -\tilde{d}_s(\hat{\bm{\mu}}(t)) \text{ and } Z_s(t) = -\tilde{Z}_s(t)
        & (a_{s_0}(\hat{\bm{\mu}}(t)) = a_s(\hat{\bm{\mu}}(t)) = \text{`lose'})
    \end{cases}
\end{align*}
Since $Z_{s_0}(t) = |\tilde{Z}_{s_0}(t)|$, we can rewrite the stopping rule and the recommendation rule as:
\begin{align*}
    \tau_\delta &= \inf\{ t \in \mathbb{N} : |\tilde{Z}_{s_0}(t)| \geq \beta(t,\delta) \}\\
    a(t) &= \begin{cases}
        \text{`win'} & (\tilde{Z}_{s_0}(t) \geq 0)\\
        \text{`lose'} & (\tilde{Z}_{s_0}(t) < 0).
    \end{cases}
\end{align*}

Similarly, we redefine the $\bm{w}^s(\hat{\bm{\mu}}(t))$ as $\tilde{\bm{w}}^s(\hat{\bm{\mu}}(t))$ to eliminate its dependence on $a_{s_0}(\hat{\bm{\mu}}(t))$, as shown in Recursive Formula~\ref{rf:tilde_stat}.
This also corresponds to the proportion obtained by evaluating Recursive Formula~\ref{rf:optimal_weight_win} and \ref{rf:optimal_weight_lose} with $a_s(\hat{\bm{\mu}}(t))$ in place of $a_{s_0}(\hat{\bm{\mu}}(t))$.
This satisfies 
\[ \bm{w}^s(\hat{\bm{\mu}}(t)) = \tilde{\bm{w}}^s(\hat{\bm{\mu}}(t)) \text{ when } a_{s_0}(\hat{\bm{\mu}}(t)) = a_s(\hat{\bm{\mu}}(t)). \]

\captionsetup[algorithm]{name=Recursive Formula}
\setcounter{algorithm}{4}
\begin{algorithm}[tb]
  \caption{$d_s(\hat{\bm{\mu}}(t))$, $Z_s(t)$ and $\bm{w}^s(\hat{\bm{\mu}}(t))$ redefined in a form independent of $a_{s_0}(\hat{\bm{\mu}}(t))$} \label{rf:tilde_stat}
\begin{minipage}[t]{0.5\textwidth}  
\begin{align}
    &\tilde{d}_s(\bm{\mu}) \nonumber \\
    =&\begin{cases}
      d(\mu_s,\theta) & (s\in \mathcal{L}(T), \mu_s\ge \theta)\\
      -d(\mu_s,\theta) & (s\in \mathcal{L}(T), \mu_s< \theta)\\     
      \displaystyle\max_{c\in \mathcal{C}(s)}\tilde{d}_c(\bm{\mu}) &
      \left( \begin{aligned}
            &L(s)=\text{`MAX'},\\& \tilde{d}_c(\bm{\mu})\ge 0 \text{ for some } c\in \mathcal{C}(s)
        \end{aligned} \right)\\
  \displaystyle\frac{1}{\displaystyle\sum_{c\in \mathcal{C}(s)}1/\tilde{d}_c(\bm{\mu})} & \left( \begin{aligned}
            &L(s)=\text{`MAX'},\\& \tilde{d}_c(\bm{\mu})<0 \text{ for all } c\in \mathcal{C}(s)
  \end{aligned} \right)\\
       \displaystyle\frac{1}{\displaystyle\sum_{c\in \mathcal{C}(s)}1/\tilde{d}_c(\bm{\mu})} & \left( \begin{aligned}
            &L(s)=\text{`MIN'},\\& \tilde{d}_c(\bm{\mu})>0 \text{ for all } c\in \mathcal{C}(s)
       \end{aligned} \right)\\
        \displaystyle\min_{c\in \mathcal{C}(s)}\tilde{d}_c(\bm{\mu}) & \left( \begin{aligned}
            &L(s)=\text{`MIN'},\\& \tilde{d}_c(\bm{\mu})\le 0 \text{ for some } c\in \mathcal{C}(s)
       \end{aligned} \right)
    \end{cases} \label{eq:tilde_d}
\end{align}

\begin{align*}
    &\tilde{Z}_s(t)\\
    =&\begin{cases}
      N_s(t)d(\hat{\mu}_s(t),\theta) & (s\in \mathcal{L}(T), \hat{\mu}_s(t)\ge \theta)\\
      -N_s(t)d(\hat{\mu}_s(t),\theta) & (s\in \mathcal{L}(T), \hat{\mu}_s(t)< \theta)\\     
      \displaystyle\sum_{c\in \mathcal{C}(s),\tilde{Z}_c(t)>0}\tilde{Z}_c(t) &
      \left( \begin{aligned}
            &L(s)=\text{`MAX'},\\& \tilde{Z}_c(t)\ge 0 \text{ for some } c\in \mathcal{C}(s)
        \end{aligned} \right)\\
  \displaystyle\max_{c\in \mathcal{C}(s)}\tilde{Z}_c(t) & \left( \begin{aligned}
            &L(s)=\text{`MAX'},\\& \tilde{Z}_c(t)<0 \text{ for all } c\in \mathcal{C}(s)
  \end{aligned} \right)\\
       \displaystyle\min_{c\in \mathcal{C}(s)}\tilde{Z}_c(t) & \left( \begin{aligned}
            &L(s)=\text{`MIN'},\\& \tilde{Z}_c(t)>0 \text{ for all } c\in \mathcal{C}(s)
       \end{aligned} \right)\\
        \displaystyle\sum_{c\in \mathcal{C}(s),\tilde{Z}_c(t)<0}\tilde{Z}_c(t) & \left( \begin{aligned}
            &L(s)=\text{`MIN'},\\& \tilde{Z}_c(t)\le 0 \text{ for some } c\in \mathcal{C}(s)
       \end{aligned} \right)
    \end{cases}
\end{align*}
\end{minipage}
\begin{minipage}[t]{0.5\textwidth}
  \begin{align*}
      &\tilde{w}^s_{\ell}(\bm{\mu}) \\
    =&\begin{cases}
         1 & (s\in \mathcal{L}(T))\\
          \tilde{w}^{c^+(s)}_\ell(\bm{\mu}) &
          \left( \begin{aligned}
            &L(s)=\text{`MAX'},\\& \tilde{d}_c(\bm{\mu})\ge 0 \text{ for some } c\in \mathcal{C}(s),\\& \ell \in \mathcal{D}(c^+(s))
        \end{aligned}\right)\\
        0 & \left( \begin{aligned}
            &L(s)=\text{`MAX'},\\ & \tilde{d}_c(\bm{\mu})\ge 0 \text{ for some } c\in \mathcal{C}(s),\\ 
            &\ell \in \mathcal{D}(c),\ c \in \mathcal{C}(s) \setminus \{ c^+(s) \}
        \end{aligned}\right)\\
        \displaystyle\frac{\tilde{w}^c_\ell(\bm{\mu}) / \tilde{d}_c(\bm{\mu})}{\displaystyle\sum_{c' \in \mathcal{C}(s)} 1 / \tilde{d}_{c'}(\bm{\mu})}  & \left( \begin{aligned}
            &L(s)=\text{`MAX'},\\& \tilde{d}_c(\bm{\mu})< 0 \text{ for all } c\in \mathcal{C}(s),\\ 
            &\ell \in \mathcal{D}(c),\ c \in \mathcal{C}(s)
        \end{aligned}\right)\\
\displaystyle\frac{\tilde{w}^c_\ell(\bm{\mu}) / \tilde{d}_c(\bm{\mu})}{\displaystyle\sum_{c' \in \mathcal{C}(s)} 1 / \tilde{d}_{c'}(\bm{\mu})}  &
        \left( \begin{aligned}
          &L(s)=\text{`MIN'},\\& \tilde{d}_c(\bm{\mu})> 0 \text{ for all } c\in \mathcal{C}(s),\\
          &\ell \in \mathcal{D}(c),\ c \in \mathcal{C}(s)
        \end{aligned}\right)\\
\tilde{w}^{c^-(s)}_\ell(\bm{\mu}) & \left( \begin{aligned}
            &L(s)=\text{`MIN'},\\& \tilde{d}_c(\bm{\mu})\le 0 \text{ for some } c\in \mathcal{C}(s),\\ 
            &\ell \in \mathcal{D}(c^-(s))
        \end{aligned}\right)\\
        0 & \left( \begin{aligned}
            &L(s)=\text{`MIN'},\\& \tilde{d}_c(\bm{\mu})\le 0 \text{ for some } c\in \mathcal{C}(s),\\
            &\ell \in \mathcal{D}(c),\ c \in \mathcal{C}(s) \setminus \{ c^-(s) \}
        \end{aligned}\right)
    \end{cases}\\
    & \text{ where } c^+(s) = \argmax_{c \in \mathcal{C}(s)} \tilde{d}_c(\bm{\mu}) \text{ and }\\
    & \phantom{\text{ where }} c^-(s) = \argmin_{c \in \mathcal{C}(s)} \tilde{d}_c(\bm{\mu})
    \end{align*}
\end{minipage}
\end{algorithm}

\subsection{Leaf Selection}

First, we consider the forced exploration step (the first case of Eq.~(\ref{eq:sampling_rule_ratio})).
Maintaining an $\mathrm{IHeap}$ with the set $\{(N_\ell(t), \ell)\}_{\ell \in \mathcal{L}(\mathcal{T})}$, we can obtain $\argmin_{\ell \in \mathcal{L}(\mathcal{T})} N_\ell(t)$.

Next, we consider the ratio-based tracking step (the second case of Eq.~(\ref{eq:sampling_rule_ratio})).
Instead of computing $I_s(t)$ and $I(t)$, we introduce an $a_{s_0}(\hat{\bm{\mu}}(t))$-independent subproblem
\[ \tilde{I}_s(t) = \argmax_{\ell \in \mathcal{D}(s) } \tilde{w}^s_\ell(\hat{\bm{\mu}}(t)) / N_\ell(t). \]
Since $\bm{w}^s(\hat{\bm{\mu}}(t)) = \tilde{\bm{w}}^s(\hat{\bm{\mu}}(t))$ when $a_{s_0}(\hat{\bm{\mu}}(t)) = a_s(\hat{\bm{\mu}}(t))$, this satisfies $\tilde{I}_{s_0}(t) = I_{s_0}(t) = I(t)$.

In the remainder of this section, we distinguish four cases based on the node label $L(s)$ and the value of $a_s(\hat{\bm{\mu}}(t))$.

\noindent\textbf{The case with $L(s) = \text{`MAX'}$ and $a_s(\hat{\bm{\mu}}(t)) = \text{`win'}$:}\\
Since $L(s) = \text{`MAX'}$ and $a_s(\hat{\bm{\mu}}(t)) = \text{`win'}$, there exists $c \in \mathcal{C}(s)$ that satisfies $a_c(\hat{\bm{\mu}}(t)) = \text{`win'}$ and $\tilde{d}_c(\hat{\bm{\mu}}(t)) \geq 0$.
From Recursive Formula~\ref{rf:tilde_stat} we have, 
\begin{align*}
    \tilde{w}^s_\ell(\hat{\bm{\mu}}(t)) =
    \begin{cases}
        \tilde{w}^{c^+(s)}_\ell(\hat{\bm{\mu}}(t)) & (\ell \in \mathcal{D}(c^+))\\
        0 & (\ell \notin \mathcal{D}(c^+)),
    \end{cases}
\end{align*}
where $c^+(s) = \argmax_{c \in \mathcal{C}(s)} \tilde{d}_c(\hat{\bm{\mu}}(t))$.
Therefore, we obtain $\tilde{I}_s(t) = \tilde{I}_{c^+(s)}(t)$.

The same discussion also holds in the case with $L(s) = \text{`MIN'}$ and $a_{s_0}(\hat{\bm{\mu}}(t)) = \text{`lose'}$, except that the sign of $\tilde{d}_s(\hat{\bm{\mu}}(t))$ is reversed.

\noindent\textbf{The case with $L(s) = \text{`MAX'}$ and $a_s(\hat{\bm{\mu}}(t)) = \text{`lose'}$:}\\
Since $L(s) = \text{`MAX'}$ and $a_s(\hat{\bm{\mu}}(t)) = \text{`lose'}$, all $c \in \mathcal{C}(s)$ satisfies $a_c(\hat{\bm{\mu}}(t)) = \text{`lose'}$ and $\tilde{d}_c(\hat{\bm{\mu}}(t)) < 0$.
From Recursive Formula~\ref{rf:tilde_stat}, we have
\begin{equation} \label{eq:tilde_weight_MAX_lose}
    \tilde{w}^s_\ell(\hat{\bm{\mu}}(t)) = 
    \displaystyle \tilde{w}^c_\ell(\hat{\bm{\mu}}(t)) \frac{1 / \tilde{d}_c(\bm{\mu})}{\displaystyle\sum_{c' \in \mathcal{C}(s)} 1 / \tilde{d}_{c'}(\hat{\bm{\mu}}(t))}. 
\end{equation}
To solve $\tilde{I}_s(t)$, we split the space into $\mathcal{D}(s) = \bigcup_{c \in \mathcal{C}(s)} \mathcal{D}(c)$ and introduce a new subproblem:
\[ \tilde{I}'_c(t) = \argmax_{\ell \in \mathcal{D}(c) } \tilde{w}^s_\ell(\hat{\bm{\mu}}(t)) / N_\ell(t).\]
By Eq.~(\ref{eq:tilde_weight_MAX_lose}) and ignoring the common coefficient, this subproblem satisfies
\begin{align*}
    \tilde{I}'_c(t) 
    &= \argmax_{\ell \in \mathcal{D}(s)} \displaystyle \tilde{w}^c_\ell(\hat{\bm{\mu}}(t)) \frac{1 / \tilde{d}_c(\bm{\mu})}{\sum_{c' \in \mathcal{C}(s)} 1 / \tilde{d}_{c'}(\hat{\bm{\mu}}(t))} / N_\ell(t)\\
    &= \argmax_{\ell \in \mathcal{D}(s)} \tilde{w}^c_\ell(\hat{\bm{\mu}}(t)) / N_\ell(t)\\
    &= \tilde{I}_c(t).
\end{align*}
This implies that
\[ \tilde{I}_s(t) = \argmax_{\ell \in \{\tilde{I}_c(t)\}_{c \in \mathcal{C}(s)}} \tilde{w}^s_\ell(\hat{\bm{\mu}}(t)) / N_\ell(t).\]

Since $\frac{1 / \tilde{d}_c(\hat{\bm{\mu}}(t))}{\sum_{c' \in \mathcal{C}(s)} 1 / \tilde{d}_{c'}(\hat{\bm{\mu}}(t))} \neq 0$ and $\tilde{I}_c(t) = \argmax_{\ell \in \mathcal{D}(c)} \tilde{w}^c_\ell(\hat{\bm{\mu}}(t)) / N_\ell(t)$, we obtain $\tilde{w}^s_{\tilde{I}_c(t)}(\hat{\bm{\mu}}(t)) > 0$.
By the same argument as Proposition~\ref{prop:weight_val_form}, we derive
\[ \tilde{w}^s_{\tilde{I}_c(t)}(\hat{\bm{\mu}}(t)) = \frac{\tilde{d}_s(\hat{\bm{\mu}}(t))}{\tilde{d}_{\tilde{I}_c(t)}(\hat{\bm{\mu}}(t))}.\]
At this point, $\tilde{d}_s(\hat{\bm{\mu}}(t)) < 0$ since $a_s(\hat{\bm{\mu}}(t)) = \text{`lose'}$.

Therefore, we conclude that
\begin{align*}
    \tilde{I}_s(t) 
    &= \argmax_{\ell \in \{\tilde{I}_c(t)\}_{c \in \mathcal{C}(s)}} \frac{\tilde{d}_s(\hat{\bm{\mu}}(t))}{\tilde{d}_{\tilde{I}_c(t)}(\hat{\bm{\mu}}(t))} / N_\ell(t)\\
    &= \argmin_{\ell \in \{\tilde{I}_c(t)\}_{c \in \mathcal{C}(s)}} \frac{1}{N_\ell(t) \tilde{d}_{\tilde{I}_c(t)}(\hat{\bm{\mu}}(t))}.
\end{align*}

The same discussion also holds in the case with $L(s) = \text{`MIN'}$ and $a_{s_0}(\hat{\bm{\mu}}(t)) = \text{`lose'}$, except that the sign of $\tilde{d}_s(\hat{\bm{\mu}}(t))$ is reversed.

Finally, we can calculate $\tilde{I}_s(t)$ by 
\begin{align*}
    \tilde{I}_s(t) =
    \begin{cases}
        \tilde{I}_{c^+(s)}(t)
        & (L(s) = \text{`MAX'} \text{ and } a_s(\hat{\bm{\mu}}(t)) = \text{`win'})\\
        \argmin_{\ell \in \{I_c(t)\}_{c \in \mathcal{C}(s)} } \frac{1}{N_\ell(t) \tilde{d}(\hat{\bm{\mu}}_\ell(t))}
        & (L(s) = \text{`MAX'} \text{ and } a_s(\hat{\bm{\mu}}(t)) = \text{`lose'})\\
        \argmax_{\ell \in \{I_c(t)\}_{c \in \mathcal{C}(s)} } \frac{1}{N_\ell(t) \tilde{d}(\hat{\bm{\mu}}_\ell(t))}
        & (L(s) = \text{`MIN'} \text{ and } a_s(\hat{\bm{\mu}}(t)) = \text{`win'})\\
        \tilde{I}_{c^-(s)}(t)
        & (L(s) = \text{`MIN'} \text{ and } a_s(\hat{\bm{\mu}}(t)) = \text{`lose'}).
    \end{cases}
\end{align*}

To enable an efficient recomputation of $\tilde{I}_s(t)$, we introduce $\mathrm{RD}_s(t) \in \mathbb{R}^2$ in Recursive Formula~\ref{rf:RD}. 
The second element of $\mathrm{RD}_s(t)$ corresponds to $\tilde{I}_s(t)$ (we denote $\mathrm{RD}_s(t)[1]$).

\captionsetup[algorithm]{name=Recursive Formula}
\begin{algorithm}[tb]
  \caption{Auxiliary Variable for Efficient Computation of $\tilde{I}_s(t)$}\label{rf:RD}
\begin{align*}
    \mathrm{RD}_s(t) 
    =\begin{cases}
      \displaystyle\left(\frac{1}{\tilde{d}_s(\hat{\bm{\mu}}(t))N_s(t)},s\right) & (s\in \mathcal{L}(T),\tilde{d}_s(\hat{\bm{\mu}}(t))\neq 0)\\
          (+\infty,s) & (s\in \mathcal{L}(T),\tilde{d}_s(\hat{\bm{\mu}}(t))= 0)\\
          \mathrm{RD}_{c^+(s)}(t) &
          \left( \begin{aligned}
            &L(s)=\text{`MAX'},\\& \tilde{d}_c(\bm{\mu})\ge 0 \text{ for some } c\in \mathcal{C}(s)
        \end{aligned}\right)\\
        \displaystyle\argmin_{A\in\{\mathrm{RD}_{c}(t) \mid c\in \mathcal{C}(s)\}}A[0]  & \left( \begin{aligned}
            &L(s)=\text{`MAX'},\\& \tilde{d}_c(\bm{\mu})< 0 \text{ for all } c\in \mathcal{C}(s)
        \end{aligned}\right)\\
        \displaystyle\argmax_{A\in\{\mathrm{RD}_{c}(t) \mid c\in \mathcal{C}(s)\}}A[0]  & \left( \begin{aligned}
          &L(s)=\text{`MIN'},\\& \tilde{d}_c(\bm{\mu})> 0 \text{ for all } c\in \mathcal{C}(s)
        \end{aligned}\right)\\
\mathrm{RD}_{c^-(s)}(t) & \left( \begin{aligned}
            &L(s)=\text{`MIN'},\\& \tilde{d}_c(\bm{\mu})\le 0 \text{ for some } c\in \mathcal{C}(s)
        \end{aligned}\right)\\
    \end{cases}
\end{align*}
\end{algorithm}

\subsection{Implementation}\label{append:time_complexity:implement}
Let $D$ be the depth of the tree $\mathcal{T}$. Let $K = \max_{s \in S} |\mathcal{C}(s)|$ be the maximum number of branches.

After eliminating the dependence on $a_{s_0}(\hat{\bm{\mu}}(t))$, all values defined in Recursive Formula~\ref{rf:tilde_stat} and \ref{rf:RD} remain unchanged whenever $I(t) \notin \mathcal{D}(s)$.
Therefore, after each observation, only the values along the path from $I(t)$ to the root $s_0$ need to be updated.
Consequently, the update can be performed with a computational cost proportional to (tree depth) $\times$ (sibling comparisons) per time step.
By appropriately maintaining sibling values in a heap, operations that take an $\argmax$ or $\argmin$ over siblings (e.g., the third case of Eq.~(\ref{eq:tilde_d})) can be implemented in $O(\log |\mathcal{C}(s)|)$ time.
Moreover, operations that require summing sibling values (e.g., the fourth case of Eq.~(\ref{eq:tilde_d})) can be performed in $O(1)$ time by maintaining and updating the aggregate sums (in particular, $\mathrm{SZ}_s$ (``sum of $Z$'') and $\mathrm{RSR}_s$ (``reciprocal sum of the reciprocal'')).
Therefore, we can provide an implementation of Algorithm~\ref{alg:proposed} that runs in $O(D \log K)$ time per step, which we present as Algorithm~\ref{alg:proposed_efficient}.

\captionsetup[algorithm]{name=Algorithm}
\setcounter{algorithm}{1}
\begin{algorithm}[tb]
\caption{An efficient implementation for RD-Tracking-TMCTS} \label{alg:proposed_efficient}
\begin{algorithmic}[1]
  \Require $\mathcal{T}$: tree, $\theta$: threshold, $\delta$: confidence parameter
  \Ensure Return $a_{s_0}(\bm{\mu}) \in \{\text{`win'}, \text{`lose'}\}$ correctly with prob. at least $1 - \delta$
\State $t\gets 1$
  \For{each $\ell\in \mathcal{L}(\mathcal{T})$}
  \State Draw $\ell$ and observe reward $r(t)$.
  \State $N_{\ell}\gets 1$, $\hat{\mu}_{\ell}\gets r(t)$, $t\gets t+1$
  \EndFor
  \State $(\tilde{d}_{s_0},\tilde{Z}_{s_0},\mathrm{RD}_{s_0})\gets$ initialization($s_0$) $\quad$ (Algorithm~\ref{alg:initialization})
  \State $\mathrm{IHeap}\text{-}N.\mathrm{heapify}(\{(N_{\ell},\ell)\})$
\Loop
\If{$\mathrm{IHeap}\text{-}N.\mathrm{peek}[0]<\sqrt{t}-|\mathcal{L}(\mathcal{T})|/2$}
\State $I(t)\gets \mathrm{IHeap}\text{-}N.\mathrm{peek}[1]$
\Else
\State $I(t)\gets \mathrm{RD}_{s_0}[1]$
\EndIf
    \State Observe reward $r(t)$ from leaf $I(t)$
    \State update\_estimation($I(t), r(t)$) $\quad$ (Algorithm~\ref{alg:update_estimation})
    \If{$|\tilde{Z}_{s_0}(t)|\ge\beta(t,\delta)$} 
      \State \textbf{Return} $\begin{cases}\text{`win'} & (\tilde{Z}_{s_0}(t)\ge 0)\\ \text{`lose'} & (\tilde{Z}_{s_0}(t)<0)\end{cases}$
      \EndIf
    \State $t\gets t+1$  
  \EndLoop
\end{algorithmic}
\end{algorithm}

\begin{algorithm}[tb]
\caption{initialization} \label{alg:initialization}
\begin{algorithmic}[1]
  \Require $s$: node
  \If{$s\in \mathcal{L}(\mathcal{T})$}
  \State $\tilde{d}_s\gets \begin{cases} d(\hat{\mu}_s(t),\theta) & (\hat{\mu}_s(t)\ge \theta)\\ -d(\hat{\mu}_s(t),\theta) & (\hat{\mu}_s(t)<\theta)\end{cases}$
  \State $\tilde{Z}_s\gets N_s(t)\tilde{d}_s$
  \State $\mathrm{RD}_s\gets \left(1/\tilde{d}_sN_s,s\right)$
  \State \textbf{Return} $(\tilde{d}_s,\tilde{Z}_s,\mathrm{RD}_s)$
  \Else
  \State $i\gets 0$, $\mathrm{A}_{\tilde{d}},\mathrm{A}_{\tilde{Z}},\mathrm{A}_{\mathrm{RD}}\gets [\ ]$
  \For{$c\in \mathcal{C}(s)$}
  \State $(\tilde{d}_c,\tilde{Z}_c,\mathrm{RD}_c)\gets$ initialization($c$)
  \State $\mathrm{A}_{\tilde{d}}[i]\gets (\tilde{d}_c,c)$
  \State $\mathrm{A}_{\tilde{Z}}[i]\gets (\tilde{Z}_c,c)$
  \State $\mathrm{A}_{\mathrm{RD}}[i]\gets \mathrm{RD}_c$
  \State $i\gets i+1$
  \EndFor
  \If{$L(s)=\text{`MAX'}$}
  \State $\mathrm{RSR}_s\gets 1/\sum_{c\in \mathcal{C}(s),\tilde{d}_c<0}(1/\tilde{d}_c)$
  \State $\mathrm{SZ}_s\gets \sum_{c\in \mathcal{C}(s),\tilde{Z}_c\ge 0}\tilde{Z}_c$
  \State $\mathrm{DHeap}\text{-}\tilde{d}_s.\mathrm{heapify}(\mathrm{A}_{\tilde{d}})$
  \State $\mathrm{DHeap}\text{-}\tilde{Z}_s.\mathrm{heapify}(\mathrm{A}_{\tilde{Z}})$
  \State $\mathrm{IHeap}\text{-}\mathrm{RD}_s.\mathrm{heapify}(\mathrm{A}_{\mathrm{RD}})$
  \If{$\mathrm{DHeap}\text{-}\tilde{d}_s.\mathrm{peek}[0]\ge 0$}
  \State $\tilde{d}_s\gets \mathrm{DHeap}\text{-}\tilde{d}_s.\mathrm{peek}[0]$
  \State $\tilde{Z}_s\gets \mathrm{SZ}_s$
    \State $\mathrm{RD}_s\gets \mathrm{RD}_{\mathrm{DHeap}\text{-}\tilde{d}_s.\mathrm{peek}[1]}$
  \Else
  \State $\tilde{d}_s\gets \mathrm{RSR}_s$
  \State $\tilde{Z}_s\gets \mathrm{DHeap}\text{-}\tilde{Z}_s.\mathrm{peek}[0]$
  \State $\mathrm{RD}_s\gets \mathrm{IHeap}\text{-}\mathrm{RD}_s.\mathrm{peek}$
  \EndIf
  \Else
  \State $\mathrm{RSR}_s\gets 1/\sum_{c\in \mathcal{C}(s),\tilde{d}_c>0}(1/\tilde{d}_c)$
  \State $\mathrm{SZ}_s\gets \sum_{c\in \mathcal{C}(s),\tilde{Z}_c< 0}\tilde{Z}_c$
  \State $\mathrm{IHeap}\text{-}\tilde{d}_s.\mathrm{heapify}(\mathrm{A}_{\tilde{d}})$
  \State $\mathrm{IHeap}\text{-}\tilde{Z}_s.\mathrm{heapify}(\mathrm{A}_{\tilde{Z}})$
  \State $\mathrm{DHeap}\text{-}\mathrm{RD}_s.\mathrm{heapify}(\mathrm{A}_{\mathrm{RD}})$
  \If{$\mathrm{IHeap}\text{-}\tilde{d}_s.\mathrm{peek}[0]< 0$}
  \State $\tilde{d}_s\gets \mathrm{IHeap}\text{-}\tilde{d}_s.\mathrm{peek}[0]$
  \State $\tilde{Z}_s\gets \mathrm{SZ}_s$
    \State $\mathrm{RD}_s\gets \mathrm{RD}_{\mathrm{IHeap}\text{-}\tilde{d}_s.\mathrm{peek}[1]}$
  \Else
  \State $\tilde{d}_s\gets \mathrm{RSR}_s$
  \State $\tilde{Z}_s\gets \mathrm{IHeap}\text{-}\tilde{Z}_s.\mathrm{peek}[0]$
  \State $\mathrm{RD}_s\gets \mathrm{DHeap}\text{-}\mathrm{RD}_s.\mathrm{peek}$
  \EndIf
  \EndIf
  \EndIf
    \State \textbf{Return} $(\tilde{d}_s,\tilde{Z}_s,\mathrm{RD}_s)$
\end{algorithmic}
\end{algorithm}

\begin{algorithm}[tb]
\caption{update\_estimation} \label{alg:update_estimation}
\begin{minipage}[t]{0.5\textwidth}
\begin{algorithmic}[1]
  \Require $\ell$: leaf, $r$: reward
  \Ensure update $\hat{\mu}_{\ell}$, $N_{\ell}$, heaps correctly
  \State $\mathrm{prev}\text{-}\tilde{d}_c\gets \tilde{d}_{\ell}$, $\mathrm{prev}\text{-}\tilde{Z}_c\gets \tilde{Z}_{\ell}$
  \State $\displaystyle\hat{\mu}_{\ell}\gets \frac{\hat{\mu}_{\ell}N_{\ell}+r}{N_{\ell}+1}$, $N_{\ell}\gets N_{\ell}+1$
  \State $\tilde{d}_{\ell}\gets \begin{cases}d(\hat{\mu}_{\ell},\theta) & (\hat{\mu}_{\ell}\ge \theta)\\-d(\hat{\mu}_{\ell},\theta) &
    (\hat{\mu}_{\ell}< \theta) \end{cases}$
  \State $\tilde{Z}_{\ell} \gets N_{\ell}\tilde{d}_{\ell}$, $\mathrm{RD}_{\ell}\gets \left(1/\tilde{Z}_{\ell},\ell\right)$
  \State $c\gets \ell$, $s\gets$ parent node of $c$
\end{algorithmic}
\end{minipage}
\begin{minipage}[t]{0.5\textwidth}
  \begin{algorithmic}[1]
\setcounter{ALG@line}{5}
\Repeat
  \If{$L(s)=\text{`MAX'}$}
  \If{$\mathrm{prev}\text{-}\tilde{d}_c<0$}
  \State $\mathrm{RSR}_s\gets 1/(1/\mathrm{RSR}_s-1/\mathrm{prev}\text{-}\tilde{d}_c)$
  \Else
\State  $\mathrm{SZ}_s\gets \mathrm{SZ}_s-\mathrm{prev}\text{-}\tilde{Z}_c$
  \EndIf
  \If{$\tilde{d}_c<0$}
  \State $\mathrm{RSR}_s\gets 1/(1/\mathrm{RSR}_s+1/\tilde{d}_c)$
  \Else
  \State $\mathrm{SZ}_s\gets \mathrm{SZ}_s+\tilde{Z}_c$
  \EndIf
  \State $\mathrm{prev}\text{-}\tilde{d}_c\gets \tilde{d}_s$, $\mathrm{prev}\text{-}\tilde{Z}_c\gets \tilde{Z}_s$
  \State $\mathrm{DHeap}\text{-}\tilde{d}_s.\mathrm{rewrite}(\tilde{d}_c,c)$
  \State $\mathrm{DHeap}\text{-}\tilde{Z}_s.\mathrm{rewrite}(\tilde{Z}_c,c)$
  \State $\mathrm{IHeap}\text{-}\mathrm{RD}_s.\mathrm{rewrite}(\mathrm{RD}_c,c)$
  \If{$\mathrm{DHeap}\text{-}\tilde{d}_s.\mathrm{peek}[0]\ge 0$}
  \State $\tilde{d}_s\gets \mathrm{DHeap}\text{-}\tilde{d}_s.\mathrm{peek}[0]$
  \State $\tilde{Z}_s\gets \mathrm{SZ}_s$
  \State $\mathrm{RD}_s\gets \mathrm{RD}_{\mathrm{DHeap}\text{-}\tilde{d}_s.\mathrm{peek}[1]}$
  \Else
  \State $\tilde{d}_s\gets \mathrm{RSR}_s$
  \State $\tilde{Z}_s\gets \mathrm{DHeap}\text{-}\tilde{Z}_s.\mathrm{peek}[0]$
  \State $\mathrm{RD}_s\gets \mathrm{IHeap}\text{-}\mathrm{RD}_s.\mathrm{peek}$
  \EndIf
  \Else
  \If{$\mathrm{prev}\text{-}\tilde{d}_c>0$}
  \State $\mathrm{RSR}_s\gets 1/(1/\mathrm{RSR}_s-1/\mathrm{prev}\text{-}\tilde{d}_c)$
  \Else
  \State $\mathrm{SZ}_s\gets \mathrm{SZ}_s-\mathrm{prev}\text{-}\tilde{Z}_c$
  \EndIf
  \If{$\tilde{d}_c>0$}
  \State $\mathrm{RSR}_s\gets 1/(1/\mathrm{RSR}_s+1/\tilde{d}_c)$
  \Else
\State  $\mathrm{SZ}_s\gets \mathrm{SZ}_s+\tilde{Z}_s$
  \EndIf
  \State $\mathrm{prev}\text{-}\tilde{d}_c\gets \tilde{d}_s$, $\mathrm{prev}\text{-}\tilde{Z}_c\gets \tilde{Z}_s$
  \State $\mathrm{IHeap}\text{-}\tilde{d}_s.\mathrm{rewrite}(\tilde{d}_c,c)$
  \State $\mathrm{IHeap}\text{-}\tilde{Z}_s.\mathrm{rewrite}(\tilde{Z}_c,c)$
    \State $\mathrm{DHeap}\text{-}\mathrm{RD}_s.\mathrm{rewrite}(\mathrm{RD}_c,c)$
  \If{$\mathrm{IHeap}\text{-}\tilde{d}_s.\mathrm{peek}[0]< 0$}
  \State $\tilde{d}_s\gets \mathrm{IHeap}\text{-}\tilde{d}_s.\mathrm{peek}[0]$
  \State $\tilde{Z}_s\gets \mathrm{SZ}_s$
  \State $\mathrm{RD}_s\gets \mathrm{RD}_{\mathrm{IHeap}\text{-}\tilde{d}_s.\mathrm{peek}[1]}$  
  \Else
  \State $\tilde{d}_s\gets \mathrm{RSR}_s$
  \State $\tilde{Z}_s\gets \mathrm{IHeap}\text{-}\tilde{Z}_s.\mathrm{peek}[0]$
    \State $\mathrm{RD}_s\gets \mathrm{DHeap}\text{-}\mathrm{RD}_s.\mathrm{peek}$
  \EndIf
  \EndIf
\State $c\gets s$, $s\gets$ parent node of $c$
\Until{$s=$NULL}
\end{algorithmic}
\end{minipage}
\end{algorithm}

\section{Proofs for Good Action Identification Problem}\label{append:GAI}

\subsection{$\delta$-correctness}

Here, we state the $\delta$-correctness of the algorithm for the good action identification problem.
\begin{theorem}
    The algorithm obtained by replacing the stopping rule of Algorithm~\ref{alg:proposed} (line~\ref{line:stopping_condition}) with $Z^\mathrm{GAI}_{s_0}(t) \geq \beta(t,\delta)$ satisfies 
    \[ \mathbb{P}(\tau_\delta < \infty, \hat{a}^\mathrm{GAI}(\tau_\delta) \text{ is not a correct answer}) \leq \delta\]
    with the sequence of thresholds $(\beta(t,\delta))_{t \in \mathbb{N}}$ defined by Eq.~(\ref{eq:betadef}).
\end{theorem}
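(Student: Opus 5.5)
The plan is to reduce the error event of the modified algorithm to the same deviation event that underlies Corollary~\ref{cor:kaufmann_prop15}:
\[
\mathcal{B}=\Bigl\{\exists t\in\mathbb{N}: \sum_{\ell\in\mathcal{L}(\mathcal{T})}N_\ell(t)\,d(\hat{\mu}_\ell(t),\mu_\ell)\ge \beta(t,\delta)\Bigr\}.
\]
Proposition~15 of \citet{kaufmann2021}, with $\beta(t,\delta)$ as in Eq.~(\ref{eq:betadef}), gives $\mathbb{P}(\mathcal{B})\le\delta$; this is exactly the inequality from which Corollary~\ref{cor:kaufmann_prop15} is derived. It therefore suffices to show
\[
\{\tau_\delta<\infty,\ \hat{a}^{\mathrm{GAI}}(\tau_\delta)\text{ is not a correct answer}\}\subseteq\mathcal{B}.
\]
Throughout I read $Z^{\mathrm{GAI}}_c(t)$ in the output rule as $Z_c(t)$. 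I use two facts: Theorem~\ref{thm:GLR}, which identifies each $Z_s(t)$ with $\inf_{\bm{\lambda}\in\mathrm{Alt}_s(\hat{\bm{\mu}}(t))}\sum_{\ell\in\mathcal{D}(s)}N_\ell(t)d(\hat{\mu}_\ell(t),\lambda_\ell)$, and the fact that $\beta(t,\delta)>0$.

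Fix an outcome with $\tau:=\tau_\delta<\infty$ and a wrong output, and split according to the branch of $Z^{\mathrm{GAI}}_{s_0}$.

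\emph{Case $V_{s_0}(\hat{\bm{\mu}}(\tau))<\theta$.} The stopping statistic is $Z_{s_0}(\tau)$ and the output is ``no good action''. This answer is wrong only if $A_{s_0}(\bm{\mu})\neq\emptyset$, i.e.\ $a_{s_0}(\bm{\mu})=$`win'$\neq a_{s_0}(\hat{\bm{\mu}}(\tau))$. Then $\bm{\mu}\in\mathrm{Alt}(\hat{\bm{\mu}}(\tau))$, and by Theorem~\ref{thm:GLR}
\[
\beta(\tau,\delta)\le Z_{s_0}(\tau)\le\sum_\ell N_\ell(\tau)\,d(\hat{\mu}_\ell(\tau),\mu_\ell),
\]
so the outcome lies in $\mathcal{B}$.

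\emph{Case $V_{s_0}(\hat{\bm{\mu}}(\tau))\ge\theta$.} Here $a_{s_0}(\hat{\bm{\mu}}(\tau))=$`win' and the $Z_c$ are computed by Recursive Formula~\ref{rf:GLR_win}. The output is $\hat{c}=\argmax_{c\in\mathcal{C}(s_0)}Z_c(\tau)$, and the stopping condition gives $Z_{\hat{c}}(\tau)=\max_c Z_c(\tau)\ge\beta(\tau,\delta)>0$.

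\begin{itemize}
\item $Z_{\hat c}(\tau)>0$ forces $\hat{\bm{\mu}}_{\mathcal{D}(\hat c)}(\tau)\notin\mathrm{Alt}_{\hat c}(\hat{\bm{\mu}}(\tau))$; otherwise the infimum would be attained at $\bm{\lambda}=\hat{\bm{\mu}}$ with value $0$. Hence $a_{\hat c}(\hat{\bm{\mu}}(\tau))=$`win', so $A_{s_0}(\hat{\bm{\mu}}(\tau))\neq\emptyset$ and the algorithm indeed outputs $\hat c$.
\item The output is wrong iff $\hat c\notin A_{s_0}(\bm{\mu})$, i.e.\ $a_{\hat c}(\bm{\mu})=$`lose'. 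This means $\bm{\mu}_{\mathcal{D}(\hat c)}\in\mathrm{Alt}_{\hat c}(\hat{\bm{\mu}}(\tau))$.
\item Therefore
\[
\beta(\tau,\delta)\le Z_{\hat c}(\tau)\le\sum_{\ell\in\mathcal{D}(\hat c)}N_\ell(\tau)\,d(\hat{\mu}_\ell(\tau),\mu_\ell)\le\sum_{\ell\in\mathcal{L}(\mathcal{T})}N_\ell(\tau)\,d(\hat{\mu}_\ell(\tau),\mu_\ell),
\]
where the last step uses nonnegativity of $d$. Again the outcome lies in $\mathcal{B}$.
\end{itemize}

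Combining both cases gives the required inclusion, and hence the claimed bound of $\delta$.

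The main obstacle is checking that the bound $\mathbb{P}(\mathcal{B})\le\delta$, rather than only its Corollary~\ref{cor:kaufmann_prop15} form, is what Proposition~15 of \citet{kaufmann2021} delivers with this $\beta$. That proposition is a uniform-in-time deviation bound for $\sum_\ell N_\ell(t)d(\hat{\mu}_\ell(t),\mu_\ell)$ itself, so it should hold regardless of which alternative set the stopping rule uses. A second detail needs care: the sign argument showing $Z_{\hat c}>0$ implies $a_{\hat c}(\hat{\bm{\mu}})=$`win', including the boundary case $V_{\hat c}(\hat{\bm{\mu}})=\theta$ at the leaves. I would handle it directly from Recursive Formula~\ref{rf:GLR_win}, using the ``otherwise'' branch that sets $Z=0$.
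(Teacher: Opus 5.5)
Your proof is correct and is essentially the paper's argument made explicit. The paper rewrites $Z^{\mathrm{GAI}}_{s_0}(t)$ as the GLR against the instances on which $\hat{a}^{\mathrm{GAI}}(t)$ is wrong (via a likelihood factorization over the root's subtrees) and then defers to the argument of Proposition~15 of \citet{kaufmann2021}, which is exactly your inclusion of the error event in $\mathcal{B}$; $\mathbb{P}(\mathcal{B})\le\delta$ is the uniform deviation inequality that proposition rests on, so your ``main obstacle'' is not one, and you replace the factorization identity with the simpler bound $Z_{\hat c}(\tau)\le\sum_{\ell\in\mathcal{D}(\hat c)}N_\ell(\tau)d(\hat{\mu}_\ell(\tau),\mu_\ell)$ from Theorem~\ref{thm:GLR}, while your first bullet is unnecessary since $\mathrm{Alt}_{\hat c}(\hat{\bm{\mu}}(\tau))$ is defined relative to $a_{s_0}(\hat{\bm{\mu}}(\tau))=$`win' and a MAX root with $V_{s_0}(\hat{\bm{\mu}}(\tau))\ge\theta$ already gives $A_{s_0}(\hat{\bm{\mu}}(\tau))\neq\emptyset$.
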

\begin{proof}
    We define the set of mean vectors of reward distributions for which the answer $\hat{a}^\mathrm{GAI}(t)$ is incorrect as
    \begin{align*}
        \mathrm{Alt}^\mathrm{GAI}(t) = 
        \begin{cases}
            \{\bm{\lambda}=(\lambda_{\ell})_{\ell\in \mathcal{L}(\mathcal{T})} \in X^{|\mathcal{L}(\mathcal{T})|} \mid V_{\hat{a}^\mathrm{GAI}(t)}(\bm{\lambda}) < \theta \}
            & (\hat{a}^\mathrm{GAI}(t) \neq \text{``no good action"}),\\
            \{\bm{\lambda}=(\lambda_{\ell})_{\ell\in \mathcal{L}(\mathcal{T})} \in X^{|\mathcal{L}(\mathcal{T})|} \mid V_{s_0}(\bm{\lambda}) \geq \theta \}
            & (\hat{a}^\mathrm{GAI}(t) = \text{``no good action"}).
        \end{cases}
    \end{align*}
    Here, the GLR statistics become
    \begin{align*}  
      Z^\mathrm{GAI}_{s_0}(t)
      =&
      \log \frac{\sup_{\bm{\lambda} \in X^{|\mathcal{L}(\mathcal{T})|}} \Lambda(\bm{\lambda} \mid R_{s_0}(t))}{\sup_{\bm{\lambda} \in \mathrm{Alt}^\mathrm{GAI}(t)} \Lambda(\bm{\lambda} \mid R_{s_0}(t))}.
    \end{align*}
    As in Corollary~\ref{cor:kaufmann_prop15}, by following the argument of Proposition~15 in \citet{kaufmann2021}, we can verify the $\delta$-correctness of the stopping rule using $Z^\mathrm{GAI}_{s_0}(t)$ of this form.

    We next show that $Z^\mathrm{GAI}_{s_0}(t)$ can be expressed as Eq.~(\ref{eq:GLR_GAI}).
    First, we consider the case with $a_{s_0}(\hat{\bm{\mu}}(t)) = \text{`lose'}$.
    This implies $\hat{a}^\mathrm{GAI}(t) = \text{``no good action"}$, and we obtain $\mathrm{Alt}(\hat{\bm{\mu}}(t)) = \mathrm{Alt}^\mathrm{GAI}(t)$ and 
    \begin{align*}  
        Z^\mathrm{GAI}_{s_0}(t)
        &=
        \log \frac{\sup_{\bm{\lambda} \in X^{|\mathcal{L}(\mathcal{T})|}} \Lambda(\bm{\lambda} \mid R_{s_0}(t))}{\sup_{\bm{\lambda} \in \mathrm{Alt}^\mathrm{GAI}(t)} \Lambda(\bm{\lambda} \mid R_{s_0}(t))}\\
        &=
        \log \frac{\sup_{\bm{\lambda} \in X^{|\mathcal{L}(\mathcal{T})|}} \Lambda(\bm{\lambda} \mid R_{s_0}(t))}{\sup_{\bm{\lambda} \in \mathrm{Alt}(\hat{\bm{\mu}}(t))} \Lambda(\bm{\lambda} \mid R_{s_0}(t))}
        = Z_{s_0}(t).
    \end{align*}
    
    Next, we consider the case with $a_{s_0}(\hat{\bm{\mu}}(t)) = \text{`win'}$.
    This implies $\hat{a}^\mathrm{GAI}(t) \neq \text{``no good action"}$ and 
    \begin{align*}
        \mathrm{Alt}^\mathrm{GAI}(t)
        &= 
        \{\bm{\lambda}=(\lambda_{\ell})_{\ell\in \mathcal{L}(\mathcal{T})} \in X^{|\mathcal{L}(\mathcal{T})|} \mid V_{a^\mathrm{GAI}(t)}(\bm{\lambda}) < \theta \}\\
        &=
        \{\bm{\lambda}=(\lambda_{\ell})_{\ell\in \mathcal{L}(\mathcal{T})} \in X^{|\mathcal{L}(\mathcal{T})|} \mid (\lambda_{\ell'})_{\ell' \in \mathcal{D}(a^\mathrm{GAI}(t))} \in \mathrm{Alt}_{a^\mathrm{GAI}(t)}(\hat{\bm{\mu}}(t)) \}.
    \end{align*}
    From the definition of the likelihood, $\Lambda(\bm{\lambda} \mid R_{s_0}(t)) = \prod_{c \in \mathcal{C}(s_0)} \Lambda((\lambda_\ell)_{\ell \in \mathcal{C}(c)} \mid R_c(t))$ holds.
    Hence, we obtain
    \begin{align*}  
        Z^\mathrm{GAI}_{s_0}(t)
        &=
        \log \frac{\sup_{\bm{\lambda} \in X^{|\mathcal{L}(\mathcal{T})|}} \Lambda(\bm{\lambda} \mid R_{s_0}(t))}{\sup_{\bm{\lambda} \in \mathrm{Alt}^\mathrm{GAI}(t)} \Lambda(\bm{\lambda} \mid R_{s_0}(t))}\\
        &=
        \log \frac{\sup_{\bm{\lambda} \in X^{|\mathcal{L}(\mathcal{T})|}} \prod_{c \in \mathcal{C}(s_0)} \Lambda((\lambda_\ell)_{\ell \in \mathcal{C}(c)} \mid R_c(t))}{\sup_{\bm{\lambda} \in \mathrm{Alt}^\mathrm{GAI}(t)} \prod_{c \in \mathcal{C}(s_0)} \Lambda((\lambda_\ell)_{\ell \in \mathcal{C}(c)} \mid R_c(t))}\\
        &=
        \log \left(
        \begin{multlined}
            \frac{\prod_{c \in \mathcal{C}(s_0)} \sup_{(\lambda_\ell)_{\ell \in \mathcal{D}(c)} \in X^{|\mathcal{D}(c)|}} \Lambda((\lambda_\ell)_{\ell \in \mathcal{C}(c)} \mid R_c(t))}{\prod_{c \in \mathcal{C}(s_0) \setminus \{ \hat{a}^\mathrm{GAI}(t) \}} \sup_{(\lambda_\ell)_{\ell \in \mathcal{D}(c)} \in X^{|\mathcal{D}(c)|}} \Lambda((\lambda_\ell)_{\ell \in \mathcal{C}(c)} \mid R_c(t))}\\
            \times \frac{1}{\sup_{(\lambda_\ell)_{\ell \in \mathcal{D}(\hat{a}^\mathrm{GAI}(t))} \in \mathrm{Alt}_{\hat{a}^\mathrm{GAI}(t)}(\hat{\bm{\mu}}(t))} \Lambda((\lambda_\ell)_{\ell \in \mathcal{C}(\hat{a}^\mathrm{GAI}(t))} \mid R_{\hat{a}^\mathrm{GAI}(t)}(t))}
        \end{multlined}
        \right)\\
        &=
        \log 
        \frac{\sup_{(\lambda_\ell)_{\ell \in \mathcal{D}(\hat{a}^\mathrm{GAI}(t))} \in X^{|\mathcal{D}(\hat{a}^\mathrm{GAI}(t))|}} \Lambda((\lambda_\ell)_{\ell \in \mathcal{C}(\hat{a}^\mathrm{GAI}(t))} \mid R_{\hat{a}^\mathrm{GAI}(t)}(t))}{\sup_{(\lambda_\ell)_{\ell \in \mathcal{D}(\hat{a}^\mathrm{GAI}(t))} \in \mathrm{Alt}_{\hat{a}^\mathrm{GAI}(t)}(\hat{\bm{\mu}}(t))} \Lambda((\lambda_\ell)_{\ell \in \mathcal{C}(\hat{a}^\mathrm{GAI}(t))} \mid R_{\hat{a}^\mathrm{GAI}(t)}(t))}\\
        &= 
        Z_{\hat{a}^\mathrm{GAI}(t)}(t)\\
        &=
        \max_{c \in \mathcal{C}(s_0)} Z_c(t). \qedhere
    \end{align*}
\end{proof}

\subsection{Proof of Theorem~\ref{thm:sample_complexity_GAI}}

\begin{proof}
    First, we consider the case with $a_{s_0}(\hat{\bm{\mu}}(t)) = \text{`lose'}$.
    From Eq.~(\ref{eq:GLR_GAI}), we have $Z^\mathrm{GAI}_{s_0}(t) = Z_{s_0}(t)$.
    Therefore, by following the same steps as in Theorem~\ref{thm:sample_complexity}, we can verify that
    \[Z^\mathrm{GAI}_{s_0}(t) \geq t d_{s_0}^\epsilon(\bm{\mu}) \]
    and 
    \[\limsup_{\delta \to +0} \frac{ \mathbb{E}[\tau_\delta] }{\log(1 / \delta)} \leq
    \frac{1}{d_{s_0}(\bm{\mu})}\]
    are guaranteed to hold.

    Next, we consider the case with $a_{s_0}(\hat{\bm{\mu}}(t)) = \text{`win'}$.
    From Eq.~(\ref{eq:GLR_GAI}), we have $Z^\mathrm{GAI}_{s_0}(t) = \max_{c \in \mathcal{C}(s)} Z_c(t)$.
    This implies that
    \begin{align*}
        Z^\mathrm{GAI}_{s_0} 
        &= \max_{c \in \mathcal{C}(s)} Z_c(t)\\
        &\geq \max_{c \in \mathcal{C}(s)} t d^\epsilon_{c}(\bm{\mu}),
    \end{align*}
    where
    \begin{align*}
        d^\epsilon_{c}(\bm{\mu}) := \inf_{\substack{\bm{\mu}'\in \mathcal{I}_{\epsilon}\\ \bm{w}': || \bm{w}' - \bm{w}^{s_0}(\bm{\mu})|| < 2(|\mathcal{L}(T)| - 1)\epsilon}} \inf_{\bm{\lambda} \in \mathrm{Alt}(\hat{\bm{\mu}}(t))} \sum_{\ell\in \mathcal{D}(c)}\frac{N_{\ell}(t)}{t}d(\hat{\mu}_{\ell}(t),\lambda_{\ell}).
    \end{align*}

    By following the same steps as in Theorem~\ref{thm:sample_complexity}, we obtain
    \[ \limsup_{\delta \to +0} \frac{\mathbb{E}_{\bm{\mu}}[\tau_\delta]}{\log(1 / \delta)} \leq \frac{1+\eta}{ \max_{c \in \mathcal{C}(s)} d^\epsilon_{c}(\bm{\mu})} \]
    for every $\eta > 0$ and $\epsilon > 0$.

    Since $\lim_{\epsilon \to +0} d^\epsilon_{c}(\bm{\mu}) = d_c(\bm{\mu})$ and $d_{s_0}(\bm{\mu}) = \max_{c \in \mathcal{C}(s)} d_c(\bm{\mu})$, we conclude
    \[\limsup_{\delta \to +0} \frac{ \mathbb{E}[\tau_\delta] }{\log(1 / \delta)} \leq
    \frac{1}{d_{s_0}(\bm{\mu})}. \qedhere\]
\end{proof}

\end{document}